\def\gD{{\mathcal{D}}}
\def\eqref#1{equation~\ref{#1}}
\def\1{\bm{1}}
\newcommand{\bb}[1]{\mathbb{#1}}
\newcommand{\statespace}{\mathcal{S}}
\newcommand{\actionspace}{\mathcal{A}}
\newcommand{\mfpg}{^{MFPG}}
\newcommand{\rwbound}{U_R}
\newcommand{\pitheta}{\pi_\theta}
\newcommand{\liptheta}{L_{\Theta}}
\newcommand{\scorebd}{B_\Theta}
\newcommand{\scoreFunctionparams}[1]{\nabla_{\policyparams}\log\policy_{#1}}
\newcommand{\tk}{\policyparams_k}
\newcommand{\tkone}{\policyparams_{k+1}}
\newcommand{\mfpggradestimator}{F\mfpg}
\newcommand{\higradestimator}{F^h}
\newcommand{\stepsizek}{\alpha_k}
\newcommand{\discountfactor}{\gamma}
\newcommand{\pglip}{L_\horizon}
\DeclareRobustCommand\onedot{\futurelet\@let@token\@onedot}
\def\@onedot{\ifx\@let@token.\else.\null\fi\xspace}
\def\eg{e.g\onedot}
\def\ie{i.e\onedot}
\newacronym{em}{EM}{expectation maximization}
\newacronym{ppo}{PPO}{proximal policy optimization}
\newacronym{drl}{DRL}{deep reinforcement learning}
\newacronym{il}{IL}{imitation learning}
\newacronym{rl}{RL}{reinforcement learning}
\newacronym{cv}{CV}{control variate}
\newacronym{mfpg}{MFPG}{multi-fidelity policy gradient}
\newacronym{pg}{PG}{policy gradient}
\newacronym{auc}{AUC}{area under curve}
\newacronym{td}{TD}{temporal-difference}
\newacronym{sac}{SAC}{soft actor-critic}
\newacronym[longplural={Markov decision processes}]{mdp}{MDP}{Markov decision process}
\newcommand{\given}{\mid}
\newcommand{\mbb}{\mathbb}
\newcommand{\mc}{\mathcal}
\newcommand{\E}{\mbb{E}}
\newcommand{\SigmaAlgebra}{\mathcal{F}}
\newcommand{\probP}{\text{I\kern-0.15em P}}
\newcommand{\policy}{\pi}
\newcommand{\policyparams}{\theta}
\newcommand{\piparams}{\policy_{\policyparams}}
\newcommand{\policyParamsDim}{d}
\newcommand{\state}{s}
\newcommand{\action}{a}
\newcommand{\defeq}{\vcentcolon=}
\newcommand{\mdp}{\mathcal{M}}
\newcommand{\stateSpace}{S}
\newcommand{\actionSpace}{A}
\newcommand{\initialState}{\state_{I}}
\newcommand{\initialStateDist}{\Delta_{\initialState}}
\newcommand{\timeHorizon}{T}
\newcommand{\outcomeSet}{\Omega}
\newcommand{\outcome}{\omega}
\newcommand{\cost}{J}
\newcommand{\ee}{\mathbb{E}}
\newcommand{\horizon}{T}
\newcommand{\numSamples}{N}
\newcommand{\transitionKernel}{p}
\newcommand{\reward}{r}
\newcommand{\rewardFunction}{R}
\newcommand{\RV}{X}
\newcommand{\RVZ}{Z}
\newcommand{\EV}{\mu}
\newcommand{\variance}{\textrm{Var}}
\newcommand{\covariance}{\textrm{Cov}}
\newcommand{\controlVariateParam}{c}
\newcommand{\discount}{\gamma}
\newcommand{\traj}{\tau}
\newcommand{\RVtauhigh}{\RV_{\traj^{h}}^{\policy_{\policyparams}}}
\newcommand{\RVtaulow}{\RV_{\traj^{l}}^{\policy_{\policyparams}}}
\newcommand{\revised}[1]{{\leavevmode\color{black}#1}}
\title{A Multi-Fidelity Control Variate Approach for Policy Gradient Estimation}
\author{\name Xinjie Liu\textsuperscript{$\dagger$} \email xinjie-liu@utexas.edu \\
      \addr The University of Texas at Austin, Austin, TX, USA
      \AND
      \name Cyrus Neary\textsuperscript{$\dagger$} \email cyrus.neary@ubc.ca \\
      \addr The University of British Columbia, Vancouver, BC, Canada
      \AND
      \name Kushagra Gupta \email kushagrag@utexas.edu \\
      \addr The University of Texas at Austin, Austin, TX, USA
      \AND
      \name Wesley A Suttle \email wesley.a.suttle.ctr@army.mil \\
      \addr DEVCOM Army Research Laboratory, Adelphi, MD, USA 
      \AND
      \name Christian Ellis \email christian.ellis@austin.utexas.edu \\
      \addr The University of Texas at Austin, Austin, TX, USA\\
      DEVCOM Army Research Laboratory, Adelphi, MD, USA 
      \AND
      \name Ufuk Topcu \email utopcu@utexas.edu \\
      \addr The University of Texas at Austin, Austin, TX, USA 
      \AND
      \name David Fridovich-Keil \email dfk@utexas.edu \\
      \addr The University of Texas at Austin, Austin, TX, USA
}
\newtheorem{theorem}{Theorem}
\newtheorem{lemma}{Lemma}
\theoremstyle{definition}
\newtheorem{assumption}{Assumption}
\crefname{assumption}{Assumption}{Assumptions}
\Crefname{assumption}{Assumption}{Assumptions}
\begin{document}

\maketitle
\begingroup
\renewcommand{\thefootnote}{\fnsymbol{footnote}} %
\footnotetext[2]{Equal contribution.}            %
\footnotetext{Project website: \url{https://xinjie-liu.github.io/mfpg-rl/}} 
\endgroup
\newcommand{\real}{\mathbb{R}}
\newcommand{\dynprobkernel}{\mathbb{P}}
\newcommand{\valpi}{V_{\pi}}
\newcommand{\stime}{s_t}
\newcommand{\at}{a_t}
\newcommand{\stone}{s_{t+1}}
\newcommand{\valpitheta}{V_{\pi_{\theta}}}
\newcommand{\highscore}{\nabla\log\pitheta(a^h \mid s^h)}
\newcommand{\lowscore}{\nabla\log\pitheta(a^l \mid s^l)}
\newcommand{\highqpithetafunc}{Q^h_{\pitheta}}
\newcommand{\lowqpithetafunc}{Q^l_{\pitheta}}
\newcommand{\cvcoeff}{c}
\newcommand{\rewardhigh}{\reward^h}
\newcommand{\sthigh}{\stime^h}
\newcommand{\athigh}{\at^h}
\newcommand{\rewardlow}{\reward^l}
\newcommand{\stlow}{\stime^l}
\newcommand{\atlow}{\at^l}
\newcommand{\shigh}{s^h}
\newcommand{\slow}{s^l}
\newcommand{\ahigh}{a^h}
\newcommand{\alow}{a^l}

\newcommand{\highscorethetaonet}{\nabla\log\pi_{\theta_1}(\athigh \mid \sthigh)}
\newcommand{\highscorethetatwot}{\nabla\log\pi_{\theta_2}(\athigh \mid \sthigh)}
\newcommand{\lowscorethetaonet}{\nabla\log\pi_{\theta_1}(\atlow \mid \stlow)}
\newcommand{\lowscorethetatwot}{\nabla\log\pi_{\theta_2}(\atlow \mid \stlow)}
\newcommand{\wk}{W_k}
\newcommand{\wkone}{W_{k+1}}
\newcommand{\filtration}{\mathcal{F}}
\newcommand{\jstar}{J^*}
\newcommand{\ts}[1]{\theta_{s_{#1}}}
\newcommand{\tti}[1]{\theta_{t_{#1}}}

\newcommand{\hi}{^\mathrm{hi}}
\newcommand{\xhi}{X^{\mathrm{hi}}}
\newcommand{\xlow}{X^{\mathrm{low}}}
\newcommand{\xmfpg}{X^{\mathrm{MFPG}}}
\newcommand{\xihi}{\xi^{\mathrm{hi}}}
\newcommand{\ximfpg}{\xi^{\mathrm{MFPG}}}

\newcommand{\scoreFunction}{\nabla_{\policyparams}\log\piparams}

\begin{abstract}

Many \ac{rl} algorithms are impractical for deployment in operational systems or for training with computationally expensive high-fidelity simulations, as they require large amounts of data. %
Meanwhile, low-fidelity simulators---such as reduced-order models, heuristic reward functions, or generative world models---can cheaply provide useful data for RL training, even if they are too coarse for direct sim-to-real transfer. We propose \textit{\acfp{mfpg}}, an \ac{rl} framework that mixes a small amount of data from the target environment with a control variate formed from a large volume of low-fidelity simulation data to construct an unbiased, variance-reduced estimator for on-policy policy gradients. 
We instantiate the framework by developing a practical, multi-fidelity variant of the classical REINFORCE algorithm. 
We show that under standard assumptions, the \acs{mfpg} estimator guarantees asymptotic convergence of multi-fidelity REINFORCE to locally optimal policies in the target environment, and achieves faster finite-sample convergence rates compared to training with high-fidelity data alone. 
\revised{We} evaluate the \acs{mfpg} algorithm across a suite of simulated robotics benchmark tasks in scenarios with limited high-fidelity data but abundant off-dynamics, low-fidelity data. %
\revised{
In our baseline comparisons, for scenarios where low-fidelity data are neutral or beneficial and dynamics gaps are mild to moderate, \acs{mfpg} is, among the evaluated off-dynamics \ac{rl} and low-fidelity-only approaches, \emph{the only} method that consistently achieves statistically significant improvements in mean performance over a baseline trained solely on high-fidelity data. When low-fidelity data become harmful, \acs{mfpg} exhibits the strongest robustness against performance degradation among the evaluated methods, whereas strong off-dynamics RL methods tend to exploit low-fidelity data aggressively and fail substantially more severely.}
An additional experiment \revised{in which the high- and low-fidelity environments are assigned anti-correlated rewards} shows that \acs{mfpg} can remain effective even when \revised{the} low-fidelity \revised{environment exhibits} reward misspecification. 
Thus, \acs{mfpg} not only offers a \revised{\emph{reliable} and \emph{robust}} paradigm for \revised{exploiting low-fidelity data, \eg, to enable} efficient sim-to-real transfer, but also provides a principled approach to managing the trade-off between policy performance and data collection costs. 
\end{abstract}

\section{Introduction}

\Acf{rl} algorithms offer significant capabilities in systems that work with unknown, or difficult-to-specify, dynamics and objectives.
The flexibility and performance of \ac{rl} algorithms have led to their adoption in applications as diverse as controlling plasma configurations in nuclear fusion reactors~\citep{degrave2022magnetic}, piloting high-speed aerial vehicles \citep{kaufmann2023champion}, training reasoning capabilities in large language models~\citep{shao2024deepseekmath}, and searching large design 
spaces for automated discovery of new molecules \citep{bengio2021flow,ghugare2023searching}.
However, in many such applications, datasets must be gathered from operational systems or from \textit{high-fidelity} simulations. 
This requirement acts as a significant barrier to the development and deployment of \ac{rl} policies: excessive interactions with operational systems are often either infeasible or unsafe, and generating simulated datasets for \ac{rl} can be prohibitively expensive unless the simulations are both cheap to run and carefully designed to minimize the sim-to-real gap.

On the other hand, \textit{low-fidelity} simulation tools capable of cheaply generating large volumes of data are often available.
For example, reduced-order models, linearized dynamics, heuristic reward functions, and generative world models all output useful information for \ac{rl}, even when approximations of the target dynamics and rewards are very coarse.

Towards enabling the training and deployment of \ac{rl} policies when expensive, high-fidelity samples are scarce, we develop a novel \textit{multi-fidelity} \ac{rl} algorithm. 
The proposed algorithm mixes data from the target environment with data generated by lower-fidelity simulations to improve the sample efficiency of high-fidelity data.  
The proposed framework not only offers a novel paradigm for sim-to-real transfer, but also provides a principled approach to managing the trade-off between policy performance and data collection costs.

\begin{figure}[b]
  \centering
    \includegraphics[width=0.92\linewidth]{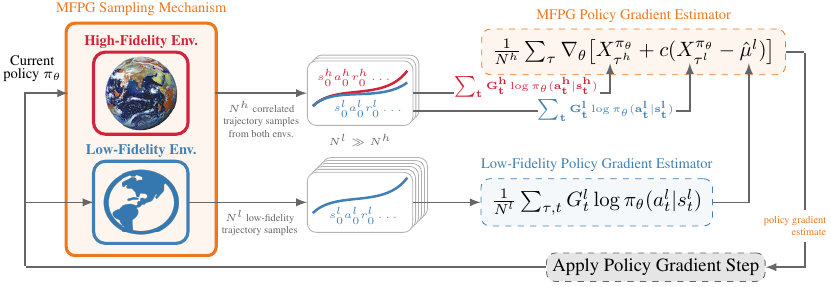}
  \caption{
 The proposed \ac{mfpg} framework. 
  At each policy update step, \ac{mfpg} combines a small amount of data from the target (high-fidelity) environment with a large volume of low-fidelity simulation data, thereby forming an unbiased, reduced-variance estimator for the policy gradient.
  }
  \label{fig:intro_flowchart}
\end{figure}

More specifically, we present \textit{\acfp{mfpg}}, 
an RL framework that mixes a small amount of data from the target environment with a control variate formed from a large volume of low-fidelity data to construct an unbiased, variance-reduced estimator for on-policy policy gradients.
We further propose a practical algorithm that instantiates the \ac{mfpg} estimator with the classical on-policy algorithm REINFORCE~\citep{williams1992simple}.

\Cref{fig:intro_flowchart} illustrates the proposed approach.
At each policy update step, the algorithm begins by sampling a relatively small number of trajectories from the target environment, which may correspond to real-world hardware or to a high-fidelity simulation.
We then propose a method for sampling trajectories from the low-fidelity environments such that the resulting 
action likelihoods are highly correlated with those from the high-fidelity trajectories.
This method hinges on a nuanced approach to action sampling and is critical to the success of our approach.
Next, the algorithm uses the low-fidelity environments to sample a much larger quantity of uncorrelated trajectories, which it uses alongside the previously sampled trajectories to compute an unbiased estimate of the \acl{pg} to be applied. 
So long as the random variables corresponding to the \acl{pg} updates between the high and low-fidelity environments are correlated, the approach is guaranteed to reduce the variance of the \acl{pg} estimates.

Theoretically, we analyze the proposed \ac{mfpg} estimator for the REINFORCE algorithm and prove that, under standard assumptions, the \ac{mfpg} estimator guarantees asymptotic convergence to a locally optimal policy (satisfying first-order optimality conditions) in the \emph{target} (high-fidelity) environment. 
Moreover, when high- and low-fidelity samples exhibit nonzero correlation, \ac{mfpg} achieves a faster finite-sample convergence rate than using high-fidelity samples alone.

Empirically, we evaluate the proposed algorithm on a suite of benchmark robotic control tasks where the high- and low-fidelity environments differ in transition dynamics, and additionally on a task \revised{where the low-fidelity reward is the negative (anti-correlated) version of the high-fidelity reward.} 
\revised{We use 20 random seeds per setting and non-parametric bootstrap confidence intervals to assess statistical significance.}
In high-fidelity sample-scarce regimes, the results reveal three key insights:
i) When the dynamics gap is mild, \ac{mfpg} can substantially reduce the variance of policy gradient estimates compared to standard variance reduction via state-value baseline subtraction.
\revised{
ii) \ac{mfpg} provides a \emph{reliable} and \emph{robust} way to exploit low-fidelity data. In our baseline comparison experiments, for scenarios where low-fidelity data are neutral or beneficial and dynamics gaps are mild to moderate, \ac{mfpg} is, among the evaluated off-dynamics \ac{rl} and low-fidelity-only approaches, \emph{the only} method that consistently and statistically significantly improves the mean performance over a baseline trained solely on high-fidelity data. When low-fidelity data become harmful, \ac{mfpg} exhibits the strongest robustness against performance degradation among the evaluated off-dynamics \ac{rl} and low-fidelity-only methods. Strong off-dynamics \ac{rl} methods tend to exploit low-fidelity data more aggressively, and can achieve high performance at times, but can also fail substantially more severely.}
iii) \ac{mfpg} can remain effective even in the presence of reward misspecification in the low-fidelity samples.

\section{Related Work}

This section provides a concise overview of three lines of related work: (i) variance reduction in \ac{rl} via control variates; (ii) multi-fidelity \ac{rl} and fine-tuning using high-fidelity data for sim-to-real transfer; and (iii) off-dynamics \ac{rl}.

\paragraph{Variance Reduction in \ac{rl} via control variates.}
The method of \acp{cv} 
is commonly used for variance reduction in Monte Carlo estimation \citep{mcbook}. 
In \ac{rl}, particularly in \ac{pg} methods \citep{williams1992simple}, there is a long history of using 
\acp{cv} to reduce variance and accelerating learning, \eg, subtracting constant reward baselines \citep{sutton1984temporal,willianms1988toward,dayan1991reinforcement,williams1992simple} or state value functions \citep{weaver2001optimal,greensmith2004variance,jan2006policy,zhao2011analysis} from Monte Carlo returns. 
Even though state values are known to not be optimal for variance reduction \citep{greensmith2004variance}, they are also central to modern algorithms, \eg, actor-critic methods~\citep{schulman2017proximal}. 
More recently, additional \acp{cv} baselines have been studied, such as vector-form \acp{cv} \citep{zhong2021coordinate}, multi-step variance reduction along trajectories \citep{pankov2018reward,cheng2020trajectory} and their more general form \citep{huang2020importance}, and state-action dependent baselines \citep{gu2017q,liu2018action,grathwohl2018backpropagation,wu2018variance}, which offer better variance reduction under certain conditions \citep{tucker2018mirage}. 
The literature on using \acp{cv} for \aclp{pg} almost exclusively focuses on single-environment settings. 
In this work, we consider the problem of fusing data from multi-fidelity environments to construct low-variance estimators, cf. \cref{sec:approach}.
Moreover, the proposed multi-fidelity approach can also be readily combined with existing single-fidelity \acp{cv} for further improved performance.

To the best of our knowledge, the only existing work leveraging multi-fidelity CVs for \ac{rl} is \citep{khairy2024multi}. 
Our approach differs in two main ways. 
First, we propose a multi-fidelity \ac{pg} algorithm for \acp{mdp} with either continuous or discrete state and action spaces, whereas \cite{khairy2024multi} estimate state-action values in tabular \acp{mdp} without function approximation. 
Second, we propose a novel policy reparameterization trick that enables the sampling of correlated trajectories across multi-fidelity environments.
Crucially, this sampling scheme improves the variance reduction of the proposed MFPG algorithm, and it directly supports continuous settings without restricting the \acp{mdp}. 
By contrast, \cite{khairy2024multi} require action sequences to be matched across multi-fidelity environments, which requires additional assumptions about the MDP structure and transition dynamics.

\paragraph{Multi-fidelity \ac{rl} and fine-tuning using high-fidelity data.}
Multi-fidelity methods for optimization, uncertainty propagation, and inference are well-studied in computational science and engineering, where it is often the case that multiple models for a problem of interest are available \citep{peherstorfer2018survey}.
Such methods provide principled statistical techniques to manage the computational costs of Monte Carlo simulations, without introducing unwanted biases. 
However, the adoption of such techniques by \ac{rl} algorithms is significantly more limited.
Two bodies of literature which are closely related to this work are multi-fidelity \ac{rl} \citep{cutler2015real} and fine-tuning \ac{rl} policies using limited target-domain data, \eg, in sim-to-real transfer \citep{smith2022icra}. 
Multi-fidelity \ac{rl} aims to design \emph{training curricula}, \ie, decide when to train in each simulation fidelities, in order to train highly-performant policies for the highest-fidelity environment with minimum simulation costs. 
Typically, training begins in the lowest-fidelity simulators, with proposed mechanisms to decide when to transition to other fidelities based on estimated uncertainty~\citep{suryan2017multi,cutler2015real,agrawal2024adaptive,ryou2024multi,BHOLA2023112018} in predicted actions, state-action values, dynamics, rewards, and/or information gain~\citep{marco2017virtual}.
Similarly, in sim-to-real and transfer learning settings~\citep{taylor2009transfer,zhao2020sim,tang2024deep,da2025survey,niu2024comprehensive}, the objective is to train a highly-performant policy for the real world using simulation data, sometimes supplemented with a small amount of real-world data. 
Many approaches leverage models or value functions trained in simulation (source domain) to bootstrap fine-tuning in the real world (target domain)~\citep{rusu2017sim,arndt2020meta,taylor2007transfer,smith2022icra}, guide real-world exploration~\citep{yin2025rapidly}, or bidirectionally align state distributions between simulation and real-world agents to improve sample efficiency. Real-world data can also be used to refine the simulator itself~\citep{abbeel2006using,8793789,ramos2019bayessim}. 
The works above introduce new training paradigms and exploration strategies with data from different domains.
In contrast, our approach proposes a new policy gradient estimator that incorporates cross-domain data. 
The proposed approach can be seamlessly integrated into existing paradigms that involve fine-tuning within target domains.

\paragraph{Off-dynamics \ac{rl}.} A particularly important setting is when multi-fidelity environments exhibit mismatched transition dynamics. Classical approaches such as system identification~\citep{ljung1998system} and domain randomization~\citep{tobin2017domain} address this problem from a dynamics modeling perspective, but typically require either access to a high-fidelity dynamics model or a pre-specified distribution over dynamics parameters. Other works~\citep{desai2020stochastic,desai2020imitation} instead learn action transformations to compensate for transitions produced by low-fidelity dynamics.

Another line of work tackles the off-dynamics \ac{rl} problem from a policy adaptation perspective, \ie, modifying policy learning with samples from multiple domains. 
DARC~\citep{eysenbachoff} estimates importance weights for low-fidelity transitions—via a pair of learned classifiers modeling the likelihood ratio under high- and low-fidelity dynamics—and uses these weights to augment the low-fidelity rewards. 
\citet{guo2024off} extend this idea with generative adversarial imitation learning. 
Similar in spirit to DARC \citep{eysenbachoff}, 
more recent works estimate the dynamics mismatch by comparing observed low-fidelity transitions with a corresponding high-fidelity prediction—either through explicit target dynamics models to evaluate state-value consistency \citep{xu2023cross} or through latent representation learning \citep{lyu2024cross}. 
The estimated dynamics gap is then used to filter out unrealistic low-fidelity transitions \citep{xu2023cross} or to augment the low-fidelity rewards by penalizing less likely transitions \citep{lyu2024cross}. 
\revised{
\citet{van2024policy} introduces a mechanism that re-weights and extrapolates low-fidelity transitions while augmenting rewards, in order to mitigate deficient low-fidelity data support under large dynamics gaps.
}
A related line of work extends these ideas to \emph{offline} \ac{rl}, \revised{
leveraging either online low-fidelity simulation data~\citep{niu2022trust,niu2025h2o+} or a large offline low-fidelity dataset~\citep{liu2022dara,lyu2025cross,guo2025mobody,anonymous2025crossdomain} to complement a small high-fidelity dataset.
}

In this work, we focus on the \emph{online}, on-policy setting, where the agent has limited access to interact with the high-fidelity (target) environment but can collect abundant low-fidelity samples for each policy update. 
We benchmark our proposed \ac{mfpg} algorithm against DARC \citep{eysenbachoff} and PAR \citep{lyu2024cross}.
A key distinction between \ac{mfpg} and most existing off-dynamics \ac{rl} methods lies in their fundamental design philosophy. 
Prior approaches often rely on regularizing low-fidelity samples for policy learning~\citep{eysenbachoff,xu2023cross,lyu2024cross}. In contrast, \ac{mfpg} explicitly \emph{grounds} policy learning in high-fidelity samples, leveraging low-fidelity data solely as a variance-reduction mechanism.
This difference has important theoretical implications. 
Existing methods \citep{eysenbachoff,xu2023cross,lyu2024cross} bound the global \emph{suboptimality} of policies deployed in the high-fidelity (target) environment; the bounds can degrade as the dynamics gap grows—for example, when discrepancies in value functions~\citep{xu2023cross} or learned representations~\citep{lyu2024cross} become more pronounced. 
DARC \citep{eysenbachoff} further assumes that policies optimal in the target domain remain near-optimal in the source domain. 
By contrast, \ac{mfpg} requires no such assumption: low-fidelity samples are used exclusively for variance reduction—our asymptotic guarantees ensure that the learned policy is locally \emph{optimal} (up to first order) in the high-fidelity environment, regardless of the dynamics mismatch.
This design principle enables us to establish faster convergence rates for \ac{mfpg} with REINFORCE when the correlation between low- and high-fidelity samples is non-zero. 
Empirically, \ac{mfpg} demonstrates greater robustness than DARC and PAR. Moreover, unlike prior methods designed specifically for dynamics mismatch, \ac{mfpg} can also extend to settings with reward misspecification, as illustrated in \cref{sec:negative-reward}.

\section{Preliminaries}\label{sec:preliminaries}

Our objective is to develop \ac{rl} algorithms capable of leveraging data generated by multiple environments in order to efficiently learn a policy that achieves high performance in a target environment of interest.

\paragraph{Modeling multi-fidelity environments.}
We \revised{consider} a \textit{high-fidelity} environment and a \textit{low-fidelity} environment, each modeled by 
\revised{a finite-horizon} \ac{mdp}. 
In particular, the high-fidelity environment is an MDP \(\mdp^{h} = (\stateSpace, \actionSpace, \initialStateDist, \discount, \transitionKernel^{h}, \rewardFunction^{h}\revised{, T})\), which we assume represents either an accurate simulator of the target environment, or the target environment itself.
Here, \(\stateSpace\) is the set of environment states, \(\actionSpace\) is its set of actions, \(\initialStateDist\) is an initial distribution over states, \(\discount \in [0,1]\) is a discount factor, \(\transitionKernel^{h}(\state' | \state, \action)\) is the probability of transitioning to state \(\state'\) from state \(\state\) under action \(\action\), \(\reward \sim \rewardFunction^{h}(\state, \action, \state')\) defines the probability of observing a particular reward under a given state-action-state triplet\revised{, and $T$ denotes the finite time horizon length}. 
Similarly, the low-fidelity environment is an MDP \(\mdp^{l} = (\stateSpace, \actionSpace, \) \(\initialStateDist, \discount, \transitionKernel^{l}, \rewardFunction^{l}\revised{, T})\), whose transition dynamics \(\transitionKernel^{l}\) and reward function \(\rewardFunction^{l}\) differ from those of \(\mdp^{h}\), and do not necessarily accurately represent the target environment.

\paragraph{\revised{Assumptions on the multi-fidelity environments.}}
We assume that the cost of generating sample trajectories in \(\mdp^{h}\) is significantly higher than that of generating trajectories in \(\mdp^{l}\), \revised{whose cost we treat as negligible. This} may be the case if \(\mdp^{h}\) were to represent real-world hardware while \(\mdp^{l}\) were to represent cheap simulations. 
\revised{
In addition, the proposed \ac{mfpg} approach requires that the low-fidelity simulator can be reset to user-specified states on demand, so that we can construct low-fidelity trajectories whose initial states match those of the high-fidelity rollouts and thereby obtain correlated trajectories. 
}

Our objective is to learn a stochastic policy \(\policy_{\policyparams}(\action 
| \state)\), parameterized by \(\policyparams \in \mathbb{R}^{\policyParamsDim}\), that achieves a high expected total reward in the high-fidelity environment \(\mdp^{h}\).
Fixing a policy \(\policy_{\policyparams}\) defines a distribution over trajectories in both \(\mdp^{h}\) and \(\mdp^{l}\).
We denote trajectories in \(\mdp^{h}\) by \(\traj^{h} = \state_{0}^{h}, \action_{0}^{h}, \reward_{0}^{h}, \state_{1}^{h}, \action_{1}^{h}, \reward_{1}^{h}, \ldots, \state_{T}^{h} \), where \(\state_{0}^{h} \sim \initialStateDist\), \(a_{t}^{h} \sim \policy_{\policyparams}(\cdot | \state_{t}^{h})\), \(\state_{t+1}^{h} \sim \transitionKernel^{h}(\cdot | \state_{t}^{h}, \action_{t}^{h})\), and \(\reward^{h}_{t} \sim \rewardFunction^{h}(\state_{t}^{h}, \action_{t}^{h}, \state_{t+1}^{h})\).
Similarly, we use \(\traj^{l}\) to denote trajectories sampled in \(\mdp^{l}\).

\paragraph{Policy gradients.}
\Ac{pg} algorithms aim to maximize the performance measure \(\cost(\policyparams) \defeq \mathbb{E}_{\traj \sim \mdp(\policy_{\policyparams})}[R(\traj)]\)---the expected total reward along trajectories \(\traj\) sampled from environment \(\mdp\) under policy \(\policy_{\policyparams}\). 
They do so by using stochastic estimates of the policy gradient \(\nabla_{\policyparams} \cost(\policyparams)\) to perform gradient ascent directly on the policy parameters~\citep{sutton2018reinforcement}.
For example, the REINFORCE algorithm \citep{willianms1988toward, williams1992simple} uses Monte Carlo estimates of \(\mathbb{E}_{\tau \sim \mdp(\policy_{\policyparams})}[\nabla_{\policyparams}\RV_{\traj}^{\policy_{\policyparams}}] \) to estimate the policy gradient, where \(\RV_{\traj}^{\policy_{\policyparams}} \defeq \frac{1}{T} \sum_{t=0}^{T - 1} G_{t} \log \policy_{\policyparams}(\action_{t} | \state_{t})\) is a random variable
defining the contribution of each trajectory \(\traj\) to the overall policy gradient.
Here, \(\action_{t}\), \(\state_{t}\), and \(G_{t} = \sum_{\revised{n}=t}^{T-1} \discount^{\revised{n}-t} \reward_{\revised{n}}\), denote the selected action, the state, and the reward-to-go at time \(t\) in trajectory \(\tau\), respectively.

Different policy gradient algorithms use different expressions for \(\RV_{\traj}^{\policy_{\policyparams}}\) when estimating the policy gradient (e.g., \(G_{t}\) may be replaced with an advantage estimate, or \(\RV_{\traj}^{\policy_{\policyparams}}\) may be entirely replaced by a surrogate per-trajectory gradient estimate, as is done in PPO~\citep{schulman2017proximal}).
However, we note that the overall structure of many on-policy algorithms remains the same: use the current policy to sample trajectories in the environment; use the rewards, actions, and states along these trajectories to compute a variable of interest \(\RV_{\traj}^{\policy_{\policyparams}}\); finally, average the gradients of these sampled random variables to estimate \(\nabla_{\policyparams} \cost(\policyparams)\).

\section{Multi-Fidelity Policy Gradients}\label{sec:approach}

This section introduces our \ac{mfpg} framework, a mechanism to draw correlated trajectories from multi-fidelity environments, and the corresponding multi-fidelity REINFORCE algorithm.

\revised{

\paragraph{Iteration notation.}

We consider an iterative policy gradient procedure with parameters
$\theta_k \in \mathbb{R}^d$ at iteration $k\in\bb{Z}^+$, and write
$\pi_{\theta_k}(a \mid s)$ for the corresponding policy. 
When discussing a single iteration and a fixed parameter vector, all
quantities are understood to be evaluated at that iteration, and we
omit the subscript $k$ for readability, writing simply $\theta$ and
$\pi_\theta$. 
All subsequent expressions that omit the iteration index should therefore be interpreted as taken at an arbitrary but fixed iteration $k$, and hence apply without loss of generality to every iteration of the algorithm.
}

\paragraph{\Acl{mfpg} estimators via control variates.} 
Because our objective is to optimize the policy performance in the high-fidelity environment \(\mdp^{h}\), during each step of the policy gradient algorithm we must estimate \(\nabla_{\policyparams} 
 \mathbb{E}_{\traj^{h} \sim \mdp^{h}(\policy_{\policyparams})} [\RV_{\traj^{h}}^{\policy_{\policyparams}}]\) from a potentially limited number \(\numSamples^{h}\) of sampled high-fidelity trajectories \(\traj^{h}\). 
In data-scarce settings, existing policy gradient methods can face the challenge of high variance of the gradient estimates~\citep{greensmith2004variance}. 
We aim to reduce the estimation variance for the \acp{pg}. 
In this work, we assume that we may also sample a relatively large number \(\numSamples^{l} \gg \numSamples^{h}\) of trajectories \(\traj^{l}\) from the low-fidelity environment \(\mdp^{l}\).
We use these low-fidelity samples to construct a so-called \textit{control variate} \(\RV^{\policy_{\policyparams}}_{\traj^{l}}\)---a correlated auxiliary random variable whose expected value \({\EV}^{l} \defeq \mathbb{E}_{\traj^{l} \sim \mdp^{l}(\policy_{\policyparams})}[\RV^{\policy_{\policyparams}}_{\traj^{l}}]\) is known. 
\revised{
Note that \(\RV^{\policy_{\policyparams}}_{\traj^{l}}\) is obtained by applying the same trajectory functional
$X^{\pi_\theta}_\traj(\cdot)$ to trajectories $\tau^l$ sampled from the low-fidelity environment
$\mdp^{l}(\policy_{\policyparams})$ under $\pi_\theta$.
}
We then use the \textit{control variates technique} \citep{nelson1987control} to construct an unbiased, reduced-variance estimator for \(\nabla_{\policyparams}  \mathbb{E}_{\traj^{h} \sim \mdp^{h}(\policy_{\policyparams})} [\RV_{\traj^{h}}^{\policy_{\policyparams}}]\). 

Specifically, we construct a new random variable \(\RVZ^{\policy_{\policyparams}}\revised{(c)} \defeq \RV_{\traj^{h}}^{\policy_{\policyparams}} + \controlVariateParam (\RV_{\traj^{l}}^{\policy_{\policyparams}} - \EV^{l})\) with coefficient \(\controlVariateParam \in \mathbb{R}\). 
\revised{The next lemma formalizes the unbiasedness and variance-reduction properties of this construction; its proof follows standard control variate arguments and is provided in \cref{sec:proof-lemma-cv-estimator}}.

\revised{
\begin{lemma}[Unbiasedness and variance reduction of the multi-fidelity control variate estimator] \label{lemma:cv-estimator}
Let \(\RV_{\traj^{h}}^{\policy_{\policyparams}}\) and \(\RV_{\traj^{l}}^{\policy_{\policyparams}}\) be the high- and low-fidelity random variables defined above, and let ${\EV}^{l} \defeq \mathbb{E}_{\traj^{l} \sim \mdp^{l}(\policy_{\policyparams})}[\RV^{\policy_{\policyparams}}_{\traj^{l}}]$. 
For any coefficient $\controlVariateParam \in \mathbb{R}$, define 
\begin{equation}
\RVZ^{\policy_{\policyparams}}(c) \defeq \RV_{\traj^{h}}^{\policy_{\policyparams}} + \controlVariateParam (\RV_{\traj^{l}}^{\policy_{\policyparams}} - \EV^{l}).
\end{equation}
Then, \(\RVZ^{\policy_{\policyparams}}(c)\) is unbiased with respect to the high-fidelity random variable: \(\mathbb{E}[\RVZ^{\policy_{\policyparams}}(c)] = \mathbb{E}[\RV_{\traj^{h}}^{\policy_{\policyparams}}]\). 

Moreover, by choosing the coefficient
\begin{equation}\label{eq:optimal-coeff}
\controlVariateParam^{*} = - \frac{\covariance(\RV_{\traj^{h}}^{\policy_{\policyparams}}, \RV_{\traj^{l}}^{\policy_{\policyparams}})}{ \variance(\RV_{\traj^{l}}^{\policy_{\policyparams}})} = -\rho(\RV_{\traj^{h}}^{\policy_{\policyparams}}, \RV_{\traj^{l}}^{\policy_{\policyparams}})\frac{\sqrt{\variance(\RV_{\traj^{h}}^{\policy_{\policyparams}})}}{ \sqrt{\variance(\RV_{\traj^{l}}^{\policy_{\policyparams}})}} \implies  \variance\bigl(\RVZ^{\policy_{\policyparams}} (c^*)\bigr) = \bigl(1-\rho^2(\RV_{\traj^{h}}^{\policy_{\policyparams}}, \RV_{\traj^{l}}^{\policy_{\policyparams}})\bigr)\variance(\RV_{\traj^{h}}^{\policy_{\policyparams}}),
\end{equation}
where \(\rho(\cdot, \cdot)\) is the Pearson correlation coefficient between the two random variables. 
In particular, whenever \(\rho(\RV_{\traj^{h}}^{\policy_{\policyparams}}, \RV_{\traj^{l}}^{\policy_{\policyparams}}) \neq 0\), the multi-fidelity control variate estimator \(\RVZ^{\policy_{\policyparams}} (c^*)\) has strictly smaller variance than \(\RV_{\traj^{h}}^{\policy_{\policyparams}}\) alone.
\end{lemma}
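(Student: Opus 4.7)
The plan is to split the proof into two largely independent pieces corresponding to the two claims of the lemma---unbiasedness and optimal-coefficient variance reduction---both of which follow from standard control variate manipulations applied to the specific random variables $\RV_{\traj^{h}}^{\policy_{\policyparams}}$ and $\RV_{\traj^{l}}^{\policy_{\policyparams}}$ in our multi-fidelity setup. Throughout, I work at a fixed iteration with a fixed parameter $\policyparams$, and I rely only on (i) linearity of expectation, (ii) bilinearity of covariance, and (iii) the definition $\EV^{l} = \mathbb{E}[\RV_{\traj^{l}}^{\policy_{\policyparams}}]$. No structural assumption on $\mdp^{h}$ or $\mdp^{l}$ beyond what is already in the excerpt is needed, so there is no real technical obstruction; the only subtlety to flag is implicit finiteness of second moments, which I will assume (and note) since the correlation coefficient is otherwise undefined.

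First I would establish unbiasedness by applying linearity of expectation directly to $\RVZ^{\policy_{\policyparams}}(c) = \RV_{\traj^{h}}^{\policy_{\policyparams}} + c(\RV_{\traj^{l}}^{\policy_{\policyparams}} - \EV^{l})$, observing that the centered term $\RV_{\traj^{l}}^{\policy_{\policyparams}} - \EV^{l}$ has zero mean by construction. This immediately yields $\mathbb{E}[\RVZ^{\policy_{\policyparams}}(c)] = \mathbb{E}[\RV_{\traj^{h}}^{\policy_{\policyparams}}]$ for every $c \in \mathbb{R}$, which is the first assertion.

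Next I would derive the variance expression
\begin{equation*}
\variance\bigl(\RVZ^{\policy_{\policyparams}}(c)\bigr) = \variance(\RV_{\traj^{h}}^{\policy_{\policyparams}}) + 2c\,\covariance(\RV_{\traj^{h}}^{\policy_{\policyparams}}, \RV_{\traj^{l}}^{\policy_{\policyparams}}) + c^{2}\,\variance(\RV_{\traj^{l}}^{\policy_{\policyparams}})
\end{equation*}
by expanding $\variance(A + cB) = \variance(A) + 2c\,\covariance(A,B) + c^{2}\variance(B)$ with $A = \RV_{\traj^{h}}^{\policy_{\policyparams}}$ and $B = \RV_{\traj^{l}}^{\policy_{\policyparams}} - \EV^{l}$, using the fact that subtracting a constant does not change variance or covariance. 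The right-hand side is a quadratic in $c$ with positive leading coefficient (assuming $\variance(\RV_{\traj^{l}}^{\policy_{\policyparams}}) > 0$; the degenerate case is trivial since the control variate is then almost surely constant and contributes nothing). Setting the derivative with respect to $c$ to zero yields the optimizer $\controlVariateParam^{*} = -\covariance(\RV_{\traj^{h}}^{\policy_{\policyparams}}, \RV_{\traj^{l}}^{\policy_{\policyparams}})/\variance(\RV_{\traj^{l}}^{\policy_{\policyparams}})$, which matches the expression stated in the lemma and also agrees with the Pearson-correlation form after multiplying numerator and denominator by $\sqrt{\variance(\RV_{\traj^{h}}^{\policy_{\policyparams}})}/\sqrt{\variance(\RV_{\traj^{h}}^{\policy_{\policyparams}})}$.

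Finally I would substitute $\controlVariateParam^{*}$ back into the quadratic to obtain $\variance(\RVZ^{\policy_{\policyparams}}(\controlVariateParam^{*})) = \variance(\RV_{\traj^{h}}^{\policy_{\policyparams}}) - \covariance^{2}(\RV_{\traj^{h}}^{\policy_{\policyparams}}, \RV_{\traj^{l}}^{\policy_{\policyparams}})/\variance(\RV_{\traj^{l}}^{\policy_{\policyparams}})$, and then rewrite the second term using $\rho^{2} = \covariance^{2}/(\variance(\RV_{\traj^{h}}^{\policy_{\policyparams}})\variance(\RV_{\traj^{l}}^{\policy_{\policyparams}}))$ to reach the closed form $(1 - \rho^{2})\variance(\RV_{\traj^{h}}^{\policy_{\policyparams}})$. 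Strict variance reduction whenever $\rho \neq 0$ then follows because $\rho^{2} > 0$ in that case. The main potential pitfall---and the only thing worth being careful about---is ensuring the second-moment assumptions needed for $\covariance$, $\variance$, and $\rho$ to be well defined; in the RL setting these hold under the usual boundedness or integrability conditions on rewards and score functions that are standard for REINFORCE analyses, so I would simply state them as a blanket assumption at the top of the proof.
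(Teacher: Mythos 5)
Your proposal is correct and follows essentially the same route as the paper's proof in \cref{sec:proof-lemma-cv-estimator}: unbiasedness via linearity of expectation and the zero-mean centered term, then expanding the variance as a quadratic in $c$, minimizing, and substituting back to obtain the $(1-\rho^2)$ form. Your added remarks on finite second moments and the degenerate zero-variance case are sensible housekeeping that the paper leaves implicit, but they do not change the argument.
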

}

We estimate \(\EV^{l}\) using the \(\numSamples^{l}\) low-fidelity trajectory samples, and we provide a method to jointly sample correlated values \(\RV_{\traj^{h}}^{\policy_{\policyparams}}\) and \(\RV_{\traj^{l}}^{\policy_{\policyparams}}\) (described below). 
\revised{
In the remainder of the paper, we take \(\RVZ^{\policy_{\policyparams}} \defeq \RVZ^{\policy_{\policyparams}}(c^*)\) as the ideal multi-fidelity random variable. In practice, $c^*$ is unknown and must be estimated from samples. 
} 
We estimate \(\controlVariateParam^{*}\) using the sampled trajectories from \(\mdp^{h}\) and \(\mdp^{l}\), reusing samples from previous gradient steps by maintaining moving-average statistics. 
Specifically, \revised{
at iteration $k$, given the current policy $\pi_{\policyparams_k}$, we first compute \emph{batch} estimates for correlation~$\hat{\rho}^{\text{batch}}_k(\RV_{\traj^{h}}^{\policy_{\policyparams_k}}, \RV_{\traj^{l}}^{\policy_{\policyparams_k}})$ and standard deviations~$\sqrt{\widehat{\variance}^{\text{batch}}_k(\RV_{\traj^{h}}^{\policy_{\policyparams_k}})}, \sqrt{\widehat{\variance}^{\text{batch}}_k(\RV_{\traj^{l}}^{\policy_{\policyparams_k}})}$ using newly sampled trajectories at iteration $k$. 
Let~$\hat{\rho}_{k-1}(\RV_{\traj^{h}}^{\policy_{\policyparams_{k-1}}}, \RV_{\traj^{l}}^{\policy_{\policyparams_{k-1}}})$, $\sqrt{\widehat{\variance}_{k-1}(\RV_{\traj^{h}}^{\policy_{\policyparams_{k-1}}})}, \text{and} \sqrt{\widehat{\variance}_{k-1}(\RV_{\traj^{l}}^{\policy_{\policyparams_{k-1}}})}$ denote the running
estimates from the previous iteration. With moving–average coefficient
$\eta_{\text{ma}} \in (0,1)$, we update
\begin{equation}\label{eq:ema-update}
\begin{aligned}
\hat{\rho}_{k}(\RV_{\traj^{h}}^{\policy_{\policyparams_k}}, \RV_{\traj^{l}}^{\policy_{\policyparams_{k}}})      &= \eta_{\text{ma}} \hat{\rho}_{k-1}(\RV_{\traj^{h}}^{\policy_{\policyparams_{k-1}}}, \RV_{\traj^{l}}^{\policy_{\policyparams_{k-1}}}) + (1-\eta_{\text{ma}})\,\hat{\rho}^{\text{batch}}_k(\RV_{\traj^{h}}^{\policy_{\policyparams_k}}, \RV_{\traj^{l}}^{\policy_{\policyparams_k}}),\\
\sqrt{\widehat{\variance}_{k}(\RV_{\traj^{h}}^{\policy_{\policyparams_{k}}})}  &= \eta_{\text{ma}} \sqrt{\widehat{\variance}_{k-1}(\RV_{\traj^{h}}^{\policy_{\policyparams_{k-1}}})} + (1-\eta_{\text{ma}})\,\sqrt{\widehat{\variance}^{\text{batch}}_k(\RV_{\traj^{h}}^{\policy_{\policyparams_k}})},\\
\sqrt{\widehat{\variance}_{k}(\RV_{\traj^{l}}^{\policy_{\policyparams_{k}}})}  &= \eta_{\text{ma}} \sqrt{\widehat{\variance}_{k-1}(\RV_{\traj^{l}}^{\policy_{\policyparams_{k-1}}})} + (1-\eta_{\text{ma}})\,\sqrt{\widehat{\variance}^{\text{batch}}_k(\RV_{\traj^{l}}^{\policy_{\policyparams_k}})}.
\end{aligned}
\end{equation}
We then form an estimate of the optimal control–variate coefficient as
\begin{equation}\label{eq:cv-coeff}
\hat c_k^* \;=\; -\,\hat{\rho}_{k}(\RV_{\traj^{h}}^{\policy_{\policyparams_k}}, \RV_{\traj^{l}}^{\policy_{\policyparams_{k}}})\,\frac{\sqrt{\widehat{\variance}_{k}(\RV_{\traj^{h}}^{\policy_{\policyparams_{k}}})}}{\sqrt{\widehat{\variance}_{k}(\RV_{\traj^{l}}^{\policy_{\policyparams_{k}}})}},
\end{equation}
and, when the iteration index is clear, we simply write $\hat c^*$.
}

At every policy gradient step \revised{$k$, given the current policy $\pi_{\policyparams_k}$}, the proposed \ac{mfpg} framework thus proceeds as follows: 
1) Use policy \revised{\(\policy_{\policyparams_k}\)} to sample \(\numSamples^{h}\) correlated trajectories from \(\mdp^{h}\) and \(\mdp^{l}\), as well as \(\numSamples^{l}\) additional trajectories from \(\mdp^{l}\).
2) Use the sampled trajectories to compute \revised{estimates \(\hat{\EV}_k^{l}\), \(\hat{\controlVariateParam}^*_k\)}, and the correlated values of the random variables \revised{\(\RV_{\traj^{h}}^{\policy_{\policyparams_k}}\)} and \revised{\(\RV_{\traj^{l}}^{\policy_{\policyparams_k}}\)}.
3) Compute the sampled values of \revised{\(\RVZ^{\policy_{\policyparams_k}}\)}.
4) Use the samples of \revised{\(\RVZ^{\policy_{\policyparams_k}}\)} to compute an unbiased, reduced-variance estimate of the policy gradient \(\revised{\nabla_{\policyparams_k} 
 \mathbb{E}[\RVZ^{\policy_{\policyparams_k}}]} \approx  \frac{1}{\numSamples^{h}} \sum_{\tau} \revised{\nabla_{\policyparams_k}}  \bigl[ \revised{\RV^{\policy_{\policyparams_k}}_{\traj^{h}}} + \revised{\hat{c}^*_k} (\revised{\RV^{\policy_{\policyparams_k}}_{\traj^{l}}} - \revised{\hat{\mu}_k^{l})}\bigr]\).

\paragraph{Sampling correlated trajectories from the multi-fidelity environments.}
Note that for the random variables \(\RV_{\traj^{h}}^{\policy_{\policyparams}}\) and \(\RV_{\traj^{l}}^{\policy_{\policyparams}}\) to be correlated, they must share an underlying probability space \((\Omega, \SigmaAlgebra, \probP)\).
In other words, every outcome \(\outcome \in \outcomeSet\) from this probability space should uniquely define a high-fidelity \(\traj^{h}(\outcome)\) and low-fidelity \(\traj^{l}(\outcome)\) trajectory, as well as the corresponding values of the random variables \(\RV_{\traj^{h}}^{\policy_{\policyparams}}(\omega)\) and \(\RV_{\traj^{l}}^{\policy_{\policyparams}}(\omega)\).
\revised{
We emphasize that the global outcome $\omega$ is introduced purely as an analytical device. 
In practice, sampling and coupling the joint outcome $\omega$ across the high- and low-fidelity \acp{mdp} is infeasible, 
since the stochasticity in environment transitions and rewards is typically not directly controllable. 
Accordingly, our \ac{mfpg} algorithm only couples the policy randomness and the initial-state randomness across fidelities, 
while treating transition and reward outcomes as independent, as we will describe below. 
A careful formulation of the joint probability space is nevertheless useful for conceptual clarity 
and for describing how to construct correlated trajectories in $\mdp^{h}$ and $\mdp^{l}$ under the same policy $\policy_{\policyparams}$. 
}

Informally, when sampling trajectories from \(\mdp^{h}\) and \(\mdp^{l}\), there are six sources of stochasticity: the stochasticity introduced by the policy \(\policy_{\policyparams}\), that introduced by the initial state distribution \(\initialStateDist\), the stochasticity introduced by the high-fidelity and low-fidelity transition dynamics \(\transitionKernel^{h}\) and \(\transitionKernel^{l}\), and finally the stochasticity introduced by the two reward functions \(\rewardFunction^{h}\) and \(\rewardFunction^{l}\).
Note that the stochasticity introduced by \(\policy_{\policyparams}\) is implemented by the agent, and is thus under the control of the algorithm in the sense that the same \revised{policy} random outcome \revised{\(\omega^\policy\)} may be used to generate actions from \(\policy_{\policyparams}\) in both \(\mdp^{h}\) and \(\mdp^{l}\). \revised{We do so via a policy reparameterization trick (see below).} 
Similarly, the algorithm may fix the initial states in the low-fidelity environment to match those observed from the high-fidelity samples.
On the other hand, the transition dynamics and environment rewards are generated by independent sources of stochasticity in the different environments\revised{; we do not assume, nor require, any shared
transition or reward outcome across fidelities.}

We accordingly define the outcome set of the probability space as \(\outcomeSet = \outcomeSet_{\initialStateDist} \times \outcomeSet_{\policy} \times \outcomeSet_{\transitionKernel^{h}} \times \outcomeSet_{\transitionKernel^{l}} \times \outcomeSet_{\rewardFunction^{h}} \times \outcomeSet_{\rewardFunction^{l}}\).
Here, each outcome \(\outcome^{\initialStateDist} \in \outcomeSet_{\initialStateDist}\) defines a particular shared initial state. 
Meanwhile, each policy outcome \(\outcome^{\policy} \in \outcomeSet_{\policy}\) corresponds to a sequence \(\outcome^{\policy} = \outcome^{\policy}_{1}, \outcome_{2}^{\policy}, \ldots, \outcome_{\timeHorizon}^{\policy}\) that dictates the random sequence of actions selected by the policy \(\policy_{\policyparams}\) in both environments. 
\revised{In practice, this policy outcome sequence is realized as a sequence of per-timestep action-noise samples
used to reparameterize the policy’s action sampling at each time step (see the discussion on policy
reparameterization below).} 
Conceptually, given the outcome \(\outcome_{t}^{\policy}\) at any timestep \(t\) of a trajectory, the policy should deterministically output action \(\action^{h}_{t} = \policy_{\policyparams}(\state_{t}^{h}, \outcome^{\policy})\) in the high-fidelity environment, and action \(\action^{l}_{t} = \policy_{\policyparams}(\state_{t}^{l}, \outcome^{\policy})\) in the low-fidelity environment.
Note that this does not necessarily imply that \(\action^{h}_{t} = \action^{l}_{t}\), due to a potential difference in states \(\state^{h}_{t}\) and \(\state^{l}_{t}\) that stems from the different dynamics. 
Similarly, each outcome \(\outcome^{\transitionKernel_{h}} \in \outcomeSet_{\transitionKernel^{h}}\) is a sequence \(\outcome^{\transitionKernel_{h}} = \outcome_{0}^{\transitionKernel_{h}}, \outcome_{1}^{\transitionKernel_{h}}, \ldots, \outcome_{\timeHorizon}^{\transitionKernel_{h}}\) dictating the outcomes \(\state_{t+1} = \transitionKernel^{h}(\state^{h}_{t}, \action^{h}_{t}, \outcome^{\transitionKernel_{h}})\) of the stochastic transitions in the high-fidelity environment.
However, unlike the policy outcome sequence \(\outcome^{\policy} \in \outcomeSet^{\policy}\), the transition outcome sequences \(\outcome^{\transitionKernel_{h}} \in \outcomeSet^{\transitionKernel_{h}}\) and \(\outcome^{\transitionKernel_{l}} \in \outcomeSet^{\transitionKernel_{l}}\), and the reward outcome sequences \(\outcome^{\rewardFunction_{h}} \in \outcomeSet^{\rewardFunction_{h}}\) and \(\outcome^{\rewardFunction_{l}} \in \outcomeSet^{\rewardFunction_{l}}\), are not shared between the high and low fidelity environments. 
\revised{
Thus, only the initial-state outcomes \(\outcome^{\initialStateDist}\) and the policy outcomes \(\outcome^{\policy}\) are coupled across fidelities in our algorithm, while the transition and reward outcomes $\outcome^{\transitionKernel_{h}}, \outcome^{\transitionKernel_{l}}, \outcome^{\rewardFunction_{h}}, \outcome^{\rewardFunction_{l}}$ remain independent. 
}

Formulating these separate outcome sets is conceptually helpful. 
However, practically, we only explicitly sample values for the policy outcomes \(\outcome_{0}^{\policy}, \outcome_{1}^{\policy}, \ldots, \outcome_{T}^{\policy}\) \revised{(action-noise samples)} in our implementation\revised{, which we describe next.}

{\centering
\begin{minipage}{0.58\linewidth} 
\begin{algorithm}[H]
    \DontPrintSemicolon 
    \KwIn{Current policy \(\policy_{\policyparams}\), multi-fidelity environments \(\mdp^{h}\) and \(\mdp^{l}\).}
    \KwOut{Sampled trajectories \(\{\traj_{i}^{h}, \traj_{i}^{l}\}_{i=1}^{\numSamples^{h}}\).}
    
    \(\textrm{TrajectoryList} \gets \textrm{EmptyList}\)\;
    
    \For{\(i \in \{1,2,\ldots,\numSamples^{h}\}\)}{

        \(\outcome_{0}^{\policy} \hdots \outcome_{T}^{\policy} \sim \revised{SampleActionNoiseSequence()}\)\;

        \(\state_{0}^{h} \sim \initialStateDist; \state_{0}^{l} \gets \state_{0}^{h}\)\;
        
        \For{\(t \in \{0, \ldots, T-1\}\)}{
            \(\action_{t}^{h} \gets \policy_{\policyparams}(\state_{t}^{h}, \outcome_{t}^{\policy_{\policyparams}})\) \tcp{\revised{reparameterization trick}} \; 
            \(\state_{t+1}^{h} \sim \transitionKernel^{h}(\cdot | \state_{t}^{h}, \action_{t}^{h})\)\; 
            \(\reward_{t}^{h} \sim \rewardFunction^{h}(\state_{t}^{h}, \action_{t}^{h}, \state_{t+1}^{h})\)
        }        
        \For{\(t \in \{0, \ldots, T-1\}\)}{
            \(\action_{t}^{l} \gets \policy_{\policyparams}(\state_{t}^{l}, \outcome_{t}^{\policy_{\policyparams}})\) \tcp{\revised{reparameterization trick}}\;
            \(\state_{t+1}^{l} \sim \transitionKernel^{l}(\cdot | \state_{t}^{l}, \action_{t}^{l})\)\;
            \(\reward_{t}^{l} \sim \rewardFunction^{l}(\state_{t}^{l}, \action_{t}^{l}, \state_{t+1}^{l})\)
        }

        \(\traj^{h}_{i} \gets \state_{0}^{h}, \action_{0}^{h}, \reward_{0}^{h}, \ldots, \state_{T}^{h}\)
        
        \(\traj^{l}_{i} \gets \state_{0}^{l}, \action_{0}^{l}, \reward_{0}^{l}, \ldots, \state_{T}^{l}\)

        \(\textrm{TrajectoryList.append}(\{\traj_{i}^{h}, \traj_{i}^{l}\})\)

    }
    
    \Return{\textrm{TrajectoryList}}\;
    
    \caption{Correlated trajectory sampling.}
    \label{alg:traj_sampling}
\end{algorithm}
\end{minipage}
\par
}

\paragraph{Correlated action sampling via policy distribution reparameterization.}
In order to use the sampled outcomes \(\outcome_{t}^{\policy}\) to deterministically select an action under the parameterized policy \(\policy_{\policyparams}\), we implement a technique inspired by the so-called \textit{reparameterization trick} used in variational autoencoders \citep{kingma2013auto}.
In particular, in continuous action spaces, we draw \(\outcome_{t}^{\policy} \sim \mathcal{N}(0,1)\), and the policy \(\policy_{\policyparams}(\state_{t}, \outcome_{t}^{\policy})\) is trained to output a state-dependent mean and standard deviation which are used to transform \(\outcome_{t}^{\policy}\) into an action \(\action_{t}\).
Meanwhile, in discrete action spaces, one can draw \(\outcome^{\policy}_{t} \sim \textrm{Uniform}(0,1)\) and apply the Gumbel-Max trick \citep{huijben2022review} to sample \(\action_{t}\) according to the state-dependent probability distribution defined by the policy \(\policy_{\policyparams}(\state_{t}, \outcome_{t}^{\policy})\).

\revised{
To summarize, Algorithm \ref{alg:traj_sampling} outlines the sampling procedure. 
First, we use the pre-sampled policy action-noise sequence to roll out a high-fidelity trajectory \(\traj^{h}\).
Next, the initial state in the low-fidelity environment is fixed to match the initial state of \(\traj^{h}\) and thereby effectively enforcing a common $\omega^{\Delta_{s_I}}$. 
Finally, we reuse the same sequence of action-noise samples to generate the low-fidelity trajectory \(\traj^{l}\). 
}

\paragraph{Defining multi-fidelity variant of the REINFORCE algorithm.}
To this point, we have defined a mechanism for sampling correlated trajectories \(\traj^{h}\) and \(\traj^{l}\) from multi-fidelity environments \(\mdp^{h}\) and \(\mdp^{l}\), as well as a framework for using said trajectories to construct reduced-variance estimators of the policy gradient from trajectory-dependent variables \(\RV_{\traj^{h}}^{\policy_{\policyparams}}\) and \(\RV_{\traj^{l}}^{\policy_{\policyparams}}\). 
With these elements of the MFPG framework in place, we may implement a multi-fidelity variant of the REINFORCE algorithm. 
We note that, in principle, \ac{mfpg} can also be instantiated with other on-policy policy gradient methods, by replacing the way in which the value of \(\RV_{\traj}^{\policy_{\policyparams}}\) is computed from sampled trajectories. 
In this work, we focus on multi-fidelity REINFORCE in order to ease theoretical analysis, leaving extensions to other algorithms for future work.

We implement a multi-fidelity variant of the REINFORCE algorithm \citep{willianms1988toward,williams1992simple} by defining \(\RV_{\traj}^{\policy_{\policyparams}} \defeq \frac{1}{T} \sum_{t=0}^{T-1} G_{t} \log \policy_{\policyparams}(\action_{t} | \state_{t})\), as described above. 
Additionally, to match the commonly used variance-reduced variant of the REINFORCE algorithm~\citep{jan2006policy}, we subtract state values estimated by a value network from the Monte Carlo returns, \ie,  \(\RV_{\traj}^{\policy_{\policyparams}} \defeq \frac{1}{T} \sum_{t=0}^{T-1} (G_{t} - V_{\phi}(s_{t})) \log \policy_{\policyparams}(\action_{t} | \state_{t}) = \frac{1}{T} \sum_{t=0}^{T-1} A_{\phi}(s_{t}, a_{t}) \log \policy_{\policyparams}(\action_{t} | \state_{t})\).
Here, $V_{\phi}(s_{t}), A_{\phi}(s_{t}, a_{t})$ denote value and advantage functions estimated from samples. 
\revised{We sample trajectories from the high- and low-fidelity environments to compute Monte Carlo returns $G^h_{t}$ and $G^l_{t}$, respectively. We then use a shared value function $V_{\phi}$ learned from high-fidelity samples \((\state^{h}_{t}, \action^{h}_{t}, \reward_{t}^{h}, \state_{t+1}^{h})\), to compute state-value baselines that are subtracted from both $G^h_{t}$ and $G^l_{t}$.}
\revised{
This is an implementation choice that simplifies training; moreover, subtracting any state-dependent baseline does not change the expectation of the REINFORCE gradient estimator~\citep{williams1992simple}. Alternatively, one can train separate value functions for the high- and low-fidelity Monte Carlo returns.
}
\Cref{alg:mfpg} in \cref{appendix-algorithm} summarizes the \ac{mfpg} REINFORCE algorithm.

\section{Theoretical Analysis of Multi-Fidelity Policy Gradients}\label{sec:theory}
We now analyze the convergence of multi-fidelity policy gradients for the REINFORCE algorithm. We employ the following assumptions on the high and low-fidelity environment MDPs $\mathcal{M}^h, \mathcal{M}^l$ and the parametrized policy $\piparams$, which are standard in reinforcement learning literature. 
\begin{assumption}[Bounded Rewards]\label{assumption: bounded reward}
    The high- and low-fidelity reward functions $\rewardFunction^h, \rewardFunction^l$ are bounded, i.e., there exists$~\rwbound^h, \rwbound^l>0$ such that $|\rewardFunction^h(s,a)|\leq\rwbound^h, |\rewardFunction^l(s,a)|\leq\rwbound^l~, \forall~(s,a)\in\statespace\times\actionspace$.
\end{assumption}

\begin{assumption}[Differentiable Policy]\label{assumption: differentiable policy}
    The policy function $\piparams$ is differentiable with respect to $\policyparams$ everywhere, and the score function $\scoreFunction(a|s)$ exists everywhere.
\end{assumption}

\begin{assumption}[Bounded Score Functions]\label{assumption: bounded score functions}
    The score function $\scoreFunction$ is bounded, i.e., there exists $\scorebd>0$ such that $\|\scoreFunction(a|s)\|\leq \scorebd,~\forall~\theta\in\bb{R}^\policyParamsDim, (s,a)\in\statespace\times\actionspace$.
\end{assumption}

\begin{assumption}[Lipschitz Score Functions]\label{assumption: lipschitz score functions}
    The score function $\scoreFunction$ is $\liptheta$-Lipschitz, i.e., $\|\scoreFunctionparams{\theta_1}(a|s) - \scoreFunctionparams{\theta_2}(a|s)\|\leq \liptheta\|\theta_1-\theta_2\|,~\forall~\theta_1, \theta_2\in\bb{R}^\policyParamsDim, (s,a)\in\statespace\times\actionspace$.
\end{assumption}
\Cref{assumption: bounded reward} is common in the policy gradient literature~\citep{zhang2020global, bhatnagar2007incremental}. \Cref{assumption: differentiable policy,assumption: bounded score functions,assumption: lipschitz score functions} are also common in prior works investigating the convergence of various policy gradient methods \citep{papini2018stochastic, papini2022smoothing, pirotta2015policy, zhang2020global}. Particularly, the commonly used Gaussian and Boltzmann parameterized policies are known to satisfy these assumptions \citep{zhang2020global}. We also make the following assumption on the high-fidelity policy gradient estimate.
\begin{assumption}[High-Fidelity Policy Gradient \revised{Estimator}]\label{assumption: high-fidelity policy gradient estimate}
    The high-fidelity policy gradient \revised{estimator} $\nabla_\policyparams\RV_{\traj^h}^{\piparams}$ is unbiased and has bounded variance, i.e., $\ee\left[ \nabla_\policyparams\RV_{\traj^h}^{\piparams}\right] = \nabla_\theta J(\theta)$, and there exists $\sigma>0$ such that $\ee\left[\left\|\nabla_\policyparams\RV_{\traj^h} - \nabla_\theta J(\theta)\right\|^2\right] \leq \sigma^2$.
\end{assumption}
\Cref{assumption: high-fidelity policy gradient estimate} ensures that the high-fidelity policy gradient estimator is a ``good'' estimator, in the sense that the high-fidelity policy gradient would have learned a good policy if it had unrestricted access to high-fidelity data. Such an assumption is required to formally show convergence (see, e.g., \cite{papini2018stochastic}). 
\revised{
We note that \cref{assumption: high-fidelity policy gradient estimate} is an assumption on the underlying high-fidelity policy gradient estimator; we do not require any policy gradient estimates computed from finite samples to be exact. 
In particular, the classic Monte Carlo REINFORCE policy gradient estimator considered in this work satisfies this requirement~\citep{sutton1999policy}. 
Extending our analysis to actor–critic-style estimators is nontrivial and is discussed as a limitation and direction for future work in~\cref{sec:appendix-future-work}.
}

Finally, in order for the proposed approach to be effective, there must be nonzero correlation between random variables computed from the high- and low-fidelity environments, \ie, the coefficient $c^*$ in 
\cref{eq:optimal-coeff} is nonzero. Hence, we make the following assumption. 

\begin{assumption}[Nonzero Correlation]\label{assumption: correlated fidelities} The Pearson correlation coefficient between the high- and low-fidelity random variables is nonzero: there exists 
$\rho\in(0,1]$ such that
    $\left|\covariance\left(\RV_{\traj^h}^{\piparams}, \RV_{\traj^l}^{\piparams}\right)/\sqrt{\variance\left(\RV_{\traj^h}^{\piparams}\right)\variance\left(\RV_{\traj^l}^{\piparams}\right)}\right| \geq \rho, ~\forall~\theta\in\bb{R}^\policyParamsDim$.
\end{assumption}

\Cref{assumption: correlated fidelities} can be interpreted as a comment on the \revised{quality} of the low-fidelity estimator: while the low-fidelity estimator may be inaccurate and have return values that are incomparable to the high-fidelity due to different transition dynamics in the two environments, the assumption ensures that the lower fidelity still preserves some meaningful connection with the higher fidelity. This connection is captured by the correlation between the two fidelities.

Under these assumptions, we establish that the variance-reduced MFPG iterates lead to a faster decrease in policy gradient norms in comparison to the high-fidelity-only iterates, when employed for a finite horizon REINFORCE algorithm. Further, we establish the global asymptotic convergence of the \ac{mfpg} to a first-order stationary point of the high-fidelity objective function. The proof is presented in \Cref{section: proof of theory}.
\begin{theorem}\label{theorem: finite rate}
    Under \Cref{assumption: bounded reward,assumption: differentiable policy,assumption: bounded score functions,assumption: lipschitz score functions,assumption: high-fidelity policy gradient estimate,assumption: correlated fidelities}, let $\{\tk\mfpg\}_{k\in\bb{Z}^+}$ and $\{\tk^h\}_{k\in\bb{Z}^+}$ be the sequences of policy parameters generated by running the REINFORCE algorithm with MFPG and high-fidelity-only policy gradient estimators, respectively. Then, for a problem with time horizon $\horizon$, after $N$ iterations, with stepsizes for both iterates taken to be a sequence $\stepsizek$ satisfying $\sum_k \stepsizek=\infty, \sum_k \stepsizek^2<\infty$, the  policy gradient $\nabla J(\policyparams)$ evaluated in the high-fidelity (target) environment is bounded as follows:
    \begin{align*}
        \min_{k\in[N]} \ee\left[\|\nabla J(\tk\mfpg)\|^2\right] &\leq \frac{J(\policyparams^*) - J(\policyparams_1) + \bm{\left(1-\rho^2\right)}\frac{\sigma^2\pglip}{2}\sum_{k=1}^N\stepsizek^2}{\sum_{k=1}^N\left(\stepsizek-\frac{\stepsizek^2\pglip}{2}\right)},
    \end{align*}
    where $\pglip$ is the Lipschitz constant of the high-fidelity policy gradient, established in \Cref{lemma: policy gradient lipschitz}.
    Moreover, we recover the corresponding rate for the high-fidelity only policy iterates $\{\tk^h\}_{k\in\bb{Z}^+}$ by substituting $\rho = 0$. Finally, we have that $\lim_{k\to\infty}\ee\left[\|\nabla J(\tk\mfpg)\|\right]=0$ almost surely.
\end{theorem}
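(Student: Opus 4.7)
The plan is to follow the standard stochastic-gradient-ascent template for a Lipschitz-smooth non-convex objective, substituting the reduced variance $(1-\rho^2)\sigma^2$ guaranteed by \Cref{lemma:cv-estimator} wherever the per-step noise bound would normally enter. The finite-sample rate will fall out of telescoping a one-step descent inequality, and the almost-sure claim will follow from a Robbins--Siegmund-style argument that leverages the square-summability of the stepsizes.

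First I would invoke \Cref{lemma: policy gradient lipschitz} (to be established separately from \Cref{assumption: bounded reward,assumption: differentiable policy,assumption: bounded score functions,assumption: lipschitz score functions} by the standard finite-horizon REINFORCE calculation that propagates bounded rewards through bounded and Lipschitz scores over $\horizon$ steps) to get $\pglip$-smoothness of $J$, yielding the descent inequality $J(\tkone) \geq J(\tk) + \langle \nabla J(\tk), \tkone - \tk\rangle - \tfrac{\pglip}{2}\|\tkone - \tk\|^2$. Substituting the MFPG update $\tkone = \tk + \stepsizek \nabla_{\policyparams} Z^{\piparams}$ and conditioning on $\filtration_k = \sigma(\policyparams_1,\dots,\tk)$, I would combine the unbiasedness in \Cref{lemma:cv-estimator} with \Cref{assumption: high-fidelity policy gradient estimate} to obtain $\ee[\nabla_{\policyparams} Z^{\piparams}\mid \filtration_k] = \nabla J(\tk)$, and combine the variance-reduction identity of \Cref{lemma:cv-estimator} with \Cref{assumption: correlated fidelities} to bound $\ee[\|\nabla_{\policyparams} Z^{\piparams} - \nabla J(\tk)\|^2 \mid \filtration_k] \leq (1-\rho^2)\sigma^2$. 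Expanding the squared norm then yields the per-step bound
\begin{equation*}
\ee[J(\tkone)\mid \filtration_k] \geq J(\tk) + \bigl(\stepsizek - \tfrac{\stepsizek^2 \pglip}{2}\bigr)\|\nabla J(\tk)\|^2 - \tfrac{\stepsizek^2 \pglip}{2}(1-\rho^2)\sigma^2.
\end{equation*}

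Taking total expectations, summing $k=1,\dots,N$, bounding $\ee[J(\policyparams_{N+1}\mfpg)]\leq J(\policyparams^*)$ to telescope the $J$-terms, and upper-bounding the minimum by the weighted average over $k$ produces the displayed finite-sample rate. Substituting $\rho=0$ into the same derivation---which corresponds to using no control variate, so that \Cref{assumption: high-fidelity policy gradient estimate} supplies the only variance bound---immediately recovers the corresponding rate for $\{\tk^h\}$.

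For the asymptotic claim I would recast the per-step bound in supermartingale form for the non-negative process $Y_k \defeq J(\policyparams^*) - J(\tk)$, giving $\ee[Y_{k+1}\mid\filtration_k] \leq Y_k - (\stepsizek - \stepsizek^2\pglip/2)\|\nabla J(\tk)\|^2 + (1-\rho^2)\sigma^2\pglip\stepsizek^2/2$. Square-summability of $\{\stepsizek\}$ makes the additive noise term summable, so the Robbins--Siegmund theorem yields $\sum_k \stepsizek\|\nabla J(\tk)\|^2 < \infty$ almost surely, and combined with $\sum_k\stepsizek=\infty$ this delivers $\liminf_k\|\nabla J(\tk)\|=0$ a.s. The main obstacle will be upgrading this $\liminf$ to a full $\lim$: I would do so via the standard anti-oscillation argument that uses the $\pglip$-smoothness of $\nabla J$ together with the summability of $\sum_k \stepsizek^2$ to preclude recurrent excursions of $\|\nabla J(\tk)\|$ away from zero, after which convergence of $\ee[\|\nabla J(\tk)\|]$ to zero follows by dominated convergence using the bounded-rewards and bounded-score assumptions.
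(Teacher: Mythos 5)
Your proposal is correct and follows essentially the same route as the paper: $\pglip$-smoothness from \Cref{lemma: policy gradient lipschitz}, a one-step ascent inequality expanded with the unbiased, variance-reduced MFPG noise, telescoping to the finite-sample rate (with $\rho=0$ recovering the high-fidelity-only rate), and a supermartingale argument giving $\sum_k\stepsizek\|\nabla J(\tk\mfpg)\|^2<\infty$ a.s., hence $\liminf=0$, upgraded to a full limit by the same anti-oscillation/contradiction argument. The only cosmetic difference is that the paper folds the summable noise into the auxiliary process $W_k \defeq J(\tk\mfpg)-\pglip\hat{\ell}^2\sum_{j\geq k}\stepsizek[j]^2$ (using a separately proved bound $\hat{\ell}$ on the estimator norm) rather than invoking Robbins--Siegmund with an explicit additive remainder on $J(\policyparams^*)-J(\tk\mfpg)$; both formulations are equivalent.
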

\paragraph{Proof Sketch.} The non-asymptotic bounds leverage the Lipschitzness of the high-fidelity-only policy gradient estimator, which we establish in \Cref{lemma: policy gradient lipschitz} in \Cref{section: proof of theory}, while the asymptotic convergence proof utilizes the supermartingale convergence theorem \citep{robbins1971convergence} as follows. We first show that the norm of the MFPG estimator is bounded, then construct a specific non-negative auxiliary random variable that we show is a supermartingale and apply the supermartingale theorem to establish the result. The complete proof is presented in \Cref{section: proof of theory}.

\Cref{theorem: finite rate} establishes that the minimum (high-fidelity) policy gradient norm after $k$ iterations will be smaller for the MFPG REINFORCE algorithm compared to the corresponding high-fidelity-only algorithm. Moreover, the \emph{improvement that the MFPG algorithm brings depends on how correlated the high and low-fidelity environments are}. Note that, as $k$ increases, the right side of the inequality in \Cref{theorem: finite rate} diminishes to 0, due to the stepsize conditions. Further, the theorem implies that, asymptotically, the MFPG estimator finds a policy that satisfies the first-order optimality conditions for the high-fidelity objective.

\section{Experiments}
\label{sec:experiments}

This section empirically evaluates the proposed \ac{mfpg} algorithm in a suite of simulated benchmark \ac{rl} tasks. In high-fidelity data-scarce regimes, the experimental results demonstrate the following \textbf{key insights}\footnote{Code: \url{https://xinjie-liu.github.io/mfpg-rl/}}:

\begin{itemize}
    \item (\revised{\textbf{I1, Variance reduction}}) When the multi-fidelity dynamics gap is mild, \ac{mfpg} can substantially reduce the variance of \ac{pg} estimates compared to standard variance reduction via state-value baseline subtraction.
    \item \revised{(\textbf{I2, Reliability and robustness}) \ac{mfpg} provides a reliable and robust way to exploit low-fidelity data. In our baseline comparisons in \cref{sec:odrl-eval}, for scenarios where low-fidelity data are neutral or beneficial (Hopper-gravity, Walker2d-friction) and dynamics gaps are mild to moderate (0.5× to 2.0×), \ac{mfpg} is, among the evaluated off-dynamics \ac{rl} and low-fidelity-only approaches, \textbf{the only} method that consistently and statistically significantly improves the mean performance over a baseline trained solely on high-fidelity data in \textbf{8 out of 8} scenarios. When low-fidelity data become harmful, \ac{mfpg} exhibits the strongest robustness against performance degradation among the evaluated off-dynamics \ac{rl} and low-fidelity-only methods. Strong off-dynamics \ac{rl} methods tend to exploit low-fidelity data more aggressively, and can achieve higher performance than \ac{mfpg} at times (e.g., in Walker2d) but can also fail substantially more severely (e.g., in HalfCheetah).}
    \item (\revised{\textbf{I3, Misspecified reward}}) \ac{mfpg} can remain effective even when low-fidelity samples exhibit reward misspecification\revised{, \eg, when the low-fidelity reward is the negative (anti-correlated) version of the high-fidelity reward.}
\end{itemize}

\subsection{Experiment Setup}\label{sec:experiment-setup}

\paragraph{Task settings.} We evaluate our approach on the off-dynamics \ac{rl} benchmark introduced by~\cite{lyu2024odrlabenchmark}, focusing on three standard MuJoCo tasks~\citep{todorov2012mujoco}: (\romannumeral 1) Hopper-v3, (\romannumeral 2) Walker2d-v3, and (\romannumeral 3) HalfCheetah-v3. In each task, the low-fidelity environment corresponds to the original dynamics, while the high-fidelity variants include changes in gravity \revised{or} friction. 
Gravity and friction are varied across five levels $\{0.5\times, 0.8\times, 1.2\times, 2.0\times, 5.0\times\}$; the intermediate $0.8\times$ and $1.2\times$ levels are not part of the original benchmark but are introduced here to examine settings with mild dynamics shifts. 
\revised{In total, the task set spans 15 distinct settings.} \revised{In~\cref{sec:odrl-eval,sec:negative-reward,eq:auxiliary-exp}, we evaluate each setting with 20 random seeds per method.}

\revised{We provide additional results on 24 further task settings in \cref{sec:appendix-complete-results}, evaluated with 5 random seeds per method. Due to the small number of seeds, these additional results are primarily used to illustrate the consistency of qualitative trends, and we do not draw formal statistical conclusions from them.}

\revised{
We note that the benchmark~\citep{lyu2024odrlabenchmark} also includes the Ant-v3 environment. Unlike the other three tasks, the observation space and reward function in Ant-v3 depend explicitly on contact forces, which cannot be reliably reset to user-specified values, as required by our \ac{mfpg} sampling scheme. Due to this difficulty in matching initial states across fidelities, we focus our evaluations on the other three tasks described above.

}

\paragraph{Baselines.} We compare \ac{mfpg} against the following baselines: \textbf{High-Fidelity Only}, which trains solely on the limited high-fidelity samples; \textbf{Low-Fidelity Only (100×)}, which trains on abundant low-fidelity data (100× more samples) and is evaluated in the high-fidelity environment; \textbf{DARC}~\citep{eysenbachoff}, which learns domain classifiers to estimate dynamics gaps for the sampled low-fidelity transitions and uses the estimated gaps to augment low-fidelity rewards; 
\revised{and} \textbf{PAR}~\citep{lyu2024cross}, which augments low-fidelity rewards using dynamics mismatch estimated in a learned latent space. Please refer to \cref{sec:appendix-implementation-details} for a more detailed description of the baselines.

We instantiate the \ac{mfpg} framework with the on-policy policy gradient algorithm REINFORCE~\citep{williams1992simple} with state value subtraction. DARC~\citep{eysenbachoff} and PAR~\citep{lyu2024cross} were originally demonstrated with the off-policy soft actor-critic algorithm~\citep{haarnoja2018soft}. To isolate the effect of their multi-fidelity mechanisms from the backbone algorithm and enable fair comparison, we adapt both PAR and DARC to the REINFORCE backbone. We include our code in the supplementary materials and will release the project as open source upon acceptance.

\paragraph{Experiment regime.} We evaluate \ac{mfpg} in a regime where high-fidelity samples are scarce, reflecting many real-world applications such as robot learning and molecule design. 
To simulate this setting, we restrict all methods to a buffer of 100 high-fidelity samples per policy update. 
Multi-fidelity methods may supplement this limited buffer with up to 100× additional low-fidelity samples per update. Training for all methods is conducted for up to 1M high-fidelity environment steps.

\paragraph{Hyperparameter configuration.} Given the restriction on high-fidelity samples per policy update, we first tune hyperparameters for the High-Fidelity Only REINFORCE baseline (with state-value baselines subtracted). 
This tuned configuration serves as the common backbone for all multi-fidelity methods (\ac{mfpg}, DARC, and PAR), in line with standard practice in the off-dynamics \ac{rl} literature~\citep{xu2023cross,lyu2024cross}. Their method-specific hyperparameters, including the number of low-fidelity samples for DARC and PAR, are tuned separately. 
The Low-Fidelity Only (100×) baseline is implemented as a direct ablation of the multi-fidelity methods. 
For further details on hyperparameter setup, see \cref{sec:appendix-implementation-details}. 
Overall, our tuning protocol grants \revised{the tested} multi-fidelity baselines more tuning than \ac{mfpg}, \emph{underscoring the simplicity and minimal tuning overhead of \ac{mfpg}}.

Notably, in our tuning process, PAR is particularly sensitive to hyperparameters; cf. \cref{fig:par-beta-sensitivity} in \cref{sec:appendix-implementation-details}. Following~\citep{lyu2024cross}, we adopt separate configurations \textbf{per task} for PAR, whereas all other methods use a single configuration across settings. 
\ac{mfpg} has the fewest hyperparameters among all \revised{the tested} multi-fidelity methods. We tuned only the exponential moving-average weight $\eta_{\text{ma}}$ in \cref{eq:ema-update} and found \revised{that \ac{mfpg}’s median performance consistently exceeded that of the High-Fidelity Only baseline across the tested values of~$\eta_{\text{ma}}$, even though the absolute performance metrics varied with $\eta_{\text{ma}}$.}

\paragraph{Result reporting.} 
\revised{For the variance reduction study in \cref{sec:var-reduction}, we run $5$ random seeds per method, as this experiment already aggregates a large number of gradient estimates within each run and is primarily intended to illustrate variance trends. 
For the experiments in \cref{sec:odrl-eval,sec:negative-reward,eq:auxiliary-exp}, we run~$20$ random seeds per method and task setting.}

Performance is evaluated in the high-fidelity environment every 2000 high-fidelity steps, using 10 evaluation episodes with a deterministic policy each time. We adopt two primary metrics: (\romannumeral 1) the \emph{final evaluation return}, defined as the return averaged over the last 20 evaluation steps, and (\romannumeral 2) \ac{auc}, the area under the evaluation-return curve across training, estimated by integrating evaluation return along the high-fidelity step axis using the composite trapezoidal rule. \Ac{auc} captures a method’s \emph{accumulated} return performance~\citep{stadie2015incentivizing,osband2016deep,hessel2018rainbow}. Finally, to assess robustness, we also report the number of \emph{performance collapses}, defined as cases where the median of a method falls below 50\% of the median value of the corresponding High-Fidelity Only baseline.

\revised{
\paragraph{Uncertainty and significance.}
Where reported, we compute two-sided $95\%$ nonparametric bootstrap confidence intervals using $R=10{,}000$ resamples (sampling with replacement over seeds). We report point estimates as sample means.
In the baseline comparisons in \cref{sec:odrl-eval}, we bootstrap the mean difference of each method compared to High-Fidelity Only, \ie, $\Delta = \text{mean}_{\text{method}} - \text{mean}_{\text{High-Fidelity Only}}$, by independently resampling seeds within each method and recomputing $\Delta$ for each resample.
We declare an improvement statistically significant at $\alpha=0.05$ when the $95\%$ bootstrap confidence interval for $\Delta$ excludes zero.
}

\subsection{Empirical Variance Reduction}\label{sec:var-reduction}

This experiment supports our key insight \textbf{I1}: When high-fidelity samples are scarce and the dynamics gap between environments is mild, MFPG can substantially reduce the variance of policy gradient estimates compared to standard variance reduction via baseline subtraction.

We use a Hopper task with $1.2\times$ friction as an example.
\Cref{fig:var-reinforce-hopper} (top) shows the variance of \ac{pg} estimates from \ac{mfpg} versus a few variants of the High-Fidelity Only baseline (more specifically, we plot the variances of the scalar quantities \(\RVZ^{\policy_{\policyparams}}\) and \(\RV_{\traj^{h}}^{\policy_{\policyparams}}\) before differentiation with respect to policy parameters). 
The plotted High-Fidelity Only variants include a baseline that has access to the same limited number of high-fidelity samples per policy update as \ac{mfpg} (\ie, the same batch size), baselines with more high-fidelity samples (a batch size of \(1000\) and \(3000\)), and variants of these High-Fidelity Only baselines that also use state-value function subtraction in an effort to \revised{reduce variance}.

We train a policy using the single-fidelity REINFORCE algorithm with state-value subtracted for 1 million steps and save the trained policy at 18 different checkpoints. 
For each of these saved policies, we collect 200 batches of both high- and low-fidelity data (with low-fidelity data collected at 100× the amount of high-fidelity data
), where the size of each high-fidelity batch, i.e., the amount of high-fidelity data, varies between approaches as described above.
We then record the empirical variance of the policy gradient estimates from these 200 batches, for each checkpointed policy.
We repeat this experiment for 5 random seeds and report aggregate statistics in \Cref{fig:var-reinforce-hopper} (top) where each line is based on 21600 batches of policy gradient estimates. Solid lines denote medians and shaded regions (which largely overlap the median lines) indicate the range between maxima and minima across the 5 seeds. 
\Cref{fig:var-reinforce-hopper} (bottom) reports the ratio of median variance of policy gradient estimators for two variants of \ac{mfpg} (with and without value-function subtraction), measured relative to their single-fidelity counterparts under the same batch size.

\begin{figure}
  \centering
    \includegraphics[width=0.6\linewidth]{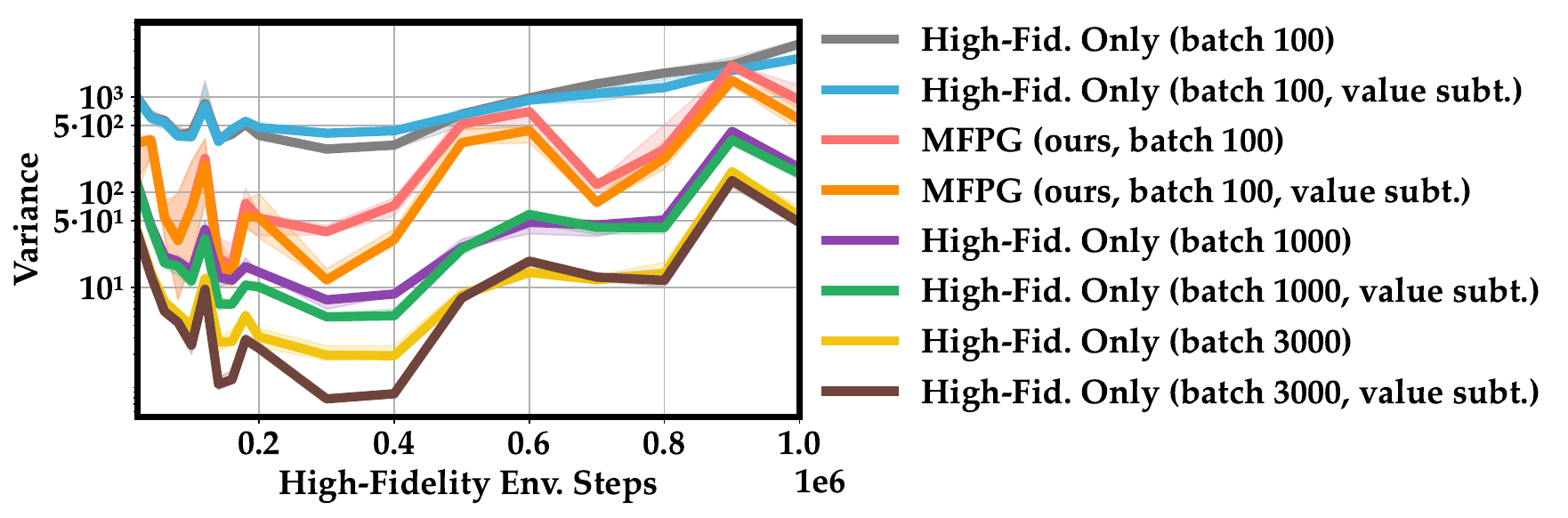}

    \includegraphics[width=0.4\linewidth]{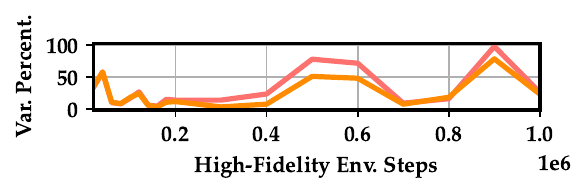}

  \caption{
Variance of policy gradient estimates for \ac{mfpg} versus variants of the High-Fidelity Only baseline on Hopper-v3 (top), and the percentage of \ac{mfpg} variance relative to single-fidelity counterparts of the same batch size (bottom). \revised{Solid lines denote the median across seeds, and shaded regions indicate the corresponding maximum and minimum values.} 
In high-fidelity data-scarce regimes with mild dynamics gaps, \ac{mfpg} generally exhibits far lower \ac{pg} variance than the single-fidelity counterparts. 
}
  \label{fig:var-reinforce-hopper}
\end{figure}

As shown in \Cref{fig:var-reinforce-hopper}, in data-scarce regimes, \ac{mfpg} can substantially reduce the variance of policy gradient estimators compared to single-fidelity baselines at most training checkpoints. This variance reduction is generally far greater than that achieved by standard state-value subtraction, narrowing the gap to baselines that rely on significantly more high-fidelity samples (e.g., $10\times$).

\subsection{\revised{Baseline} Comparison Across Dynamics Variation Types and Levels}\label{sec:odrl-eval}

This section reports the evaluation of \ac{mfpg} against baseline methods under the task settings described in \cref{sec:experiment-setup} and supports our key insight \textbf{I2}.

\revised{\cref{fig:main-odrl-results-diff-final-return} presents the mean \emph{difference} in final evaluation return of each method \emph{relative to} the High-Fidelity Only baseline, measured as the difference in mean final evaluation return $\Delta=\text{mean}_{\text{method}}-\text{mean}_{\text{High-Fidelity Only}}$ (so $0$ corresponds to High-Fidelity Only), on 15 representative task settings, with error bars indicating two-sided $95\%$ bootstrap confidence intervals for $\Delta$.}
Each row corresponds to one task, with the high-fidelity environment varying across multiple dynamics levels. 
\revised{For the simpler Hopper-gravity and Walker2d-friction tasks,} \revised{abundant low-fidelity data tend to be neutral or beneficial and} training is relatively stable, \revised{so the final evaluation return and \ac{auc} metrics are generally consistent}; thus, we report only the final evaluation return. 
In contrast, \revised{the HalfCheetah-friction task is more challenging and low-fidelity data tend to be harmful:} training \revised{exhibits} greater fluctuations that occasionally reduce reward performance (baselines fluctuate far more substantially than \ac{mfpg}). 
To capture accumulated learning performance more comprehensively in these settings, we report both the final return (\revised{\cref{fig:main-odrl-results-diff-final-return}}, row 3) and the \ac{auc} metric (\revised{\cref{fig:main-odrl-diff-results-auc}}). \revised{\revised{We compute and visualize the \ac{auc} results analogously, i.e., as differences relative to High-Fidelity Only with two-sided $95\%$ bootstrap confidence intervals.}
}

\begin{figure}
    \centering
        \centering
        \includegraphics[width=\linewidth]{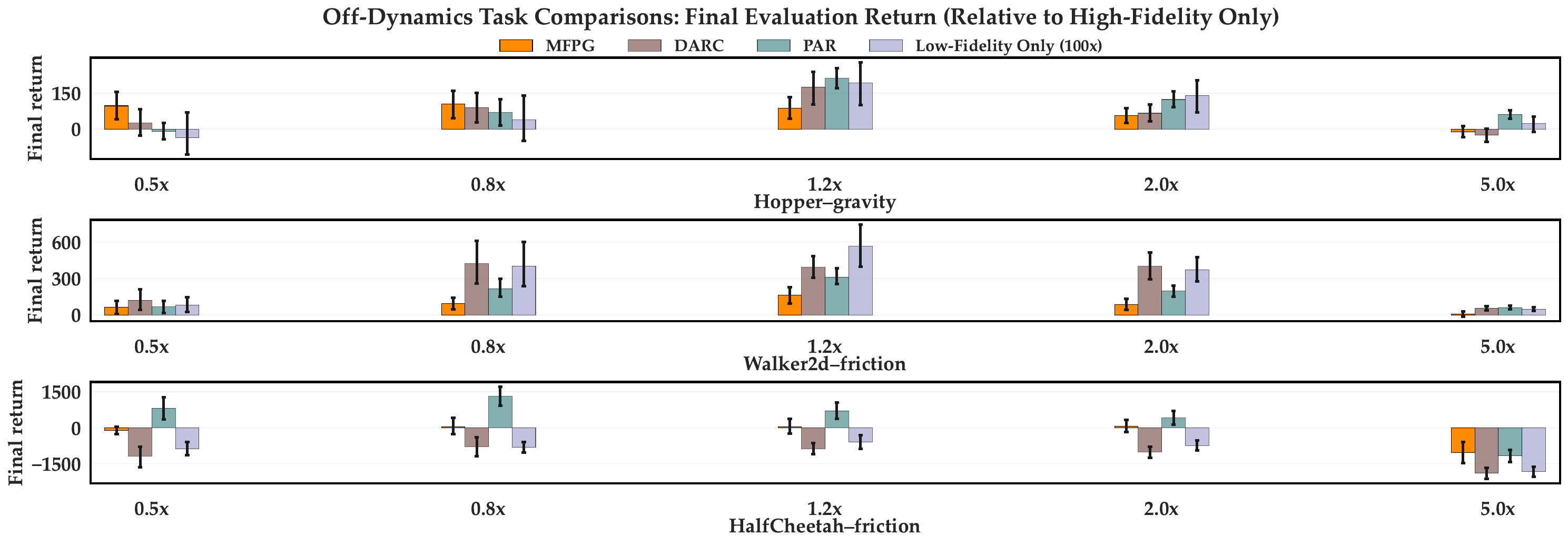}
        \caption{
\revised{
Mean difference in final evaluation return for each method relative to High-Fidelity Only, computed as
$\Delta=\text{mean}_{\text{method}}-\text{mean}_{\text{High-Fidelity Only}}$ ($0$ corresponds to High-Fidelity Only).
Bars show the bootstrap point estimate of $\Delta$, and error bars denote two-sided $95\%$ bootstrap confidence intervals for~$\Delta$.
When low-fidelity data are neutral or beneficial (rows 1-2) and dynamics gaps are mild--moderate (0.5$\times$ to 2.0$\times$), \ac{mfpg} is \emph{the only method} among the evaluated approaches that consistently attains statistically significant improvements over High-Fidelity Only.
Strong Off-dynamics \ac{rl} baselines tend to exploit low-fidelity data aggressively, which can occasionally yield high performance (\eg, in Walker2d) but can also degrade performance substantially more severely than \ac{mfpg} when low-fidelity data become harmful; cf.\ row~3 and \cref{fig:main-odrl-diff-results-auc}.
}
        }
        \label{fig:main-odrl-results-diff-final-return}
\end{figure}
\begin{figure}
        \centering
        \includegraphics[width=\linewidth]{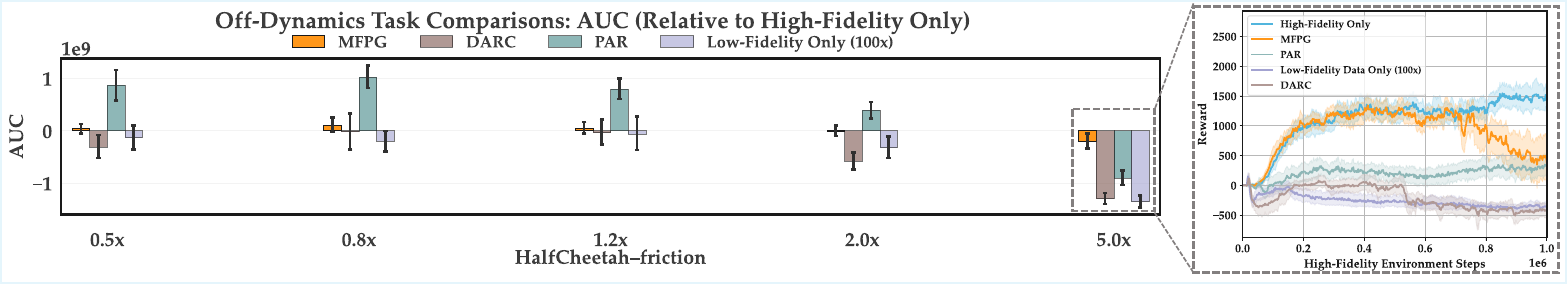}
        \caption{
\revised{
Mean difference in \ac{auc} for each method in HalfCheetah--friction relative to High-Fidelity Only, computed analogously to \cref{fig:main-odrl-results-diff-final-return} ($0$ corresponds to High-Fidelity Only).
Bars show the bootstrap point estimate of the mean difference, and error bars denote two-sided $95\%$ bootstrap confidence intervals.
When low-fidelity data become harmful, \ac{mfpg} exhibits the strongest robustness to performance degradation among the evaluated methods.
}
        }
        \label{fig:main-odrl-diff-results-auc}
\end{figure}

\textbf{Hopper and Walker2d tasks.}
\revised{As shown in \cref{fig:main-odrl-results-diff-final-return} (rows 1-2), in these two tasks, abundant low-fidelity data generally tend to be neutral or beneficial: across all settings, the confidence intervals of the Low-Fidelity Only baseline are never strictly below~0. \ac{mfpg} is the only evaluated method that reliably and consistently achieves statistically significant improvements when the dynamics gap is mild to moderate (0.5× to 2.0×)—\ie, with error bars strictly above~0 in all 8/8 such scenarios—by leveraging low-fidelity data and cross-fidelity correlation to complement the limited high-fidelity data.

No other \revised{off-dynamics \ac{rl}} or low-fidelity-only baseline achieves this level of consistency. For instance, in Hopper-v3 with a 0.5× gravity shift, the Low-Fidelity Only baseline shows neutral performance relative to High-Fidelity Only. Consequently, the evaluated off-dynamics \ac{rl} baselines fail to significantly improve over High-Fidelity Only, whereas \ac{mfpg} effectively combines high- and low-fidelity data, leveraging cross-fidelity correlations to achieve a statistically significant performance gain over the High-Fidelity Only baseline.

When low-fidelity data tend to be strongly beneficial, \eg, in Hopper-v3 with a 1.2× gravity shift and Walker2d-v3 with 0.8×–2.0× friction shifts, the evaluated off-dynamics \ac{rl} baselines aggressively exploit the abundant low-fidelity data to boost performance, achieving significantly higher mean returns than High-Fidelity Only. By contrast, \ac{mfpg} behaves more cautiously, providing reliable and consistent—yet still statistically significant—improvements over High-Fidelity Only.

We emphasize that this behavior of \ac{mfpg} is \textbf{by design}: \ac{mfpg} uses low-fidelity data solely as a variance-reduction tool, deliberately trading potentially aggressive (but risky) gains for reliability. This design choice underlies the consistency observed here and the strongest robustness that \ac{mfpg} exhibits relative to the other evaluated off-dynamics \ac{rl} methods in the HalfCheetah-v3 task presented below.

In extreme-shift scenarios, \ie, 5.0× dynamics shifts, cross-fidelity correlation drops sharply and \ac{mfpg} performance becomes effectively neutral relative to High-Fidelity Only.

Another interesting observation is that, even when it improves over High-Fidelity Only, the Low-Fidelity Only baseline tends to exhibit high cross-seed variance, as reflected by its long error bars in Hopper-v3 and Walker2d-v3 with 0.8×–2.0× dynamics shifts in the box plots shown in \cref{fig:odrl-box-results-final-return,fig:odrl-box-results-auc}. Moreover, whether Low-Fidelity Only significantly improves over High-Fidelity Only does not correlate monotonically with the dynamics gap; for example, low-fidelity data appear more beneficial in Hopper-v3 with a 2.0× gravity shift than with a 0.5× shift. This highlights both the inherent difficulty of predicting when zero-shot transfer will be effective in practice and the importance of having reliable multi-fidelity approaches.}

\textbf{HalfCheetah task.} \revised{As shown in \cref{fig:main-odrl-results-diff-final-return} (row 3) and \cref{fig:main-odrl-diff-results-auc}, in this task, low-fidelity data tend to be harmful: DARC and Low-Fidelity Only consistently and significantly degrade mean final return performance over High-Fidelity Only across mild-to-moderate gaps (0.5× to 2.0×), whereas PAR still manages to improve significantly over High-Fidelity Only and achieves the highest performance among the evaluated methods in these scenarios. However, PAR’s performance gains come at the cost of \emph{computationally expensive, per-task} hyperparameter tuning. We empirically observed that PAR is highly sensitive to the reward-augmentation weight~$\beta$; see \cref{fig:par-beta-sensitivity} in \cref{sec:appendix-implementation-details}. Even with such per-task tuning, PAR’s performance often exhibits large cross-seed variance with low minima (cf.\ 0.5× friction in \cref{fig:odrl-box-results-auc} and several additional cases in \cref{fig:complete-odrl-results-final-return,fig:complete-odrl-results-auc}).

By contrast, \ac{mfpg} does not require such expensive, careful tuning; thanks to its algorithmic simplicity, we only tuned the exponential moving-average weight~$\eta_{\text{ma}}$ in \cref{eq:ema-update} during our experiments. Across mild-to-moderate gaps, \ac{mfpg} is \emph{robust} to harmful low-fidelity data and remains effectively neutral with respect to High-Fidelity Only.

At an extreme dynamics shift, \ie, 5.0× friction, the performance of DARC, PAR, Low-Fidelity Only all drops sharply, with the confidence intervals for \emph{both} mean final return and AUC \emph{far below}~0. In this scenario, we observe that the mean return curve of \ac{mfpg} (cf.\ \cref{fig:main-odrl-diff-results-auc}) initially closely tracks that of High-Fidelity Only before exhibiting some instability, resulting in a lower mean final return but relatively strong accumulated performance (\ie, \ac{auc}). By contrast, all other baselines struggle to achieve meaningful performance throughout training, suggesting that \ac{mfpg} is the most robust among the evaluated off-dynamics \ac{rl} and low-fidelity-only methods in this setting.

The observed instability of \ac{mfpg} may stem from noise in the estimated control-variate coefficient~$\hat{c}^*$, and designing more reliable estimation schemes constitutes an interesting direction for future work.
}

\textbf{Robustness.}
\revised{The trends in \cref{fig:main-odrl-results-diff-final-return,fig:main-odrl-diff-results-auc} indicate that \ac{mfpg} is the most robust among the evaluated off-dynamics \ac{rl} and low-fidelity-only methods. Quantitatively, across the 15 scenarios in \cref{fig:main-odrl-results-diff-final-return,fig:main-odrl-diff-results-auc}, the numbers of performance collapses (final return / \ac{auc}) are: \textbf{\ac{mfpg} (1 / 0)}, PAR (1 / 1), Low-Fidelity Only (100$\times$) (5 / 3), and DARC (5 / 3). Across the additional 24 settings in \cref{sec:appendix-complete-results}, the corresponding collapse counts (final return / \ac{auc}) are: \textbf{\ac{mfpg} (1 / 0)}, PAR (3 / 1), Low-Fidelity Only (100$\times$) (8 / 7), and DARC (7 / 5).}

\revised{\emph{Taken together, these results support our key insight \textbf{I2} regarding the \textbf{reliability} and \textbf{robustness} of \ac{mfpg}.}}

\textbf{\Ac{mfpg} performance vs. correlation.} We examine one task in detail to illustrate the qualitative behavior of \ac{mfpg} under varying dynamics gaps. \Cref{fig:hopper-qualitative} presents evaluation-return curves of \ac{mfpg} and High-Fidelity Only in the Hopper task with three levels of gravity variation (0.8×, 2.0×, and 5.0×), along with the estimated Pearson correlation coefficients between high- and low-fidelity policy gradient losses throughout training. \revised{Solid lines indicate bootstrap point estimates (means), and shaded regions denote two-sided 95\% bootstrap confidence intervals.} 
\revised{
As the dynamics shift increases, the cross-fidelity correlation decreases (high for 0.8$\times$, moderate for 2.0$\times$, and near-zero for 5.0$\times$). 
Consistent with this, the gap between \ac{mfpg}'s mean return and that of High-Fidelity Only is largest when the correlation is high (0.8$\times$) and smaller when the correlation is moderate (2.0$\times$); in both cases, \ac{mfpg} trends above High-Fidelity Only for much of training, although the bootstrap confidence intervals overlap during some portions. At the extreme 5.0$\times$ shift—where the correlation collapses—the two methods exhibit very similar learning curves, with largely overlapping bootstrap confidence intervals.
}
\revised{
\Cref{fig:hopper-qualitative} shows both algorithms’ performance for completeness. \Cref{fig:hopper-qualitative-diff-to-hf} further reports the mean difference in evaluation return between \ac{mfpg} and High-Fidelity Only, computed as $\Delta=\text{mean}_{\text{method}}-\text{mean}_{\text{High-Fidelity Only}}$ ( $\Delta =0$ indicates no difference). For mild (0.8×) and moderate (2.0×) dynamics gaps, the confidence intervals are often above zero, suggesting improved performance, with larger gains typically observed in the mild-gap case. Under the extreme shift (5.0×), the confidence intervals largely span zero. The \ac{auc} results in \cref{fig:complete_auc_diff_to_hf} are consistent with these trends.
}

\begin{figure}[h]
    \centering
    \includegraphics[width=0.86\linewidth]{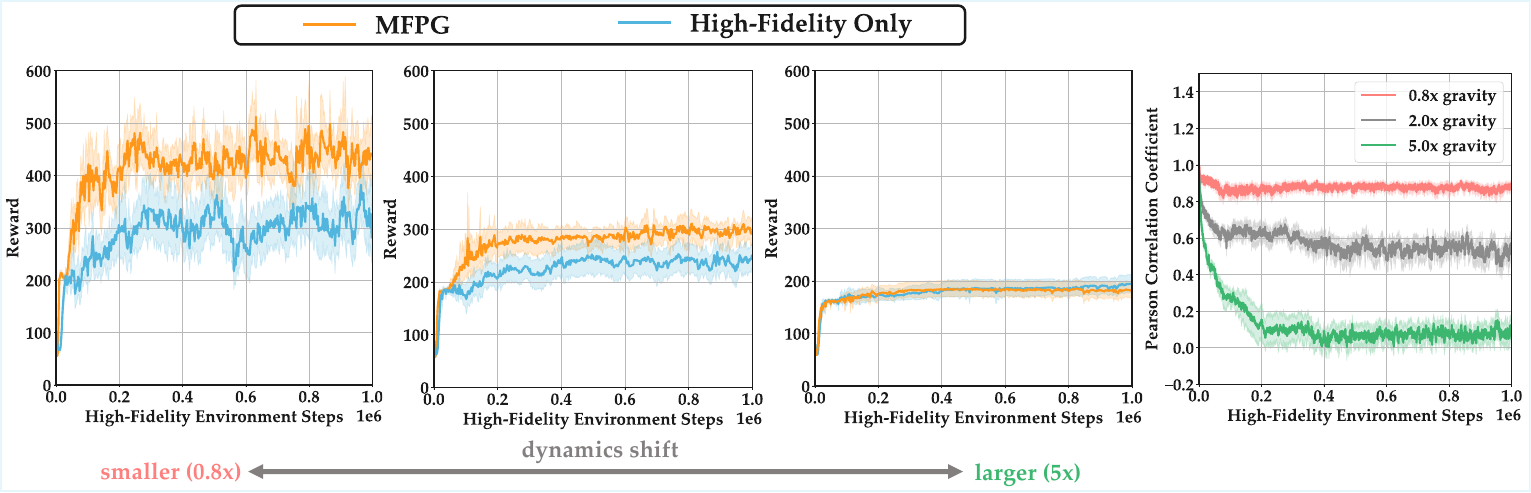}
    \caption{
Evaluation return curves of \ac{mfpg} versus the High-Fidelity Only baseline on Hopper-v3, corresponding to \cref{fig:main-odrl-results-diff-final-return}, under mild, moderate, and large gravity variations (left to right: 0.8×, 2.0×, 5.0×), together with estimated Pearson correlation coefficients between high- and low-fidelity policy gradient losses. \revised{Solid lines indicate bootstrap point estimates (means), and shaded regions denote two-sided 95\% bootstrap confidence intervals.} 
\revised{Consistent with the correlation estimates, the gap between \ac{mfpg}'s mean return and that of High-Fidelity Only is more pronounced when cross-fidelity correlation is stronger.}
}
    \label{fig:hopper-qualitative}
\end{figure}

\subsection{Low-Fidelity Environment with a Misspecified Reward}\label{sec:negative-reward}

This experiment is designed to support our key insight \textbf{I3}, that when high-fidelity samples are scarce, \ac{mfpg} can remain effective even in the presence of reward misspecification in the low-fidelity samples\revised{, \eg, when the low-fidelity reward is the negative (anti-correlated) version of the high-fidelity reward.}

As discussed in \cref{sec:approach}, so long as there is a statistical relationship between the random variables of interest  \(\RV_{\traj^{h}}^{\policy_{\policyparams}}\) and \(\RV_{\traj^{l}}^{\policy_{\policyparams}}\) (\ie, \(\rho^{2}(\RV_{\traj^{h}}^{\policy_{\policyparams}}, \RV_{\traj^{l}}^{\policy_{\policyparams}})\) is non-negligible), the \ac{mfpg} framework will reduce the variance of the policy gradient estimates (with respect to the high-fidelity environment).

To demonstrate this point, we examine a situation in which the reward function in the low-fidelity environment is the negative of that from the high-fidelity environment. 
\Cref{fig:hopper-negative-reward} reports the \revised{evaluation return} of \ac{mfpg} in comparison to the High-Fidelity Only and Low-Fidelity Only baselines. 
\revised{Solid lines indicate bootstrap point estimates (means), and shaded regions denote two-sided 95\% bootstrap confidence intervals.} 
All approaches in this example use the plain REINFORCE algorithm without state-value subtraction.
The core idea extends to the case with value-function subtraction, except that separate value functions must be learned for the multi-fidelity environments because of the misspecified reward.

As expected, the Low-Fidelity Only baseline is entirely ineffective under the drastically misspecified reward: instead of learning to move forward, the agent learns to end the episode as quickly as possible. 
Meanwhile, the High-Fidelity Only baseline makes limited progress due to the scarcity of high-fidelity samples.
In contrast, \ac{mfpg} effectively combines the two data sources and achieves substantially better performance than both baselines.

Intuitively, although the values of \(\RV_{\traj^{l}}^{\policy_{\policyparams}}\) are entirely different from those of \(\RV_{\traj^{h}}^{\policy_{\policyparams}}\), they are negatively correlated in this example.
The \ac{mfpg} algorithm takes advantage of this relationship to compute a correction for the low-fidelity estimator \(\hat{\EV}^{l}\), using only a small number of high-fidelity samples.
Although this is an extreme example in the sense that the low- and high-fidelity tasks are polar opposites of each other, it highlights a useful feature of the \ac{mfpg} framework: the low-fidelity rewards and dynamics might be very different from the target environment of interest (making direct sim-to-real transfer infeasible), and yet still provide useful information for multi-fidelity policy training.

\revised{\emph{These results support our key insight \textbf{I3}. Taken together, our experiments validate all three key insights (\textbf{I1}--\textbf{I3}).}}

\begin{figure}[htbp]
    \centering
    \begin{minipage}{0.43\linewidth}
        \centering
        \includegraphics[width=\linewidth]{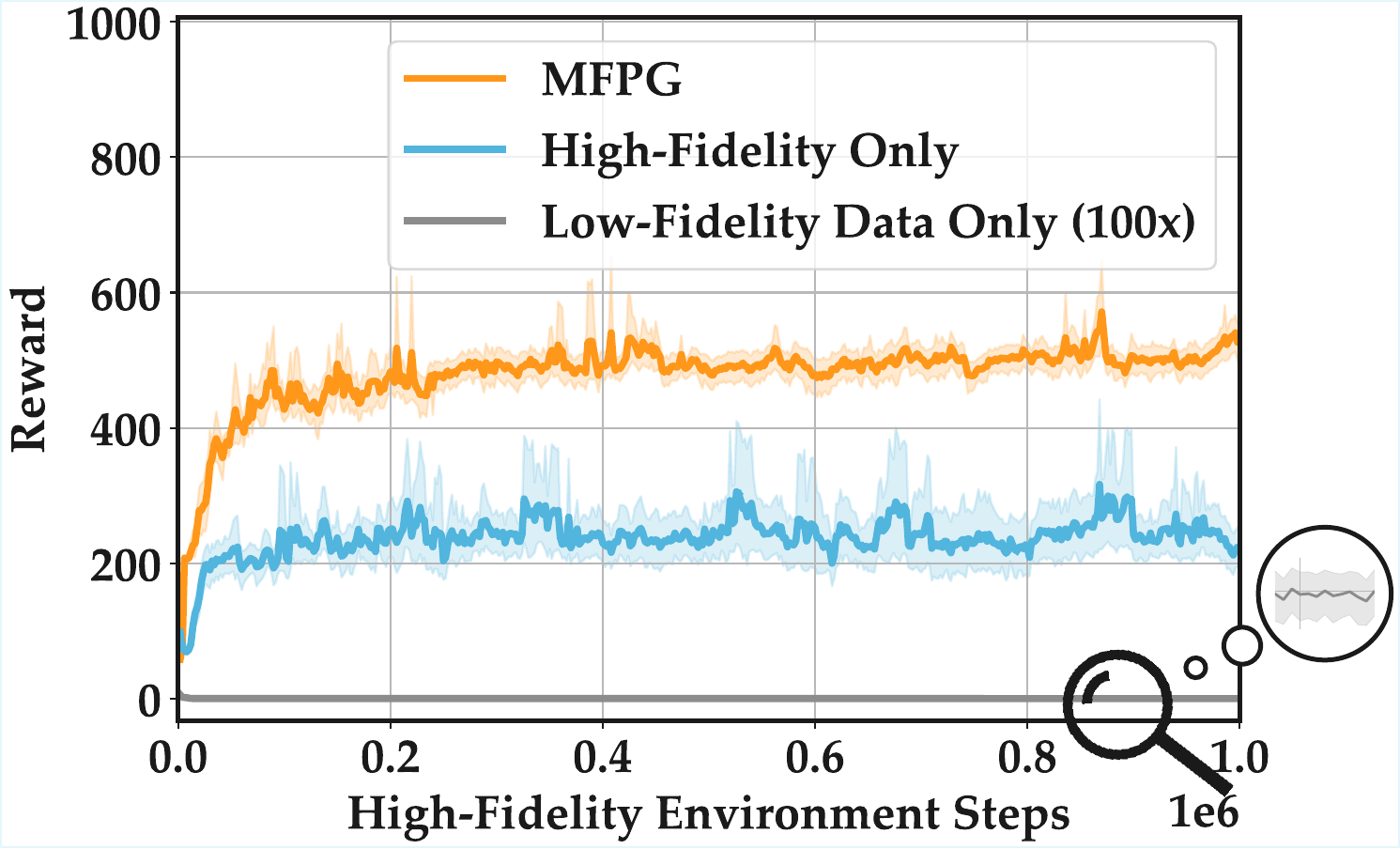}
        \caption{
\revised{Evaluation return curves of} \ac{mfpg}, High-Fidelity Only, and Low-Fidelity Only on Hopper-v3, where the low-fidelity environment is misspecified with a negated reward. 
\revised{Solid lines indicate bootstrap point estimates (means), and shaded regions denote two-sided 95\% bootstrap confidence intervals.} 
Although low-fidelity data alone is ineffective for training, \ac{mfpg} successfully leverages both fidelities to achieve substantial improvements over High-Fidelity Only in high-fidelity data-scarce regimes.
        }
        \label{fig:hopper-negative-reward}
    \end{minipage}
    \hfill
    \begin{minipage}{0.44\linewidth}
        \centering
        \includegraphics[width=\linewidth]{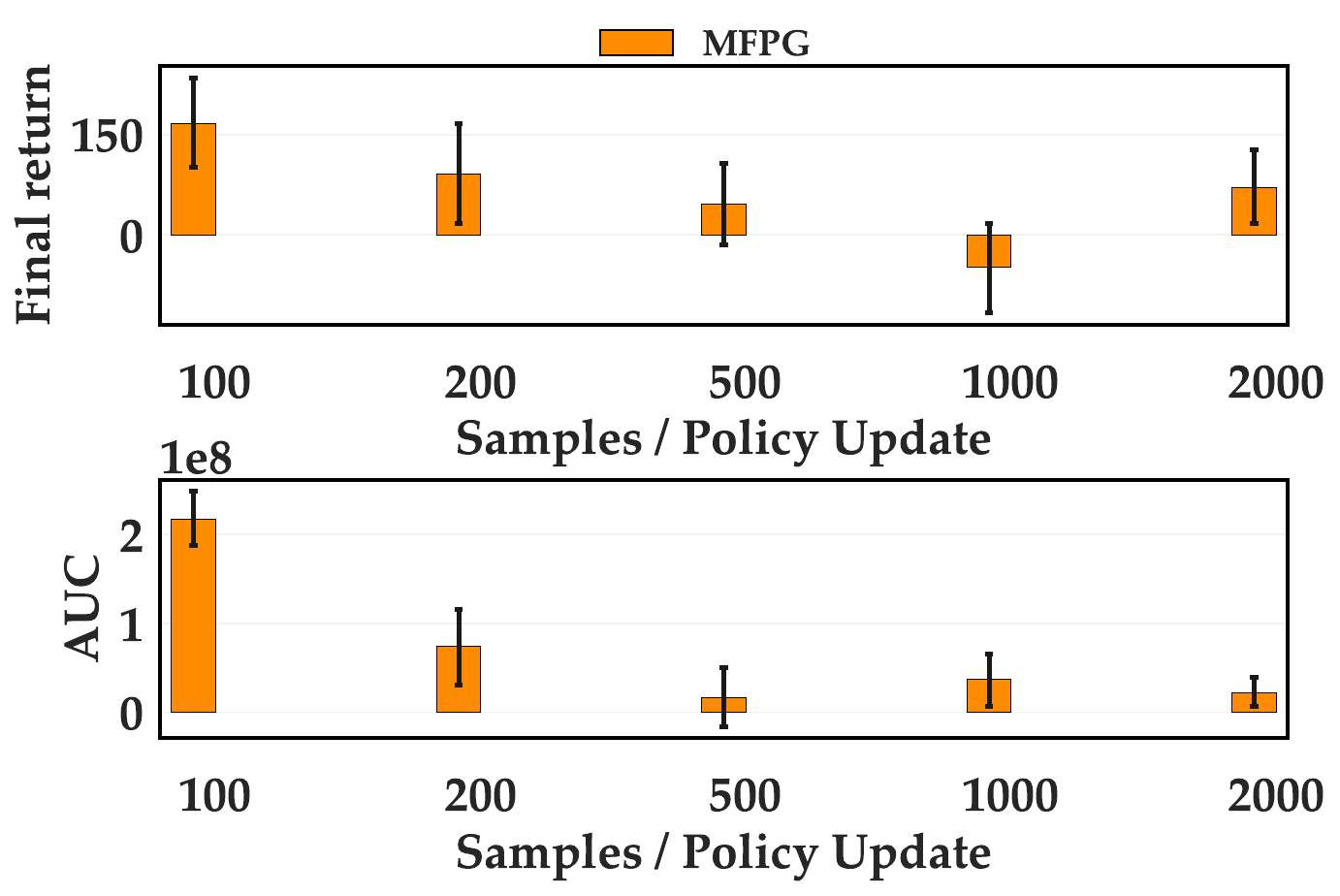}
        \caption{
\revised{Mean difference in} final return (top) and \ac{auc} (bottom) \revised{between} \ac{mfpg} \revised{and} High-Fidelity Only on Hopper-v3 with 1.2$\times$ friction, across varying high-fidelity batch sizes (i.e., samples per update). \revised{Zero corresponds to High-Fidelity Only. Error bars indicate two-sided 95\% bootstrap confidence intervals for the mean difference. \ac{mfpg} provides larger benefits over High-Fidelity Only in high-fidelity sample-scarce regimes.}
}
        \label{fig:ablation-buffer-size}
    \end{minipage}
\end{figure}

\subsection{Additional Experiments}\label{eq:auxiliary-exp}

This section presents additional experiments that provide a more detailed examination of different aspects of the \ac{mfpg} framework. \revised{For brevity in the main text, \Cref{sec:appendix-additional-ablations} presents additional ablation results.}

\paragraph{\ac{mfpg} performance with different amounts of high-fidelity samples.}
Our main evaluation in \cref{sec:odrl-eval} focuses on the high-fidelity data-scarce regime, which reflects many real-world scenarios where collecting large quantities of high-fidelity samples for each policy update is prohibitively expensive (\eg, in robotics or molecule design). \revised{This} section \revised{studies} how \ac{mfpg} compares to the High-Fidelity Only baseline as the number of high-fidelity samples per policy update varies.

\Cref{fig:ablation-buffer-size} reports \revised{the mean difference in} final return (top) and \ac{auc} (bottom) \revised{between \ac{mfpg} and High-Fidelity Only} in the Hopper task with 1.2$\times$ friction. \revised{Methods are} trained for 1 million steps\revised{, comparing \ac{mfpg} with 100$\times$ low-fidelity data against High-Fidelity Only} across different batch sizes (i.e., \revised{numbers of} high-fidelity samples per update). \revised{Error bars indicate two-sided 95\% bootstrap confidence intervals for the mean difference.}
Note that with a fixed total number of high-fidelity environment steps, the number of gradient updates varies across batch sizes; consequently, the metrics \revised{for High-Fidelity Only} do not necessarily increase monotonically with larger batch sizes. 
\revised{
In high-fidelity data-scarce regimes, \eg, batch sizes of 100 and 200, \ac{mfpg} yields statistically significant improvements over High-Fidelity Only with relatively large mean differences, and the margin decreases as the batch size increases. When high-fidelity samples are abundant, \eg, batch sizes above 500, \ac{mfpg} is largely neutral relative to High-Fidelity Only (though it still shows a small but statistically significant improvement at a batch size of 2000).
}
These results highlight that \ac{mfpg} is most effective in high-fidelity data-scarce settings.

\begin{figure}[h]
    \centering
    \begin{minipage}{0.4\linewidth}
        \centering
        \includegraphics[width=\linewidth]{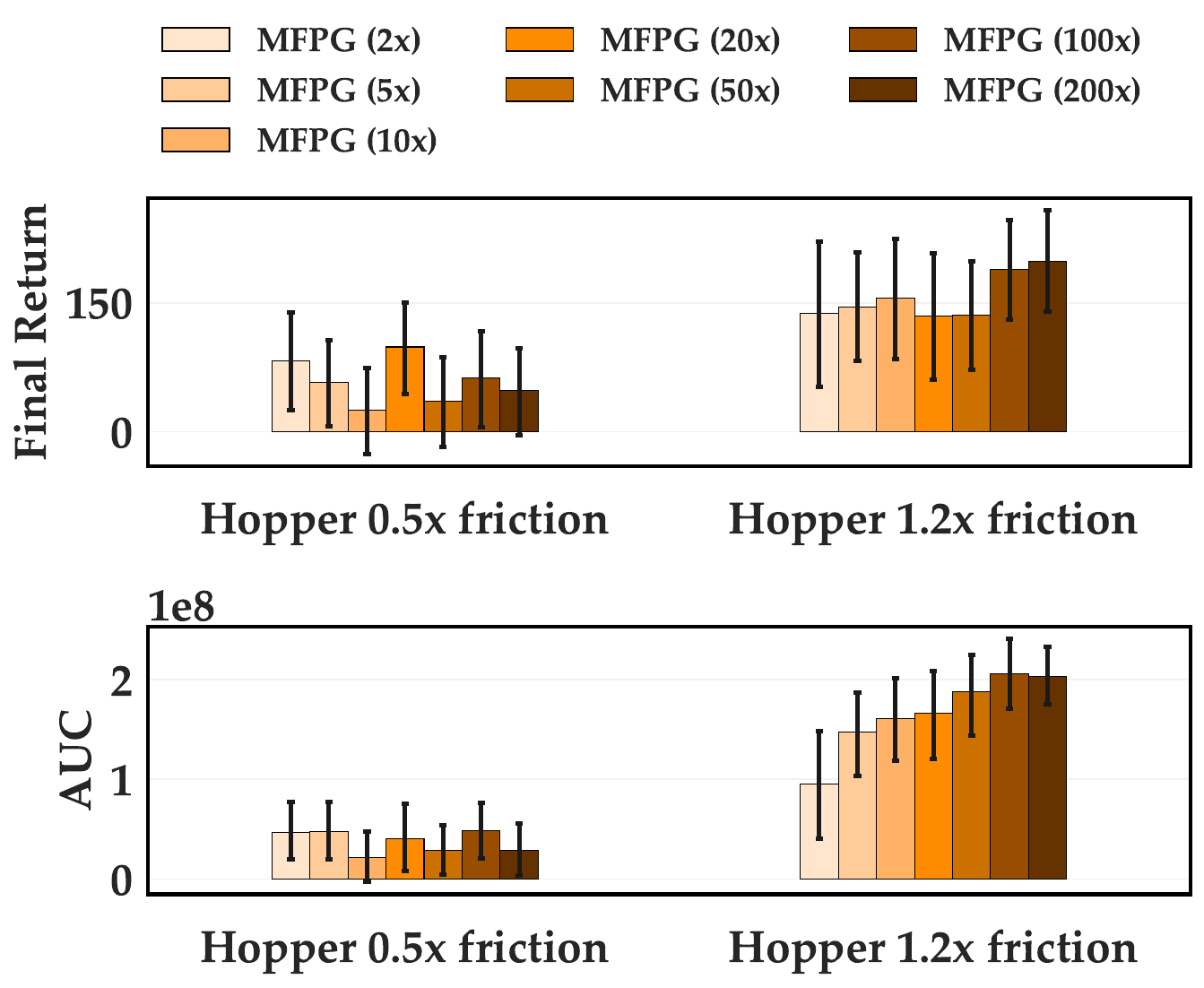}
        \caption{
\revised{Mean difference in} final return (top) and AUC (bottom) of \ac{mfpg} with increasing amounts of low-fidelity data (light to dark orange) \revised{relative to} High-Fidelity Only on Hopper-v3 with 0.5× and 1.2× friction \revised{(0 corresponds to High-Fidelity Only)}. \revised{Error bars indicate two-sided 95\% bootstrap confidence intervals for the mean differences.} 
\revised{\ac{mfpg} is relatively robust to the amount of low-fidelity data: across the tested range of low-fidelity sample sizes, we do not observe statistically significant degradation relative to High-Fidelity Only under our significance test, and \ac{mfpg} is often statistically significantly better.}
        }
        \label{fig:ablation-low-fid-data}
    \end{minipage}
    \hfill
    \begin{minipage}{0.46\linewidth}
        \centering
        \includegraphics[width=\linewidth]{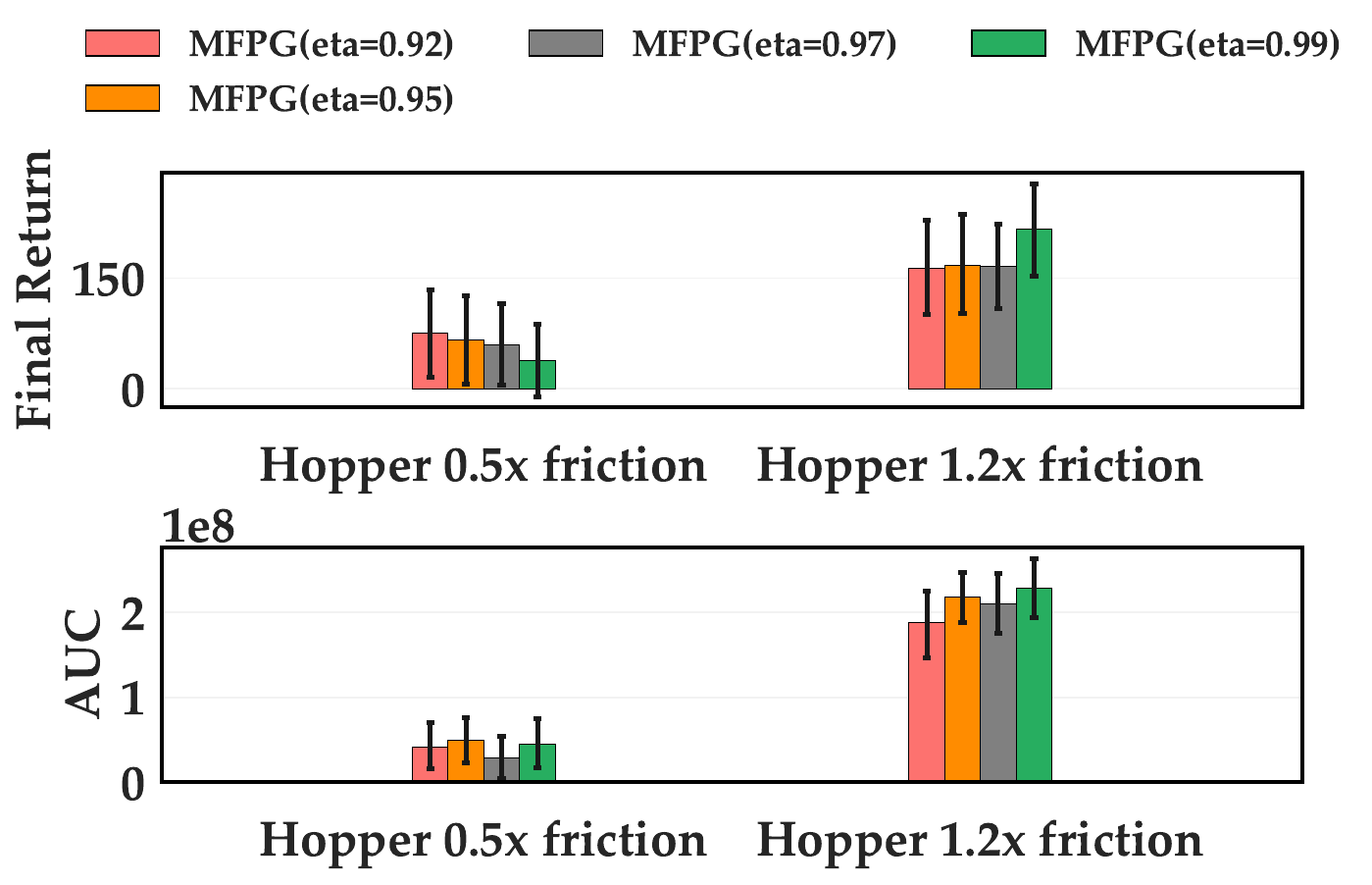}
        \caption{
\revised{Mean difference in}
final return (top) and AUC (bottom) of \ac{mfpg} with varying exponential moving-average weights ($\eta_{\text{ma}}$ in \cref{eq:ema-update}) \revised{relative to} High-Fidelity Only \revised{(0 corresponds to High-Fidelity Only)} on Hopper-v3 with 0.5× and 1.2× friction. \revised{Error bars indicate two-sided 95\% bootstrap confidence intervals for the mean differences.}
\revised{Across the tested values of $\eta_{\text{ma}}$, \ac{mfpg} is often statistically significantly better than High-Fidelity Only, and we do not observe statistically significant degradation.}
}
        \label{fig:ablation-mving-avg}
    \end{minipage}
\end{figure}

\paragraph{\ac{mfpg} performance with different amounts of low-fidelity samples.}
We assume throughout this work that low-fidelity samples are cheap to collect, and all methods are given access to an abundant budget of low-fidelity data. Consequently, optimizing the efficiency of low-fidelity sample usage~\citep{cutler2015real} is not the focus of this work. 
Instead, this section qualitatively examines how the performance of \ac{mfpg} varies with different amounts of low-fidelity data.

In principle, the performance of \ac{mfpg} is robust with respect to the amount of low-fidelity samples, so long as the low-fidelity sample amount is sufficient to estimate the low-fidelity sample mean~$\hat{\mu}^l$. 
\Cref{fig:ablation-low-fid-data} reports \revised{mean differences in} final return (top) and \ac{auc} (bottom) of \ac{mfpg} \revised{relative to High-Fidelity Only (0 corresponds to High-Fidelity Only)} with increasing amounts of low-fidelity samples (indicated from light to dark orange) for estimating low-fidelity sample mean ($\hat{\mu}^{l}$ in \cref{fig:intro_flowchart}), in Hopper with 0.5× and 1.2× friction. \revised{Error bars indicate two-sided 95\% bootstrap confidence intervals.}

At a high level, \revised{
\ac{mfpg} is relatively robust to the amount of low-fidelity data: in our experiment, varying the low-fidelity sample amount over a wide range does not lead to performance that is statistically significantly worse than High-Fidelity Only in both settings, and \ac{mfpg} is often statistically significantly better.
}
When the dynamics gap is mild (\eg, 1.2× friction), \revised{for the \ac{auc} metric,} increasing the number of low-fidelity samples tends to yield \revised{a} higher \revised{bootstrap mean improvement in \ac{auc} over High-Fidelity Only. In particular, the confidence intervals for 100$\times$ and 200$\times$ low-fidelity data lie clearly above those for 2$\times$ low-fidelity data, suggesting a significantly faster learning process.}
However, under larger dynamics gaps, no clear monotonic trend is observed. Overall, additional low-fidelity samples are \revised{more} beneficial in mild-gap settings.

\paragraph{Sensitivity to exponential moving-average weight.}
We further examine the sensitivity of \ac{mfpg} to the choice of the exponential moving-average weight $\eta_{\text{ma}}$ in \cref{eq:ema-update}. \Cref{fig:ablation-mving-avg} reports \revised{the mean differences in} final return and \ac{auc} \revised{between} \ac{mfpg} \revised{(}with different $\eta_{\text{ma}}$ values\revised{)} \revised{and High-Fidelity Only (0 corresponds to High-Fidelity Only)} in the Hopper task under 0.5× and 1.2× friction. \revised{Error bars indicate two-sided 95\% bootstrap confidence intervals for the mean differences.} 
Overall, 
\revised{across the tested values of $\eta_{\text{ma}}$, \ac{mfpg} is often statistically significantly better than High-Fidelity Only, and we do not observe statistically significant degradation.}
\revised{
The mean performance improvement of \ac{mfpg} over High-Fidelity Only is generally more substantial
} in settings with smaller dynamics gaps—consistent with the key insights from our baseline comparison in~\cref{sec:odrl-eval}.

\section{Conclusions}
\label{sec:conclusion}

We present a multi-fidelity policy gradient (MFPG) framework, which mixes a small amount of potentially expensive high-fidelity data with a larger volume of cheap lower-fidelity data to construct unbiased, variance-reduced estimators for on-policy policy gradients. 
We instantiate this framework through a practical, multi-fidelity variant of the standard REINFORCE algorithm. 
Theoretically, we show that under standard assumptions, the \ac{mfpg} estimator guarantees asymptotic convergence of multi-fidelity REINFORCE to locally optimal policies (up to first order optimality conditions) in the target high-fidelity environment of interest. 
Moreover, whenever the statistical correlation between high- and low-fidelity samples is nontrivial, \ac{mfpg} REINFORCE achieves faster finite-sample convergence than its single-fidelity counterpart. 
Empirical evaluation of \ac{mfpg} in high-fidelity data-scarce regimes across a suite of simulated robotics benchmark tasks with varied multi-fidelity transition dynamics highlights both its \revised{\emph{reliability}} and \emph{robustness}. 
\revised{
In our baseline comparison experiments, supported by a 20-seed statistical analysis, for scenarios where low-fidelity data are neutral or beneficial and dynamics gaps are mild to moderate, \ac{mfpg} is, among the evaluated off-dynamics \ac{rl} and low-fidelity-only approaches, \emph{the only} method that consistently achieves statistically significant improvements in mean performance over a baseline trained solely on high-fidelity data. When low-fidelity data become harmful, \ac{mfpg} exhibits the strongest robustness against performance degradation among the evaluated methods. By contrast, strong off-dynamics \ac{rl} methods tend to exploit low-fidelity data more aggressively, and can achieve high performance at times, but can also fail substantially more severely.
}
Furthermore, an additional experiment \revised{in which the high- and low-fidelity environments are assigned anti-correlated rewards} demonstrates that \ac{mfpg} can remain effective even in the presence of reward misspecification in the low-fidelity \revised{environment}.

In summary, the proposed framework offers a \revised{reliable and robust} paradigm for leveraging multi-fidelity data in reinforcement learning, providing promising directions for efficient sim-to-real transfer and principled approaches to managing the trade-off between policy performance and data collection costs.

\paragraph{\revised{Limitations \& future work.}}
Future research spans several avenues. 
First, \revised{
we study the classic REINFORCE algorithm as a starting point to ease theoretical analysis.
}
Extending the \ac{mfpg} framework to modern \ac{rl} algorithms—including actor–critic, off-policy\revised{, and} offline methods—raises new questions about bias–variance trade-offs that merit deeper investigation. 
\revised{We defer a more detailed discussion of this point to \cref{sec:appendix-future-work}.}

Second, \revised{\ac{mfpg} relies on nontrivial statistical correlation between multi-fidelity samples to achieve variance reduction.} Developing systematic techniques to enhance correlation between high- and low-fidelity samples without introducing bias is critical for further performance gains. 

\revised{
Third, our analysis and implementation make several simplifying assumptions, and extending the \ac{mfpg} framework to more general settings is of great importance. These extensions include (\romannumeral 1) integrating data from an arbitrary number of environments,  each simulating the target environment with a different fidelity level and cost per sample, 
(\romannumeral 2) handling multi-fidelity \acp{mdp} with different state and action spaces, initial-state distributions, or discount factors,
(\romannumeral 3) extending \ac{mfpg} to infinite-horizon \acp{mdp},
and (\romannumeral 4) devising mechanisms to sample correlated trajectories even when low-fidelity simulators do not support resets to arbitrary user-specified initial states.
}

Finally, applying \ac{mfpg} to real-world robotic systems represents an important step toward realizing its potential in practical sim-to-real transfer.

\subsubsection*{Broader Impact Statement}
This work introduces a methodological contribution—multi-fidelity policy gradients—for improving the (high-fidelity) sample efficiency of reinforcement learning in settings with access to both high- and low-fidelity environments. We do not foresee immediate societal risks arising directly from this research. Potential downstream applications may include robotics and autonomous systems, where increased efficiency can reduce computational and data collection costs. 

\subsubsection*{Acknowledgments}

We thank Haoran Xu, Mustafa Karabag, Brett Barkley, and Jacob Levy for their helpful discussions. This work was supported in part by the National Science Foundation (NSF) under Grants 2409535 and 2214939, in part by the Defense Advanced Research Projects Agency (DARPA) under the Transfer Learning from Imprecise and Abstract Models to Autonomous Technologies (TIAMAT) program under grant number HR0011-24-9-0431, and in part by the DEVCOM Army Research Laboratory under Cooperative Agreement Numbers W911NF-23-2-0011 and W911NF-25-2-0021. The views and conclusions contained in this document are those of the authors and should not be interpreted as representing the official policies, either expressed or implied, of the U.S. Government. The U.S. Government is authorized to reproduce and distribute reprints for Government purposes notwithstanding any copyright notation herein.

\bibliography{main}
\bibliographystyle{tmlr}

\appendix

\section{Complete \ac{mfpg} REINFORCE Algorithm}\label{appendix-algorithm}

\cref{alg:mfpg} presents the complete algorithm block for the proposed multi-fidelity variant of the REINFORCE algorithm.

\begin{algorithm}
\caption{Multi-Fidelity Policy Gradients (MFPGs) - REINFORCE}
\label{alg:mfpg}

\KwIn{Initial policy \revised{\(\policy_{\policyparams_1}\)}, \revised{value network~$V_{\phi}$,} multi-fidelity environments \(\mdp^{h}\) and \(\mdp^{l}\), batch size for correlated trajectory samples $N^h$, batch size for control variate sample mean estimate $N^l$\revised{, step sizes $\{\alpha_k\}$, maximum number of iterations $N$.}}

\KwOut{Trained policy \revised{\(\policy_{\policyparams_N}\)}.}

\For{\revised{$k = 1$ \KwTo $N$}}{

\revised{
\tcp{Current policy at iteration $k$: $\policy_{\policyparams_k}(a \mid s)$}
}

Collect $N^h$ correlated trajectory samples \(\mc{D}_1 = \{\traj_{i}^{h}, \traj_{i}^{l}\}_{i=1}^{\numSamples^{h}}\) \revised{using $\policy_{\policyparams_k}$}. \tcp*[r]{See Algorithm~\ref{alg:traj_sampling}}

Collect $N^l$ additional uncorrelated low-fidelity trajectory samples $\mc{D}_2 = \{\tau_j^l\}_{j=1}^{N^l}$ \revised{using $\policy_{\policyparams_k}$}.

Compute correlated policy gradient random variables using $\mc{D}_1$:
\[
\revised{X^{\policy_{\policyparams_k}}_{\tau^h,i}} = \frac{1}{T}\sum_{t=0}^{T-1} (G^h_{i,t} - V_{\phi}(s^h_{i,t}))\,\log \revised{\pi_{\theta_k}}(a^h_{i,t}\mid s^h_{i,t}),\]
\[
\revised{X^{\policy_{\policyparams_k}}_{\tau^l,i}} = \frac{1}{T}\sum_{t=0}^{T-1} (G^l_{i,t} - V_{\phi}(s^l_{i,t})) \,\log \revised{\pi_{\theta_k}}(a^l_{i,t}\mid s^l_{i,t}).
\]

Estimate the optimal control variate coefficient~\revised{$\hat{c}_k^*$} using $\mc{D}_1$ and \revised{\cref{eq:cv-coeff}}.

Estimate the control variate sample mean using $\mc{D}_2$: $\revised{\hat{\mu}^l_k} = \frac{1}{N^l}\sum_{j=1}^{N^l} \revised{X^{\policy_{\policyparams_k}}_{\tau^l,j}}$.

Construct \revised{\ac{mfpg}} estimator: $
\revised{\nabla_{\policyparams_k}}\mathbb{E}[\RVZ^{\revised{\policy_{\policyparams_k}}}] \approx 
\revised{\widehat{g}^{\text{MFPG}}_k =} 
\frac{1}{\numSamples^{h}} \sum_{i = 1}^{N^h} \nabla_{\revised{\policyparams_k}}  \bigl[ \revised{\RV^{\policy_{\policyparams_k}}_{\traj^{h},i}} + \revised{\hat{c}_k^*} (\revised{\RV^{\policy_{\policyparams_k}}_{\traj^{l},i}} - \revised{\hat{\mu}_k^{l}})\bigr]$.

Perform policy \revised{update $\policyparams_{k+1} \leftarrow \policyparams_k + \alpha_k \, \widehat{g}^{\text{MFPG}}_k$}.

Fit the value function network~$V_{\phi}$ using high-fidelity samples in $\mc{D}_1$.

}
\Return{policy \(\policy_{\revised{\policyparams_N}}\)}\

\end{algorithm}

\section{\revised{Proof of \Cref{lemma:cv-estimator}}}
\label{sec:proof-lemma-cv-estimator}

\revised{

The proof of \cref{lemma:cv-estimator} follows standard control variate arguments~\citep[Ch.8]{mcbook} and proceeds as follows. 

\begin{proof}
We first establish unbiasedness of the control variate estimator for an arbitrary coefficient $c \in \mathbb{R}$.
Taking expectation for the defined multi-fidelity control variate random variable over the joint probability space \(\outcomeSet\) defined in~\cref{sec:approach} yields
\begin{align*}
\E_{\outcome \sim \outcomeSet}[\RVZ^{\policy_{\policyparams}}(c)] &= \E_{\outcome \sim \outcomeSet}[\RV_{\traj^{h}}^{\policy_{\policyparams}} + \controlVariateParam (\RV_{\traj^{l}}^{\policy_{\policyparams}} - \EV^{l})] \\
&= \E_{\outcome \sim \outcomeSet}[\RV_{\traj^{h}}^{\policy_{\policyparams}}] + \controlVariateParam(\underbrace{\E_{\outcome \sim \outcomeSet}[\RV_{\traj^{l}}^{\policy_{\policyparams}}] - \EV^{l}}_{=0}) \\
&= \E_{\outcome \sim \outcomeSet}[\RV_{\traj^{h}}^{\policy_{\policyparams}}] \\
&= \E_{\traj^{h} \sim \mdp^{h}(\policy_{\policyparams})}[\RV_{\traj^{h}}^{\policy_{\policyparams}}],
\end{align*}
where sampling from the joint probability space \(\outcome \sim \outcomeSet\) reduces to sampling single-fidelity trajectories \(\traj^{h} \sim \mdp^{h}(\policy_{\policyparams})\) and \(\traj^{l} \sim \mdp^{l}(\policy_{\policyparams})\) when evaluating the expectation of the corresponding single-fidelity random variables \(\RV_{\traj^{h}}^{\policy_{\policyparams}}\) and \(\RV_{\traj^{l}}^{\policy_{\policyparams}}\), respectively.

Next, we examine the variance of the multi-fidelity control variate estimator:
\begin{align*}
\variance\bigl(\RVZ^{\policy_{\policyparams}} (c)\bigr) &= 
\variance\bigl(\RV_{\traj^{h}}^{\policy_{\policyparams}} + \controlVariateParam (\RV_{\traj^{l}}^{\policy_{\policyparams}} - \EV^{l})\bigr)\\
&= \variance\bigl(\RV_{\traj^{h}}^{\policy_{\policyparams}} + \controlVariateParam \RV_{\traj^{l}}^{\policy_{\policyparams}}\bigr)\\
&= \variance(\RVtauhigh) + \controlVariateParam^2\variance(\RVtaulow) + 2\controlVariateParam\covariance(\RVtauhigh, \RVtaulow),
\end{align*}
where the second line uses the fact that adding/subtracting a constant does not change the variance of a random variable, and the third line uses the standard formula for the variance of a linear combination of random variables.
The variance $\variance\bigl(\RVZ^{\policy_{\policyparams}} (c)\bigr)$ defines a quadratic function in $\controlVariateParam$, whose minimizer can be found by setting 
\[
\nabla_{\controlVariateParam} \variance\bigl(\RVZ^{\policy_{\policyparams}} (c)\bigr) = 0 \implies \controlVariateParam^* = - \frac{\covariance(\RV_{\traj^{h}}^{\policy_{\policyparams}}, \RV_{\traj^{l}}^{\policy_{\policyparams}})}{ \variance(\RV_{\traj^{l}}^{\policy_{\policyparams}})} = -\rho(\RV_{\traj^{h}}^{\policy_{\policyparams}}, \RV_{\traj^{l}}^{\policy_{\policyparams}})\frac{\sqrt{\variance(\RV_{\traj^{h}}^{\policy_{\policyparams}})}}{ \sqrt{\variance(\RV_{\traj^{l}}^{\policy_{\policyparams}})}}.
\]
At the minimizer \(\controlVariateParam^*\), the variance of the multi-fidelity control variate estimator is
\begin{align*}
\variance\bigl(\RVZ^{\policy_{\policyparams}} (\controlVariateParam^*)\bigr) &= \variance(\RVtauhigh) + \frac{{\covariance^2(\RVtauhigh, \RVtaulow)}}{\variance(\RVtaulow)} - \frac{2{\covariance^2(\RVtauhigh, \RVtaulow)}}{\variance(\RVtaulow)}\\
&= \variance(\RVtauhigh) -\frac{\rho^2(\RV_{\traj^{h}}^{\policy_{\policyparams}}, \RV_{\traj^{l}}^{\policy_{\policyparams}})\variance(\RVtauhigh)\variance(\RVtaulow)}{\variance(\RVtaulow)} \\
&= \bigl(1-\rho^2(\RV_{\traj^{h}}^{\policy_{\policyparams}}, \RV_{\traj^{l}}^{\policy_{\policyparams}})\bigr)\variance(\RV_{\traj^{h}}^{\policy_{\policyparams}}).
\end{align*}

\end{proof}

}

\section{Proof of \Cref{theorem: finite rate}}
\label{section: proof of theory}
The proof of \Cref{theorem: finite rate} leverages the Lipschitz continuity of the policy gradient for a policy deployed in the MDP $\mathcal{M}^h$, under assumptions \Cref{assumption: bounded reward,assumption: differentiable policy,assumption: bounded score functions,assumption: lipschitz score functions}. Under these assumptions, the Lipschitzness of the policy gradient was proved for the infinite horizon setting by \cite{zhang2020global}. We follow a similar procedure to establish the Lipschitzness for the finite-horizon case, stated below.
\begin{lemma}[Adapted from \cite{zhang2020global}]\label{lemma: policy gradient lipschitz}
    Under \Cref{assumption: bounded reward,assumption: differentiable policy,assumption: bounded score functions,assumption: lipschitz score functions}, for a problem of horizon $\horizon$, the high-fidelity only policy gradient $\nabla_\policyparams J(\policyparams) = \ee_{\traj\sim\mathcal{M}^h(\policyparams)}\left[\rewardFunction(\traj)\right]$ is $\pglip$-Lipschitz, where
    \begin{align*}
       \pglip &= \frac{1-(\horizon+2)\discountfactor^{\horizon+1} + (\horizon+1)\discountfactor^{\horizon+2}}{(1-\discountfactor)^2} \cdot \rwbound^h \liptheta \\
        &+\frac{\left(1 - (\horizon+2)^2\discountfactor^{\horizon+1}+(\horizon+3)(\horizon+1)\discountfactor^{\horizon+2}\right)(1-\discountfactor) + 2\discountfactor\left(1-(\horizon+2)\discountfactor^{\horizon+1} + (\horizon+1)\discountfactor^{\horizon+2}\right)}{(1-\discountfactor)^3}\cdot\rwbound^h\scorebd^2.
    \end{align*}
\end{lemma}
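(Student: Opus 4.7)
The plan is to adapt the infinite-horizon policy gradient smoothness argument of \cite{zhang2020global} to the finite-horizon REINFORCE setting. The conceptual change is only that each infinite geometric series arising in the infinite-horizon bound must be replaced by its truncation to $\horizon$ terms, so that the $(1-\discountfactor)^{-2}$ and $(1-\discountfactor)^{-3}$ coefficients in the infinite-horizon Lipschitz constant become the displayed rational functions of $\discountfactor$ and $\horizon$. The assumption stack (\Cref{assumption: bounded reward,assumption: differentiable policy,assumption: bounded score functions,assumption: lipschitz score functions}) matches the one used in \cite{zhang2020global}, so no new analytic tools are needed.

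Concretely, writing the policy gradient as $\nabla_\policyparams J(\policyparams) = \int R(\traj)\,p_\policyparams(\traj)\,\Phi(\traj;\policyparams)\,d\traj$, with $R(\traj)=\sum_{t=0}^{\horizon-1}\discountfactor^t r_t$ and $\Phi(\traj;\policyparams)=\sum_{t=0}^{\horizon-1}\nabla_\policyparams\log\pitheta(a_t\mid s_t)$, I would use the standard add-and-subtract decomposition
\[
\nabla J(\policyparams_1) - \nabla J(\policyparams_2) = \int R(\traj)\,p_{\policyparams_1}(\traj)\bigl[\Phi(\traj;\policyparams_1)-\Phi(\traj;\policyparams_2)\bigr]d\traj + \int R(\traj)\bigl[p_{\policyparams_1}(\traj)-p_{\policyparams_2}(\traj)\bigr]\Phi(\traj;\policyparams_2)\,d\traj.
\]
The first integrand is controlled by summing the Lipschitz-score estimate of \Cref{assumption: lipschitz score functions} across the $\horizon$ per-step contributions and combining with the reward bound of \Cref{assumption: bounded reward}; this produces the $\rwbound^h\liptheta$ summand of $\pglip$. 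For the second, I would use the line-integral identity $p_{\policyparams_1}(\traj)-p_{\policyparams_2}(\traj) = \int_0^1 p_{\policyparams(u)}(\traj)\,\Phi(\traj;\policyparams(u))^\top(\policyparams_1-\policyparams_2)\,du$ along the segment $\policyparams(u)=(1-u)\policyparams_2+u\policyparams_1$, then invoke the score bound $\|\Phi(\cdot;\policyparams)\|\le \horizon\scorebd$ from \Cref{assumption: bounded score functions} twice (once for the density difference, once for the outer $\Phi(\traj;\policyparams_2)$ factor); this produces the $\rwbound^h\scorebd^2$ summand.

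The main obstacle is the finite-horizon bookkeeping. In the infinite-horizon case, the relevant series collapse to $1/(1-\discountfactor)^2$ and $(1+\discountfactor)/(1-\discountfactor)^3$, and both of these are indeed the $\horizon\to\infty$ limits of the coefficients appearing in the statement, which is a useful sanity check. In the finite-horizon case I would instead use the truncated identity
\[
\sum_{t=0}^{\horizon}(t+1)\discountfactor^t \;=\; \frac{1-(\horizon+2)\discountfactor^{\horizon+1}+(\horizon+1)\discountfactor^{\horizon+2}}{(1-\discountfactor)^2},
\]
which matches the first coefficient of $\pglip$, together with an analogous truncated identity for $\sum_{t=0}^{\horizon}(t+1)^2\discountfactor^t$ that, after regrouping with the lower-order pieces arising from the outer product of two score sums (weighted by the reward discount $\discountfactor^t$), yields the cubic-denominator coefficient. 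The algebra is routine but tedious; the only care required is to organize the cross-terms so that they collapse to exactly the stated rational expression rather than a looser upper bound.
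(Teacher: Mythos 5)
Your high-level strategy is the same as the paper's: both adapt the smoothness argument of \cite{zhang2020global}, both use the add-and-subtract decomposition into a ``score-difference'' term and a ``density-difference'' term, both control the density difference via a Taylor/line-integral identity together with the score bound, and both finish with truncated geometric-series identities. However, there is a concrete gap in your choice of starting decomposition, and it prevents you from reaching the stated constant $\pglip$. You write $\nabla_\policyparams J(\policyparams) = \int R(\traj)\,p_\policyparams(\traj)\,\Phi(\traj;\policyparams)\,d\traj$ with the \emph{full} trajectory score $\Phi(\traj;\policyparams)=\sum_{t=0}^{\horizon-1}\nabla_\policyparams\log\pi_\policyparams(a_t\mid s_t)$ multiplying the \emph{full} return $R(\traj)=\sum_{t}\discountfactor^{t}r_t$. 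Bounding this form term by term gives $\|\Phi(\traj;\policyparams_1)-\Phi(\traj;\policyparams_2)\|\le \horizon\liptheta\|\policyparams_1-\policyparams_2\|$ and $|R(\traj)|\le \rwbound^h\tfrac{1-\discountfactor^{\horizon}}{1-\discountfactor}$, so the first summand of your Lipschitz constant is $\horizon\cdot\tfrac{1-\discountfactor^{\horizon}}{1-\discountfactor}\cdot\rwbound^h\liptheta$, and the second (using $\|\Phi\|\le\horizon\scorebd$ twice) is $\horizon^2\cdot\tfrac{1-\discountfactor^{\horizon}}{1-\discountfactor}\cdot\rwbound^h\scorebd^2$. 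Neither equals the coefficients in the statement, which are $\bigl(\sum_{m=0}^{\horizon}(m+1)\discountfactor^{m}\bigr)\rwbound^h\liptheta$ and $\bigl(\sum_{m=0}^{\horizon}(m+1)^2\discountfactor^{m}\bigr)\rwbound^h\scorebd^2$. Worse, your coefficients diverge as $\horizon\to\infty$ even for $\discountfactor<1$, so they cannot recover the infinite-horizon limits $1/(1-\discountfactor)^2$ and $(1+\discountfactor)/(1-\discountfactor)^3$ --- the very sanity check you invoke. No rearrangement of cross-terms will fix this, because the factorized bound $|R|\cdot\|\Phi\|$ counts all $\horizon^2$ (reward, score) pairs, whereas the stated constant only counts the causal pairs.

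The missing idea is to start from the causal (reward-to-go) form of the gradient, as the paper does: $\nabla J(\policyparams)=\sum_{t=0}^{\horizon}\sum_{\kappa=0}^{\horizon-t}\discountfactor^{t+\kappa}\int r^h(s_{t+\kappa},a_{t+\kappa})\,\nabla_\policyparams\log\pi_\policyparams(a_t\mid s_t)\,p_{\policyparams,0:t+\kappa}$, and to apply the add-and-subtract decomposition to each $(t,\kappa)$ term separately. Then the score difference is a \emph{single-step} quantity, contributing $\rwbound^h\liptheta$ per term (not $\horizon\liptheta$), and the density difference over the product of $t+\kappa+1$ policy factors contributes $(t+\kappa+1)\scorebd$ via Taylor's theorem, giving $(t+\kappa+1)\rwbound^h\scorebd^2$ per term. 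Summing $\discountfactor^{t+\kappa}$ and $(t+\kappa+1)\discountfactor^{t+\kappa}$ over the triangular index set $\{0\le t\le\horizon,\ 0\le\kappa\le\horizon-t\}$ (equivalently, $m+1$ pairs for each $m=t+\kappa$) is exactly what produces the two rational expressions in $\pglip$. Your truncated identity $\sum_{t=0}^{\horizon}(t+1)\discountfactor^{t}=\tfrac{1-(\horizon+2)\discountfactor^{\horizon+1}+(\horizon+1)\discountfactor^{\horizon+2}}{(1-\discountfactor)^2}$ is the right tool, but it only enters once the causal double-sum structure is in place.
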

\begin{proof}
Note that the high-fidelity only policy gradient can be written as
\begin{align*}
    \nabla J(\policyparams) &= \sum_{t=0}^\horizon\sum_{\kappa=0}^{\horizon-t} \discountfactor^{t+\kappa}\int r^h(\state^h_{t+\kappa},\action^h_{t+\kappa})\cdot\scoreFunction\cdot p_{\theta, 0:t+\kappa}\cdot d\state^h_{0:t+\kappa}\cdot d\action^h_{0:t+\kappa},\\
    \text{where }p_{\theta, 0:t+\kappa}&=\left(
    \prod_{u=0}^{t+\kappa-1} p(\state^h_{u+1}\vert \state^h_u,\action^h_u)\right)\cdot\left( \prod_{u=0}^{t+\kappa} \pi_{\policyparams}(\action^h_u\vert \state^h_u) \right).
\end{align*}
    Thus, for $\theta_1, \theta_2 \in \real^\policyParamsDim$, we have for the high-fidelity only policy gradient,
    \begin{align}
        &\left\|\nabla J(\theta_1) - \nabla J(\theta_2) \right\| = \Bigg \| \sum_{t=0}^\horizon\sum_{\kappa=0}^{\horizon-t} \gamma^{t+\kappa} \int\Big(\underbrace{\rewardhigh(\shigh_{t+\kappa},\ahigh_{t+\kappa})\left[\highscorethetaonet - \highscorethetatwot\right]p^h_{\theta_1, 0:t+\kappa}}_{A} \notag\\
        &+ \underbrace{ \rewardhigh(\shigh_{t+\kappa}, \ahigh_{t+\kappa}) \highscorethetatwot
        \left(p^h_{\theta_1, 0:t+\kappa} - p
        ^h_{\theta_2, 0:t+\kappa}\right)}_{B}\Big)d\shigh_{0:t+\kappa}d\ahigh_{0:t+\kappa}\Bigg\| \notag\\
        &\leq  \sum_{t=0}^\infty \sum_{\kappa=0}^\infty \gamma^{t+\kappa}\Bigg[ \underbrace{\int\Big(\left\vert\rewardhigh(\shigh_{t+\kappa},\ahigh_{t+\kappa})\right\vert\left\|\highscorethetaonet - \highscorethetatwot\right\|p^h_{\theta_1, 0:t+\kappa} d\shigh_{0:t+\kappa}d\ahigh_{0:t+\kappa}}_{I_1} \notag\\
        & + \underbrace{ \int \left\vert\rewardhigh(\shigh_{t+\kappa}, \ahigh_{t+\kappa})\right\vert \left\|\highscorethetatwot\right\|
        \left\vert p^h_{\theta_1, 0:t+\kappa} - p
        ^h_{\theta_2, 0:t+\kappa}\right\vert d\shigh_{0:t+\kappa}d\ahigh_{0:t+\kappa}}_{I_2}\Bigg] \label{eq: pg is lip}
        \end{align}
    From \Cref{assumption: lipschitz score functions}, we have
    \begin{align}
        \|A\|&\leq \rwbound^h\liptheta\|\policyparams^1-\policyparams^2\|\notag\\
        \implies I_1 &\leq \rwbound^h\liptheta\|\policyparams^1-\policyparams^2\| . \label{eq: bd for I1}
    \end{align}
    Further, note that
    \begin{align*}
        p^h_{\theta_1, 0:t+\kappa} - p
        ^h_{\theta_2, 0:t+\kappa} =  \left(
    \prod_{u=0}^{t+\kappa-1} p(\state^h_{u+1}\vert \state^h_u,\action^h_u)\right)\cdot\left( \prod_{u=0}^{t+\kappa} \pi_{\policyparams^1}(\action^h_u\vert \state^h_u) - \prod_{u=0}^{t+\kappa} \pi_{\policyparams^2}(\action^h_u\vert \state^h_u) \right)
    \end{align*}
    From Taylor's theorem, $\exists~ \lambda\in(0,1)$ such that for $ \Tilde{\policyparams} := \lambda \policyparams^1 + (1-\lambda)\policyparams^2$, we have%
    \begin{align}
        \left\vert \prod_{u=0}^{t+\kappa} \pi_{\policyparams^1}(\action^h_u\vert \state^h_u) - \prod_{u=0}^{t+\kappa} \pi_{\policyparams^2}(\action^h_u\vert \state^h_u) \right\vert &= \left\vert \left(\policyparams^1-\policyparams^2\right)^\top \left(\sum_{i=0}^{t+\kappa}\nabla_\policyparams \pi_{\Tilde{\policyparams}}(\action^h_i|\state^h_i)\prod_{u=0, u\neq i}^{t+\kappa} \pi_{\Tilde{\policyparams}}(\action^h_u|\state^h_u)\right)\right\vert\notag\\
        &\leq \|\policyparams^1-\policyparams^2\|\cdot \sum_{i=0}^{t+\kappa}\left\|\nabla_\policyparams \log \pi_{\Tilde{\policyparams}}(\action^h_i|\state^h_i)\right\| \cdot\prod_{u=0}^{t+\kappa}\pi_{\Tilde{\policyparams}}(\action^h_u|\state^h_u)\notag\\
        &\leq (t+\kappa+1)\cdot\scorebd\cdot\|\policyparams^1-\policyparams^2\|\cdot \prod_{u=0}^{t+\kappa}\pi_{\Tilde{\policyparams}}(\action^h_u|\state^h_u) \tag{\Cref{assumption: bounded score functions}}\\
        \implies I_2 &\leq (t+\kappa+1)\cdot\rwbound^h\cdot\scorebd^2\cdot\|\policyparams^1-\policyparams^2\|. \label{eq: bd for I2}
    \end{align}
    From \Cref{eq: pg is lip,eq: bd for I1,eq: bd for I2}, we get
    \begin{align*}
        \left\|\nabla J(\theta_1) - \nabla J(\theta_2) \right\| \leq \left(\sum_{t=0}^\horizon\sum_{\kappa=0}^{\horizon-t} \left[\rwbound^h \liptheta \discountfactor^{t+\kappa} +  \rwbound^h\scorebd^2 (t+\kappa+1)\discountfactor^{t+\kappa}\right]\right)\|\policyparams^1-\policyparams^2\| 
    \end{align*}
    Using the fact that
    \begin{align*}
    \sum_{t=0}^{\horizon}\sum_{\kappa=0}^{\horizon-t}\discountfactor^{t+\kappa}&= \frac{1-(\horizon+2)\discountfactor^{\horizon+1} + (\horizon+1)\discountfactor^{\horizon+2}}{(1-\discountfactor)^2},\\
    \sum_{t=0}^{\horizon}\sum_{\kappa=0}^{\horizon-t}(t+\kappa+1)
    \discountfactor^{t+\kappa} &=
    \frac{\left(1 - (\horizon+2)^2\discountfactor^{\horizon+1}+(\horizon+3)(\horizon+1)\discountfactor^{\horizon+2}\right)(1-\discountfactor)}{(1-\discountfactor)^3}\\
    &\quad +\frac{2\discountfactor\left(1-(\horizon+2)\discountfactor^{\horizon+1} + (\horizon+1)\discountfactor^{\horizon+2}\right)}{(1-\discountfactor)^3},
    \end{align*}
    we get that
    \begin{align*}
        \pglip &= \frac{1-(\horizon+2)\discountfactor^{\horizon+1} + (\horizon+1)\discountfactor^{\horizon+2}}{(1-\discountfactor)^2} \cdot \rwbound^h \liptheta \\
        &+\frac{\left(1 - (\horizon+2)^2\discountfactor^{\horizon+1}+(\horizon+3)(\horizon+1)\discountfactor^{\horizon+2}\right)(1-\discountfactor) + 2\discountfactor\left(1-(\horizon+2)\discountfactor^{\horizon+1} + (\horizon+1)\discountfactor^{\horizon+2}\right)}{(1-\discountfactor)^3}\cdot\rwbound^h\scorebd^2.
    \end{align*}
    Note that in the limit of infinite time horizon, we recover the Lipschitz bounds proposed by \cite{zhang2020global} for the infinite horizon setting, given by
    \begin{align*}
        \lim_{\horizon\to\infty}\pglip = \frac{1}{(1-\discountfactor)^2}\cdot \rwbound^h\liptheta + \frac{(1+\discountfactor)}{(1-\discountfactor)^3}\cdot\rwbound^h\scorebd^2.
    \end{align*}
\end{proof}

To establish the asymptotic almost sure convergence of $\|\nabla J(\policyparams^{\mfpg})\|$ to 0, we also need the following preliminary result, which is an adaptation of a similar result from \cite{zhang2020global}, but for the multi-fidelity case.
\begin{lemma}\label{lemma: stochastic ascent prop}
    Under \Cref{assumption: bounded reward,assumption: differentiable policy,assumption: bounded score functions,assumption: lipschitz score functions}, the norm of the Multi-Fidelity Policy Gradient estimator is bounded, i.e., $\exists ~\hat{\ell}>0$ such that $\|\nabla \hat J{\mfpg}(\policyparams)\|\leq \hat{\ell}~\forall~\policyparams\in\real^{\policyParamsDim}$. Further, Consider the random variable $W_k$, defined as%
    \begin{align*}
        W_k := J(\theta_k\mfpg) - \pglip \hat{\ell}^2 \sum_{j=k}^\infty \alpha_j^2.
    \end{align*}
    Let $\mathcal{F}_k$ be a filtration denoting all randomness in $k$ iterations. Then, we have that \begin{align*}
    \ee[W_{k+1}|\mathcal{F}_k] \geq W_k + \alpha_k \|\nabla J(\theta_k\mfpg)\|^2.
    \end{align*}
\end{lemma}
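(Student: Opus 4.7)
My plan is to prove the two claims in Lemma 2 in sequence: first establish an almost-sure uniform bound $\hat\ell$ on the \ac{mfpg} estimator, then combine that bound with the Lipschitz smoothness of $\nabla J$ from \Cref{lemma: policy gradient lipschitz} and the unbiasedness from \Cref{lemma:cv-estimator} to derive the submartingale-style inequality.

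For the boundedness claim, I would start from the REINFORCE trajectory functional
$X_{\tau}^{\pi_\theta} = \tfrac{1}{T}\sum_{t=0}^{T-1} G_t \log\pi_\theta(a_t\mid s_t)$ and bound its gradient pathwise. \Cref{assumption: bounded reward} gives $|G_t|\le U_R\sum_{n=0}^{T-1}\gamma^n$, and \Cref{assumption: bounded score functions} gives $\|\nabla_\theta \log\pi_\theta(a_t\mid s_t)\|\le B_\Theta$, so per-trajectory norms $\|\nabla_\theta X_{\tau^h}^{\pi_\theta}\|$ and $\|\nabla_\theta X_{\tau^l}^{\pi_\theta}\|$ are bounded by an explicit constant depending only on $U_R, B_\Theta, \gamma, T$ (with $U_R = \max(U_R^h,U_R^l)$). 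The \ac{mfpg} estimator is $\nabla_\theta X_{\tau^h}^{\pi_\theta} + c^\ast \nabla_\theta X_{\tau^l}^{\pi_\theta}$, so I need to bound $|c^\ast|$. By Cauchy--Schwarz and \cref{eq:optimal-coeff}, $|c^\ast| = |\rho|\sqrt{\Var(X_{\tau^h}^{\pi_\theta})/\Var(X_{\tau^l}^{\pi_\theta})} \le \sqrt{\Var(X_{\tau^h}^{\pi_\theta})/\Var(X_{\tau^l}^{\pi_\theta})}$, and the numerator is bounded by the pathwise bound on $X_{\tau^h}^{\pi_\theta}$; I would absorb a uniform lower bound on $\Var(X_{\tau^l}^{\pi_\theta})$ into the constant (this is implicit in \Cref{assumption: correlated fidelities}, which forces the two variances to be nondegenerate). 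Averaging over $N^h$ samples preserves the bound, yielding the desired $\hat\ell$.

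For the stochastic ascent inequality, I would apply the standard descent-lemma consequence of $L_T$-Lipschitz gradients (\Cref{lemma: policy gradient lipschitz}) in the form
\begin{equation*}
J(\theta_{k+1}^{MFPG}) \;\ge\; J(\theta_k^{MFPG}) + \nabla J(\theta_k^{MFPG})^\top (\theta_{k+1}^{MFPG}-\theta_k^{MFPG}) - \tfrac{L_T}{2}\|\theta_{k+1}^{MFPG}-\theta_k^{MFPG}\|^2.
\end{equation*}
Substituting the update rule $\theta_{k+1}^{MFPG} - \theta_k^{MFPG} = \alpha_k \nabla\hat J^{MFPG}(\theta_k^{MFPG})$, taking $\mathbb{E}[\,\cdot\,\mid\mathcal{F}_k]$, and using (i) the pathwise bound $\|\nabla\hat J^{MFPG}\|\le \hat\ell$ to control the quadratic term, together with (ii) the unbiasedness $\mathbb{E}[\nabla\hat J^{MFPG}(\theta_k^{MFPG})\mid\mathcal{F}_k] = \nabla J(\theta_k^{MFPG})$ (which follows by combining \Cref{assumption: high-fidelity policy gradient estimate} with the control-variate unbiasedness of \Cref{lemma:cv-estimator}) to handle the linear term, gives
\begin{equation*}
\mathbb{E}[J(\theta_{k+1}^{MFPG})\mid\mathcal{F}_k] \;\ge\; J(\theta_k^{MFPG}) + \alpha_k \|\nabla J(\theta_k^{MFPG})\|^2 - \tfrac{L_T \hat\ell^2}{2}\,\alpha_k^2.
\end{equation*}
Subtracting $L_T\hat\ell^2 \sum_{j=k+1}^\infty \alpha_j^2$ from both sides and noting that $-\tfrac12 \alpha_k^2 \ge -\alpha_k^2$ absorbs the telescoping index shift from $\sum_{j=k+1}^\infty$ to $\sum_{j=k}^\infty$, I recover $\mathbb{E}[W_{k+1}\mid\mathcal{F}_k] \ge W_k + \alpha_k\|\nabla J(\theta_k^{MFPG})\|^2$.

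The main obstacle is the bound on $|c^\ast|$: the pathwise boundedness of $X_{\tau^h}^{\pi_\theta}$ and $X_{\tau^l}^{\pi_\theta}$ only yields upper bounds on their variances, so a uniform bound on $c^\ast$ requires a uniform \emph{lower} bound on $\Var(X_{\tau^l}^{\pi_\theta})$. I would argue that this is implicitly enforced by \Cref{assumption: correlated fidelities} (which presupposes nondegenerate low-fidelity variance), and otherwise mention it as a regularity condition absorbed into the definition of $\hat\ell$. Everything else is a straightforward application of the smoothness inequality, unbiasedness, and a careful bookkeeping of the tail sum $\sum_{j\ge k}\alpha_j^2$.
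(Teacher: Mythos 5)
Your proposal is correct and follows essentially the same route as the paper: a pathwise bound on the MFPG estimator from \Cref{assumption: bounded reward,assumption: bounded score functions}, then an ascent inequality from the Lipschitz gradient of \Cref{lemma: policy gradient lipschitz}, conditional expectation with unbiasedness to extract $\alpha_k\|\nabla J(\theta_k^{MFPG})\|^2$, and telescoping against the tail sum $\sum_{j\ge k}\alpha_j^2$ (the paper uses a mean-value argument yielding the constant $L_T$ rather than your descent-lemma constant $L_T/2$, which is why its $W_k$ telescopes exactly, whereas your version needs the slack $-\tfrac12\alpha_k^2 \ge -\alpha_k^2$ that you correctly supply). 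Your flagged obstacle about bounding $|c^*|$ is well taken: the paper does not resolve it but simply folds $c^*$ into the definition of $\hat{\ell}$ (leaving $\hat{\ell}$ implicitly $\theta$-dependent unless $c^*$ is uniformly bounded), so your explicit appeal to a uniform lower bound on $\Var(X_{\tau^l}^{\pi_\theta})$ is, if anything, the more careful treatment.
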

\begin{proof}
We first show the boundedness of the MFPG estimator. We have
\begin{align*}
\left\|\nabla X^{\policy_{\policyparams}}_{\tau^h,i}\right\| = \left\|\frac{1}{T}\sum_{t=0}^{T-1} G^h_{i,t}\,\nabla \log \pi_\theta(a^h_{i,t}\mid s^h_{i,t})\right\| &\leq \frac{\rwbound^h\scorebd}{T}\sum_{t=0}^{T-1}\discountfactor^t = \frac{\rwbound^h\scorebd}{T}\cdot\frac{1-\discountfactor^{\horizon}}{1-\discountfactor}\\
\left\|\nabla X^{\policy_{\policyparams}}_{\tau^l,i}\right\| = \left\|\frac{1}{T}\sum_{t=0}^{T-1} G^l_{i,t} \,\nabla\log \pi_\theta(a^l_{i,t}\mid s^l_{i,t})\right\|&\leq \frac{\rwbound^l\scorebd}{T}\sum_{t=0}^{T-1}\discountfactor^t = \frac{\rwbound^l\scorebd}{T}\cdot\frac{1-\discountfactor^{\horizon}}{1-\discountfactor}\\
\implies \left\|\frac{1}{N^h}\sum_{i=1}^{N^h} \left[ \nabla  X^{\policy_{\policyparams}}_{\tau^h,i}+ c^* \left(\nabla X^{\policy_{\policyparams}}_{\tau^l,i}-\ee\left[\nabla X^{\policy_{\policyparams}}_{\tau^l,i}\right]\right) \right] \right\| &\leq \frac{\rwbound^h\scorebd}{TN^h}\cdot\frac{1-\discountfactor^{\horizon}}{1-\discountfactor} + 2 c^*\frac{\rwbound^l\scorebd}{TN^h}\cdot\frac{1-\discountfactor^{\horizon}}{1-\discountfactor}:=\hat\ell
\end{align*}
    We now show the stochastic ascent property associated with $W_k$. By Taylor's theorem we have
    \begin{align*}
        \wkone &= J(\tk\mfpg) + \left(\tkone\mfpg - \tk\mfpg\right)^\top \nabla J(\Tilde{\tk}\mfpg) - \pglip\hat{\ell}^2 \sum_{j=k+1}^\infty \alpha^2_j\\
        &= J(\tk\mfpg) + \left(\tkone\mfpg - \tk\mfpg\right)^\top \nabla J({\tk\mfpg}) \\
        &+ \left(\tkone\mfpg - \tk\mfpg\right)^\top \left(\nabla J(\Tilde{\tk}\mfpg) - \nabla J({\tk\mfpg})\right) - \pglip\hat{\ell}^2 \sum_{j=k+1}^\infty \alpha^2_j\\
         &\geq J(\tk\mfpg) + \left(\tkone\mfpg - \tk\mfpg\right)^\top \nabla J({\tk\mfpg}) - \pglip\left\|\tkone\mfpg - \tk\mfpg\right\|^2\\
         &- \pglip\hat{\ell}^2 \sum_{j=k+1}^\infty \alpha^2_j\\
         \implies \ee\left[ \wkone\vert\filtration_k\right] &\geq  J(\tk\mfpg) + \ee\left[\tkone\mfpg-\tk\mfpg\vert\filtration_{k}\right]^\top\nabla J(\tk\mfpg) \\
         &- \pglip \ee\left[\left\|\tkone\mfpg - \tk\mfpg\right\|^2\vert\filtration_k\right] - \pglip \hat{\ell}^2\sum_{j=k+1}^\infty \alpha^2_j\\
         &=  J(\tk\mfpg) + \ee\left[\tkone\mfpg-\tk\mfpg\vert\filtration_{k}\right]^\top\nabla J(\tk\mfpg) \\
         &- \pglip\alpha_k^2 \ee\left[\|\nabla J(\tk\mfpg)\|^2\vert\filtration_k\right] - \pglip \hat{\ell}^2\sum_{j=k+1}^\infty \alpha^2_j \\
          &\geq  J(\tk\mfpg) + \ee\left[\tkone\mfpg-\tk\mfpg\vert\filtration_{k}\right]^\top\nabla J(\tk\mfpg) \\
          &- \pglip\alpha_k^2 \hat{\ell}^2 - \pglip \hat{\ell}^2\sum_{j=k+1}^\infty \alpha^2_j\\
          \implies \ee\left[J(\tkone\mfpg)\vert \filtration_k\right] &\geq J(\tk\mfpg) + \ee\left[\tkone\mfpg-\tk\mfpg\vert\filtration_{k}\right]^\top\nabla J(\tk\mfpg) - \pglip\alpha^2_k \hat{\ell}^2.
    \end{align*}
    To show the bound, we have $\ee\left[\tkone\mfpg-\tk\mfpg\vert\filtration_k\right] = \ee\left[ \alpha_k\nabla \hat J{\mfpg}(\policyparams\mfpg)|\filtration_k\right] = \alpha_k\nabla J(\tk\mfpg)$. \emph{This is the crucial step dependent on unbiasedness of control variates, which allows us to conduct this analysis.} Putting this in the above inequality gives us the desired bound.
\end{proof}

Using \Cref{lemma: policy gradient lipschitz,lemma: stochastic ascent prop}, we can now formally prove \Cref{theorem: finite rate}, which we restate for the reader's convenience.
\paragraph{\Cref{theorem: finite rate}.} \emph{Under \Cref{assumption: bounded reward,assumption: differentiable policy,assumption: bounded score functions,assumption: lipschitz score functions,assumption: high-fidelity policy gradient estimate,assumption: correlated fidelities}, let $\{\tk\mfpg\}_{k\in\bb{Z}^+}$, $\{\tk^h\}_{k\in\bb{Z}^+}$ be the sequence of policy parameters generated by running the REINFORCE algorithm with MFPG and high-fidelity only policy gradient estimators. Then, for a problem with time horizon $\horizon$, after $N$ iterations, with step sizes for both iterates taken to be a sequence $\stepsizek$ satisfying $\sum_k \stepsizek=\infty, \sum_k \stepsizek^2<\infty$, we have
    \begin{align*}
        \min_{k\in[N]} \ee\left[\|\nabla J(\tk\mfpg)\|^2\right] &\leq \frac{J(\policyparams^*) - J(\policyparams_1) + \bm{\left(1-\rho^2\right)}\frac{\sigma^2\pglip}{2}\sum_{k=1}^N\stepsizek^2}{\sum_{k=1}^N\left(\stepsizek-\frac{\stepsizek^2\pglip}{2}\right)},
    \end{align*}
    where $\pglip$ is the Lipschitz constant of the high-fidelity policy gradient, established in \Cref{lemma: policy gradient lipschitz}.
    Moreover, we recover the corresponding rate for the high-fidelity only policy iterates $\{\tk^h\}_{k\in\bb{Z}^+}$ by substituting $\rho = 0$. Finally, we have that $\lim_{k\to\infty}\ee\left[\|\nabla J(\tk\mfpg)\|\right]=0$ almost surely.}

\begin{proof}
Let the MFPG estimator be written as $\mfpggradestimator(\policyparams):=\nabla J(\policyparams) + \xi$, and the high fidelity only policy gradient estimator be written as $\higradestimator(\policyparams):=\nabla J(\policyparams) + \Tilde{\xi}$. We analyze the sequences $\{\tk\mfpg\}_{k\in\bb{Z}^+}$ and $\{\tk^h\}_{k\in\bb{Z}^+}$ generated by the two estimators respectively, 
\begin{align*}
    \tkone\mfpg &= \tk\mfpg + \stepsizek\mfpggradestimator, k\in\bb{Z}^+,\\
    \tkone^h &= \tk^h + \stepsizek\higradestimator,k\in\bb{Z}^+,\\
    \policyparams^h_1 &= \policyparams\mfpg_1 := \policyparams_1.
\end{align*}
    \begin{align*}
        J(\tkone^h)&\geq J(\tk^h) + \langle \nabla J(\tk^h), \tkone^h - \tk^h\rangle - \frac{\pglip}{2}\|\tkone^h-\tk^h\|^2\\
       &= J(\tk^h) + \stepsizek\langle \nabla J(\tk^h), \higradestimator\rangle - \frac{\stepsizek^2\pglip}{2}\|\higradestimator\|^2\\
       &= J(\tk^h) + \stepsizek\langle \nabla J(\tk^h), \nabla J(\tk^h)+\xi_k\rangle - \frac{\stepsizek^2\pglip}{2}\|\nabla J(\tk^h)+\xi_k\|^2\\
       &=J(\tk^h) + \left(\stepsizek-\frac{\stepsizek^2\pglip}{2}\right)\|\nabla J(\tk^h)\|^2 + \left(\stepsizek - \stepsizek^2\pglip\right)\langle\nabla J(\tk^h), \xi_k\rangle - \frac{\stepsizek^2\pglip}{2}\|\xi_k\|^2\\
       \implies & \left(\stepsizek-\frac{\stepsizek^2\pglip}{2}\right)\|\nabla J(\tk^h)\|^2 \leq J(\tkone^h) - J(\tk^h) - \left(\stepsizek - \stepsizek^2\pglip\right)\langle\nabla J(\tk^h), \xi_k\rangle + \frac{\stepsizek^2\pglip}{2}\|\xi_k\|^2\\
       \implies & \sum_{k=1}^N \left(\stepsizek-\frac{\stepsizek^2\pglip}{2}\right)\|\nabla J(\tk^h)\|^2 \leq J(\policyparams_{N+1}^h) - J(\policyparams_1^h) - \sum_{k=1}^N \left(\stepsizek - \stepsizek^2\pglip\right)\langle\nabla J(\tk^h), \xi_k\rangle \\
       &\quad\quad\quad\quad\quad\quad\quad\quad\quad\quad\quad\quad\quad\quad+ \sum_{k=1}^N\frac{\stepsizek^2\pglip}{2}\|\xi_k\|^2\\
       &\leq J(\policyparams^*) - J(\policyparams_1) - \sum_{k=1}^N \left(\stepsizek - \stepsizek^2\pglip\right)\langle\nabla J(\tk^h), \xi_k\rangle + \sum_{k=1}^N\frac{\stepsizek^2\pglip}{2}\|\xi_k\|^2.
    \end{align*}
    Similarly, for the MFPG iterates, we get
    \begin{align*}
        \implies  \sum_{k=1}^N \left(\stepsizek-\frac{\stepsizek^2\pglip}{2}\right)&\|\nabla J(\tk\mfpg)\|^2 \leq \\
       &J(\policyparams^*) - J(\policyparams_1) - \sum_{k=1}^N \left(\stepsizek - \stepsizek^2\pglip\right)\langle\nabla J(\tk\mfpg), \Tilde{\xi_k}\rangle + \sum_{k=1}^N\frac{\stepsizek^2\pglip}{2}\|\Tilde{\xi_k}\|^2.
    \end{align*}

    By taking expectation over the randomness for both sequences, from \Cref{assumption: high-fidelity policy gradient estimate,assumption: correlated fidelities} and the fact that the MFPG is an unbiased estimator for true policy gradient, we get
    \begin{align*}
        \sum_{k=1}^N \left(\stepsizek-\frac{\stepsizek^2\pglip}{2}\right)\ee\left[\|\nabla J(\tk^h)\|^2\right] &\leq J(\policyparams^*) - J(\policyparams_1) + \frac{\sigma^2\pglip}{2}\sum_{k=1}^N\stepsizek^2\\
        \implies \min_{k\in[T]} \ee\left[\|\nabla J(\tk^h)\|^2\right] &\leq \frac{J(\policyparams^*) - J(\policyparams_1) + \frac{\sigma^2\pglip}{2}\sum_{k=1}^N\stepsizek^2}{\sum_{k=1}^N\left(\stepsizek-\frac{\stepsizek^2\pglip}{2}\right)}
    \end{align*}
    \begin{align*}
        \sum_{k=1}^N \left(\stepsizek-\frac{\stepsizek^2\pglip}{2}\right)\ee\left[\|\nabla J(\tk\mfpg)\|^2\right] &\leq J(\policyparams^*) - J(\policyparams_1) + \left(1-\rho^2\right)\frac{\sigma^2\pglip}{2}\sum_{k=1}^N\stepsizek^2\\
        \implies \min_{k\in[T]} \ee\left[\|\nabla J(\tk\mfpg)\|^2\right] &\leq \frac{J(\policyparams^*) - J(\policyparams_1) + \left(1-\rho^2\right)\frac{\sigma^2\pglip}{2}\sum_{k=1}^N\stepsizek^2}{\sum_{k=1}^N\left(\stepsizek-\frac{\stepsizek^2\pglip}{2}\right)}.
    \end{align*}
    We establish the global asymptotic convergence of the multi-fidelity policy gradient using the arguments developed in \cite{zhang2020global}. To show that $\lim_{k\to\infty} \|\nabla J(\tk\mfpg)\| = 0$ almost surely, we first establish that $\liminf_{k\to\infty} \|\nabla J(\tk\mfpg)\| = 0$. Let $\jstar$ be the global optimum, we have that $\wk\leq \jstar$ by definition of $\wk$.
From the stochastic ascent property and corresponding bound established in \Cref{lemma: stochastic ascent prop}, we have
\begin{align*}
    \ee\left[\jstar - \wkone\vert\filtration_k\right] &\leq \left(\jstar-\wk\right) - \alpha_k \|\nabla J(\tk\mfpg)\|^2\\
    \implies \sum_{k=1}^\infty \alpha_k \|\nabla J(\tk\mfpg)\|^2 &< \infty~\text{almost surely}. \tag{supermartingale convergence theorem \citep{robbins1971convergence} as $\jstar - \wk \geq 0$.}\\
    \implies \liminf_{k\to\infty} \|\nabla J(\tk\mfpg)\| = 0. \tag{$\because \sum_{k=0}^\infty \alpha_k = \infty$}
\end{align*}
Finally, we will establish by contradiction, that $\limsup_{k\to\infty} \|\nabla J(\tk\mfpg)\| = 0$. Assume that $\exists~ \epsilon>0$ such that $\limsup_{k\to\infty} \|\nabla J(\tk\mfpg) \|= \epsilon$. Thus the set $N_1 = \{\tk\mfpg: \|\nabla J(\tk\mfpg)\| \geq \nicefrac{2\epsilon}{3}\}$ is an infinite set. Because $\liminf_{k\to\infty} \|\nabla J(\tk\mfpg) \| = 0$, the set $N_2 = \{\tk\mfpg: \|\nabla J(\tk\mfpg)\| \leq \nicefrac{\epsilon}{3}\}$ is also an infinite set. We can define the distance between the two sets $D(N_1, N_2) = \inf_{\theta_1\in N_1}\inf_{\theta_2\in N_2}\|\theta_1-\theta_2\|$, which by assumption will be positive. Because both $N_1, N_2$ have an infinite number of sequence members, there must be an index set $\mathcal{I}$ such that the sequence $\{\tk\mfpg\}_{k\in\mathcal{I}}$ switches between $N_1$ and $N_2$ infinite times, and there must be indices $\{s_i\}_{i\geq 0}, \{t_i\}_{i\geq 0}$ such that $\ts{i} \in N_1, i\geq 0, \tti{i} \in N_2, i\geq 0$, and the iterates $\{\tk\mfpg\}_{k\in\mathcal{I}} = \{\policyparams\mfpg_{s_i+1},\dots\policyparams\mfpg_{t_i-1}\}$ are neither in $N_1$ nor in $N_2$. Thus, we have
    \begin{align}
        \underbrace{\sum_{i\geq 0} D(N_1, N_2)}_{=\infty} = \sum_i \|\policyparams\mfpg_{s_i} - \policyparams\mfpg_{t_i}\| &= \sum_i \left\|\sum_{k=s_i}^{t_i-1}\left(\policyparams\mfpg_{k+1} - \policyparams\mfpg_{k}\right)\right\|\notag\\
        &\leq\sum_{i\geq 0}\sum_{k = s_i}^{t_i-1}\|\tkone\mfpg - \tk\mfpg\|\notag\\
        &=\sum_{k\in\mathcal{I}}\|\tkone\mfpg - \tk\mfpg\|
        \label{eq: limsup helping eq}
    \end{align}
    We know that
    \begin{align*}
    \frac{\epsilon}{3}&\leq \|\nabla J(\tk)\|~\forall~k\in\mathcal{I}\\
    \implies \sum_{k\in\mathcal{I}}\frac{\alpha_k\epsilon^2}{9} &\leq 
        \sum_{k\in\mathcal{I}} \alpha_k \|\nabla J(\tk\mfpg)\|^2 < \infty\\
        \implies \sum_{k\in\mathcal{I}}\alpha_k &< \infty\\
        \implies \sum_{k\in\mathcal{I}} \ee\left[\|\tkone\mfpg - \tk\mfpg\|\right] &= \sum_{k\in\mathcal{I}}\alpha_k\ee\left[\|\nabla \hat{J}(\tk\mfpg)\|\right]  < 
        \infty \tag{because $\ee[\|\nabla \hat{J}\|] \leq \hat{\ell}$}\\
        \implies \sum_{k\in\mathcal{I}}\|\tkone\mfpg - \tk\mfpg\| &< \infty \text{ almost surely} \tag{monotone convergence theorem},
    \end{align*}
    which leads us to a contradiction in light of \Cref{eq: limsup helping eq}, and thus $\limsup_{k\to\infty} \|\nabla J(\tk\mfpg)\| = 0$ almost surely. Thus, we get that $\lim_{k\to\infty} \|\nabla J(\tk\mfpg)\| = 0$.
   
\end{proof}

\section{\revised{Additional Ablation Study Results}}
\label{sec:appendix-additional-ablations}

\revised{This section presents additional ablation results that complement \cref{eq:auxiliary-exp}.}

\paragraph{Reparameterization trick.} We examine the effect of the reparameterization trick described in \cref{sec:approach} on our \ac{mfpg} method. 
\revised{\Cref{fig:ablation-reparameterization} shows evaluation return curves for \ac{mfpg} with and without the reparameterization trick, along with the High-Fidelity Only baseline, on Hopper-v3 with 0.5$\times$ and 1.2$\times$ friction shifts.} 
\revised{W}hen the dynamics gap is mild (cf., the 1.2× friction setting in Hopper, right panel), reparameterization substantially accelerates learning\revised{---for example, around 0.2M steps, the confidence intervals of \ac{mfpg} with reparameterization lie well above those of its counterpart without reparameterization. In contrast, when the dynamics gap is larger (cf.\ the 0.5$\times$ friction setting, left panel), the difference is much less pronounced.
}

This pattern is intuitive: when low-fidelity dynamics are close to those of the high-fidelity environment, stochasticity in action selection dominates; reparameterization controls this source of randomness and produces highly correlated rollouts. 
In the extreme case where the low-fidelity dynamics are perfect (and deterministic), reparameterization makes the low- and high-fidelity rollouts identical, so that $\RV^{\policy_{\policyparams}}_{\traj^{h}}$ and $\RV^{\policy{\policyparams}}_{\traj^{l}}$ cancel out. The policy gradient estimate then reduces to $\EV^{l}$, which is estimated with abundant (perfect) low-fidelity samples. By contrast, when the dynamics gap is large, discrepancies in transition dynamics dominate, and even with reparameterization, divergence between high- and low-fidelity rollouts persists, resulting in weaker correlation.

\begin{figure}[htbp]
    \centering
    \begin{minipage}{0.43\linewidth}
        \centering
        \includegraphics[width=\linewidth]{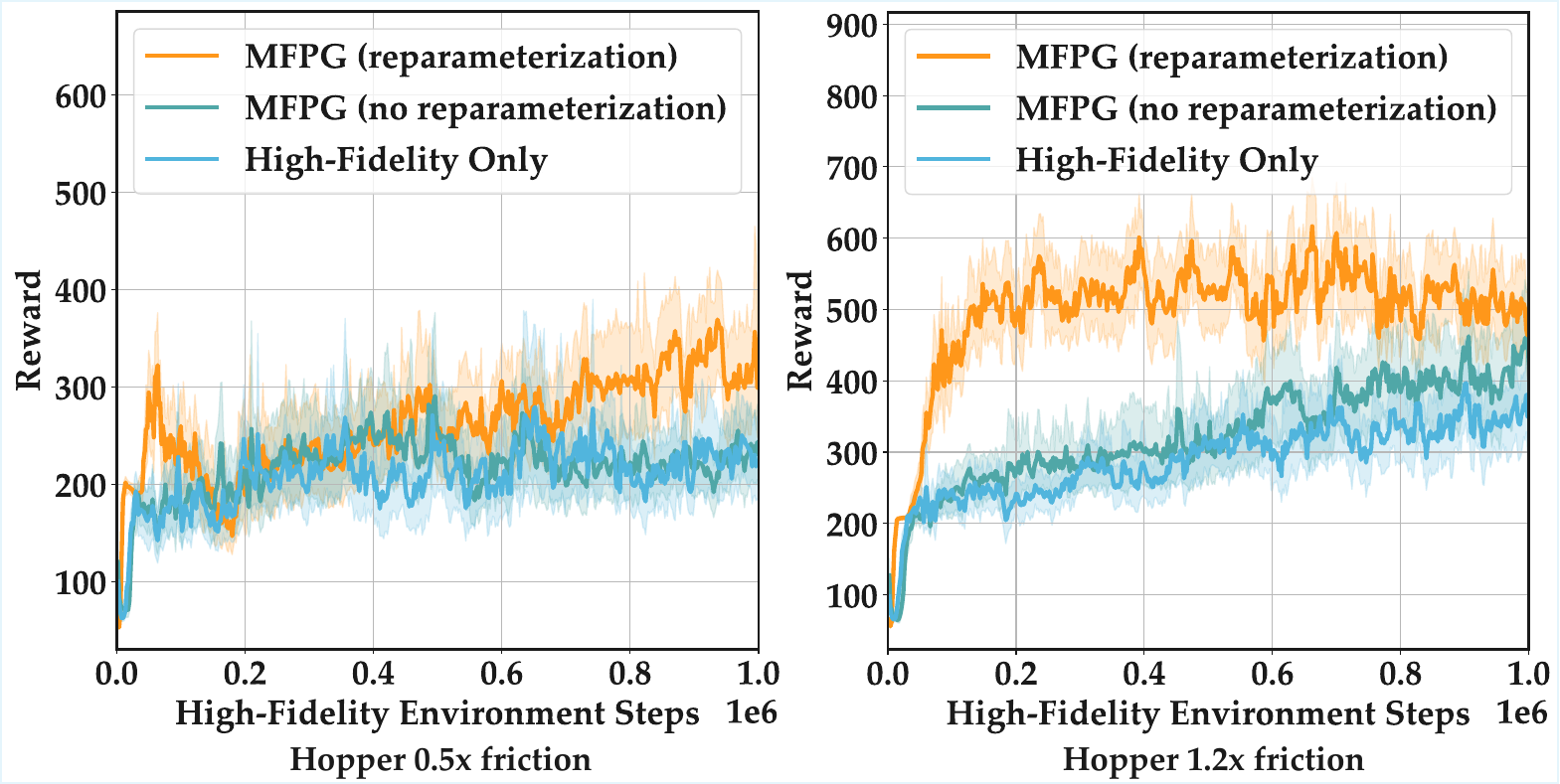}
        \caption{
\revised{Evaluation return curves for} \ac{mfpg} with and without the reparameterization trick versus High-Fidelity Only on Hopper-v3 with 0.5× (left) and 1.2× (right) friction. 
\revised{Solid lines indicate bootstrap point estimates (means), and shaded regions denote two-sided 95\% bootstrap confidence intervals.} The reparameterization trick proves especially critical in mild-gap settings (e.g., 1.2× friction).
        }
        \label{fig:ablation-reparameterization}
    \end{minipage}
    \hfill
    \begin{minipage}{0.43\linewidth}
        \centering
        \includegraphics[width=\linewidth]{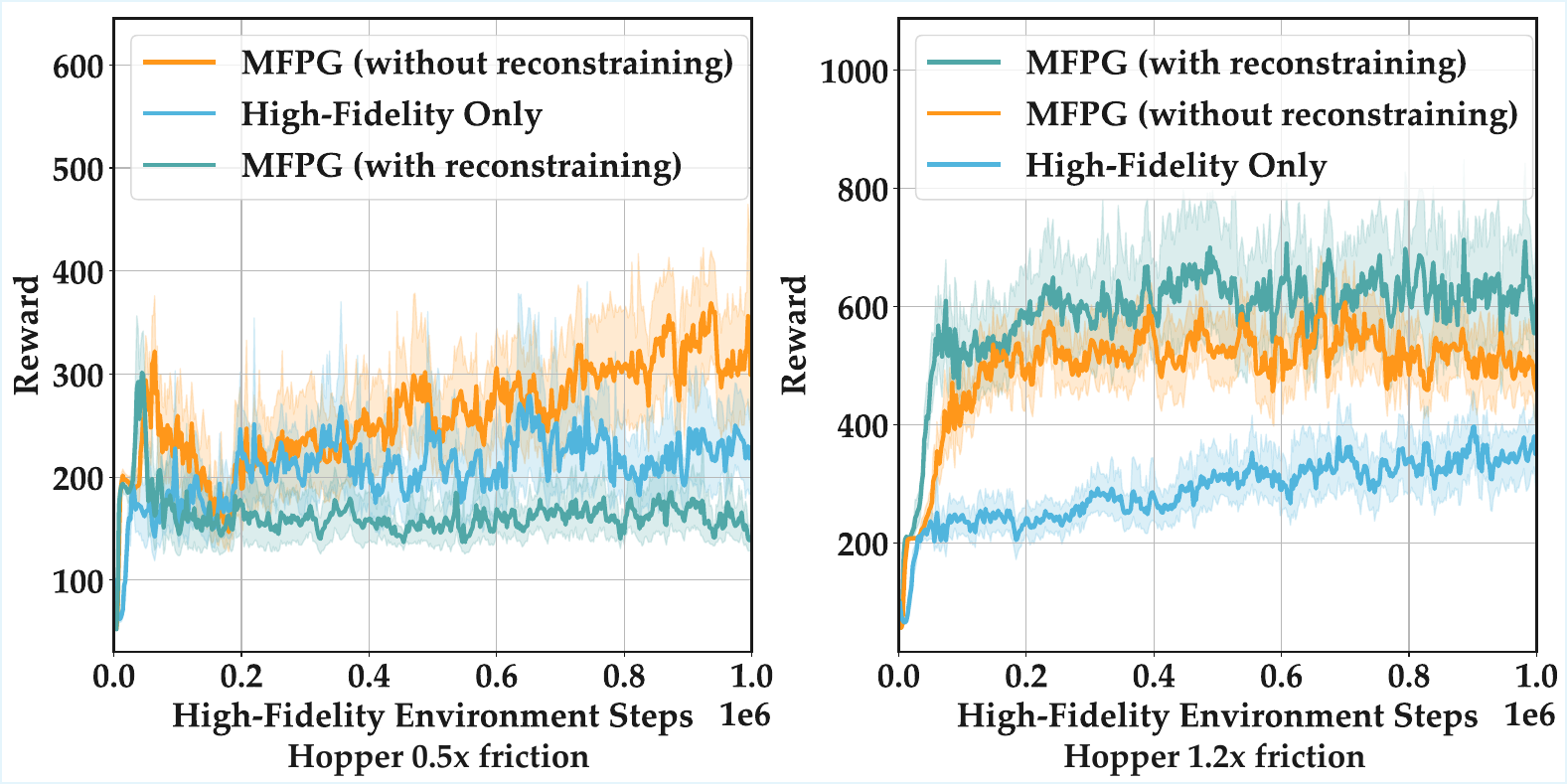}
        \caption{
\revised{Evaluation return curves for} \ac{mfpg} with and without periodic reconstraining of low-fidelity trajectories versus High-Fidelity Only on Hopper-v3 with 0.5× (left) and 1.2× (right) friction. \revised{Solid lines indicate bootstrap point estimates (means), and shaded regions denote two-sided 95\% bootstrap confidence intervals.} 
Reconstraining \revised{can accelerate learning} in some settings\revised{---for example, around 0.05M steps in the right panel---but can also harm performance in others, \eg, during the second half of training in the left panel.} Accordingly, we do not employ this trick in our evaluations of \ac{mfpg}.    
        }
        \label{fig:ablation-reconstraining}
    \end{minipage}
\end{figure}

\paragraph{Variance-bias trade-off.} In practice, we observed that periodically reconstraining (the correlated) low-fidelity rollouts back to their high-fidelity counterparts can strengthen correlation and \revised{accelerate learning in some settings \revised{(\ie, every few steps along a low-fidelity rollout, reset the low-fidelity state to match the state of the correlated high-fidelity rollout at the same time step)}. For example, in the 1.2$\times$ friction case in \cref{fig:ablation-reconstraining} (right panel), around 0.05M steps, the \ac{mfpg} variant with periodic reconstraining attains an evaluation return whose confidence interval lies well above that of nominal \ac{mfpg} without reconstraining.}
However, this procedure may also introduce bias into the policy gradient estimator, leading to degraded performance in certain cases (e.g., the 0.5× friction setting in \cref{fig:ablation-reconstraining}, left panel)\revised{---the confidence interval of \ac{mfpg} with reconstraining mostly lies well below that of nominal \ac{mfpg} without reconstraining throughout the second half of training}. These findings highlight a practical variance–bias trade-off. 
We do not adopt this reconstraining trick in our experiments. 
Developing systematic, unbiased mechanisms for drawing even more highly correlated samples across environments remains an important direction for future work.

\section{\revised{Additional} Experimental Results on Off-Dynamics Tasks}
\label{sec:appendix-complete-results}

\revised{

For the 15 main task settings in \cref{sec:odrl-eval} evaluated with 20 random seeds per method, \cref{fig:complete_auc_diff_to_hf} reports the mean difference in \ac{auc} of each method relative to the High-Fidelity Only baseline (so 0 corresponds to High-Fidelity Only), computed analogously to \cref{fig:main-odrl-results-diff-final-return}. Bars show the bootstrap point estimate of the mean difference, and error bars denote two-sided 95\% bootstrap confidence intervals. As a complement to \cref{fig:hopper-qualitative}, \cref{fig:hopper-qualitative-diff-to-hf} reports the mean difference in evaluation return between \ac{mfpg} and the High-Fidelity Only baseline.

\Cref{fig:odrl-box-results-final-return,fig:odrl-box-results-auc} further show box plots of the final evaluation return and \ac{auc} for each method and setting across the 20 runs: the center line indicates the median, the box spans the interquartile range (25th--75th percentiles), and whiskers extend to the 5th and 95th percentiles; outliers beyond the whiskers are not shown. 

\Cref{fig:complete-odrl-results-final-return,fig:complete-odrl-results-auc} further reports results on 24 additional task settings, each evaluated with 5 random seeds per method. Due to the small number of seeds, these additional results are primarily used to illustrate the consistency of qualitative trends (\eg, the prevalence of performance collapses), and we do not draw formal statistical conclusions from them. Instead of reporting full distributions, in these figures, bar heights indicate medians and error bars denote the minimum and maximum values across seeds.

In \cref{fig:complete-odrl-results-final-return,fig:complete-odrl-results-auc}, kinematic variations are categorized as easy, medium, or hard; each setting constrains a specific joint to a reduced range of motion to mimic actuator damage. For kinematic shifts, we consider cases where the agent’s thigh joint is impaired (leg joint for Hopper-v3, since the original benchmark does not include a thigh joint shift for Hopper).

Finally, in \cref{fig:odrl-box-results-final-return,fig:complete-odrl-results-final-return}, More High-Fidelity Data (15$\times$) is an reference baseline that trains with access to 15$\times$ more high-fidelity samples. Dashed lines denote the median performance of More High-Fidelity Data (15$\times$). 

In \cref{fig:odrl-box-results-final-return}, we observe that, in general, High-Fidelity Only struggles to match the performance of the More High-Fidelity Data ($15\times$) baseline, highlighting the clear benefit of training on larger high-fidelity sample sizes. Nevertheless, in some settings Low-Fidelity Only (100$\times$) can match or even exceed the More High-Fidelity Data ($15\times$) baseline, albeit often with substantially large cross-seed variance (e.g., Hopper–gravity 1.2$\times$). This does not contradict the role of the More High-Fidelity Data ($15\times$) baseline as a strong reference point: in certain regimes, the dynamics mismatch (bias) can be benign, and abundant (100$\times$) low-fidelity interaction can provide learning signals that transfer unusually well to the high-fidelity environment. In such cases, aggressively exploiting the low-fidelity data can produce large gains, but this behavior is brittle---when low-fidelity data become harmful (cf. HalfCheetah), these approaches can degrade substantially. Moreover, the benefit of low-fidelity data is difficult to predict a priori (\eg, Low-Fidelity Only does not vary monotonically with the dynamics gap). In contrast, \ac{mfpg} provides a more consistent and reliable mechanism for leveraging low-fidelity data by anchoring updates to high-fidelity gradients while using low-fidelity data only as a variance reduction tool.
}

\begin{figure}
    \centering
    \includegraphics[width=1\linewidth]{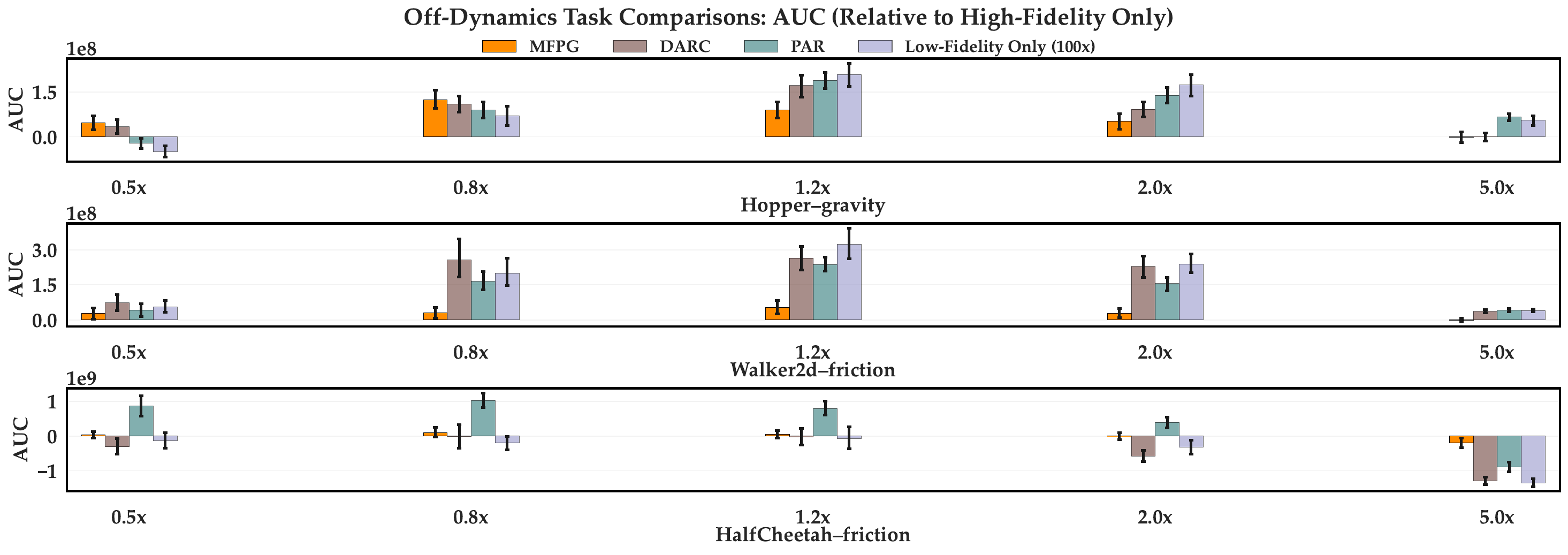}
    \caption{
\revised{
Mean difference in \ac{auc} of each method relative to the High-Fidelity Only baseline (0 corresponds to High-Fidelity Only) across the 15 main task settings in \cref{sec:odrl-eval}, computed analogously to \cref{fig:main-odrl-results-diff-final-return}. Bars show the bootstrap point estimate (mean), and error bars denote two-sided 95\% bootstrap confidence intervals.}
}
    \label{fig:complete_auc_diff_to_hf}
\end{figure}

\begin{figure}
    \centering
    \includegraphics[width=0.8\linewidth]{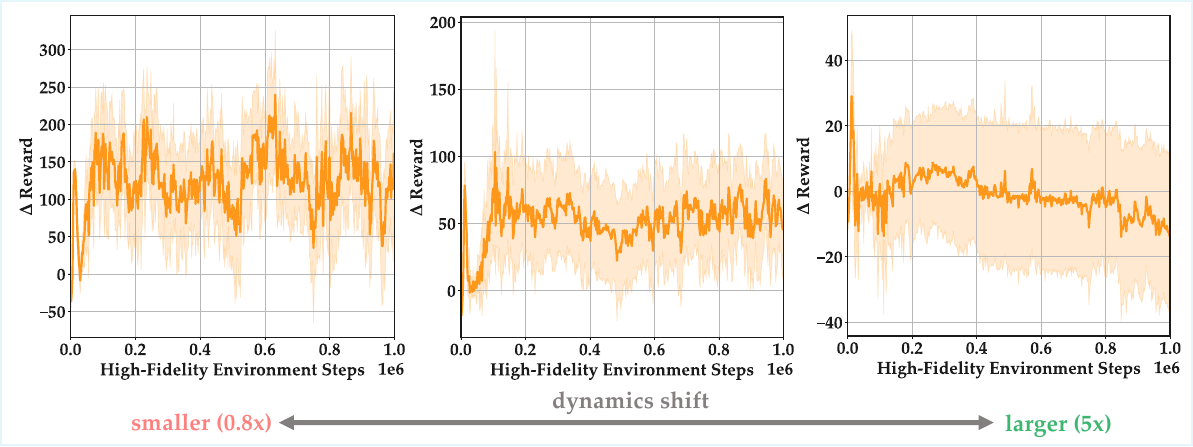}
    \caption{
\revised{
Mean difference in evaluation return of \ac{mfpg} relative to the High-Fidelity Only baseline (0 indicates no difference) for the settings in \cref{fig:hopper-qualitative}. Solid lines show the bootstrap mean estimate, and shaded regions denote two-sided 95\% bootstrap confidence intervals.}
}
    \label{fig:hopper-qualitative-diff-to-hf}
\end{figure}

\begin{figure}
    \centering
    \includegraphics[width=1\linewidth]{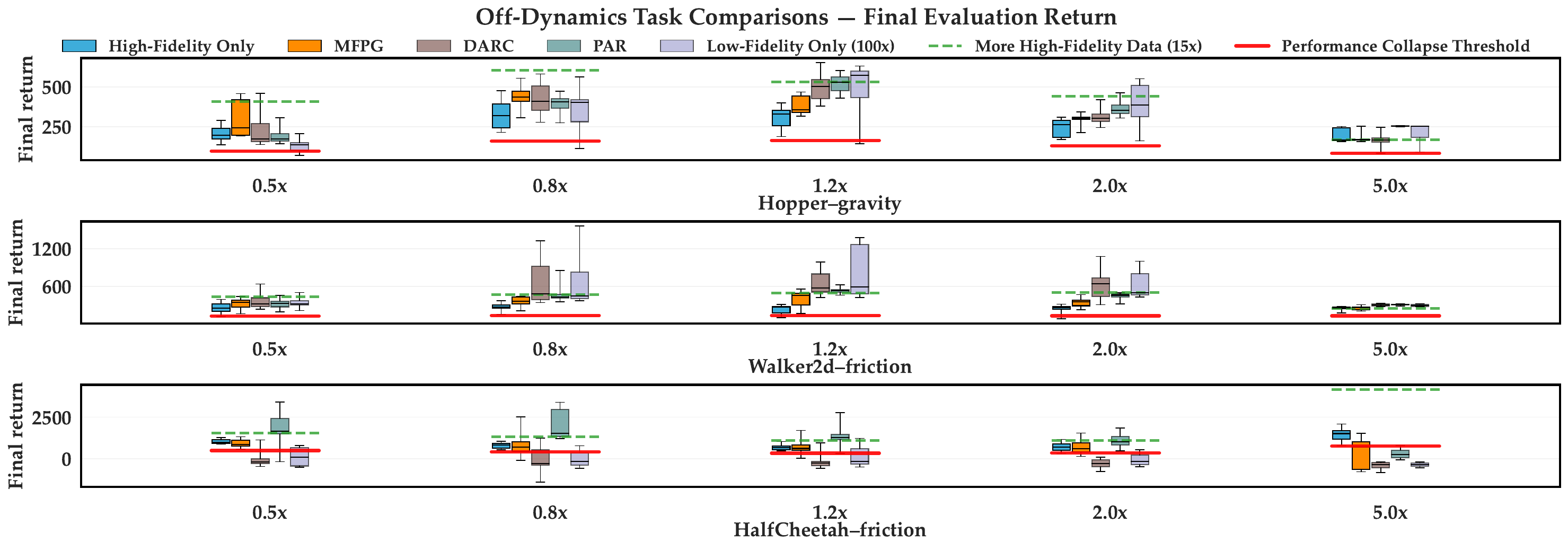}
    \caption{
\revised{
Box plots of the final evaluation return for each method across the 15 main task settings in \cref{sec:odrl-eval}. The center line indicates the median, the box spans the interquartile range (25th--75th percentiles), and whiskers extend to the 5th and 95th percentiles; outliers beyond the whiskers are not shown.
}
}
    \label{fig:odrl-box-results-final-return}
\end{figure}

\begin{figure}
    \centering
    \includegraphics[width=1\linewidth]{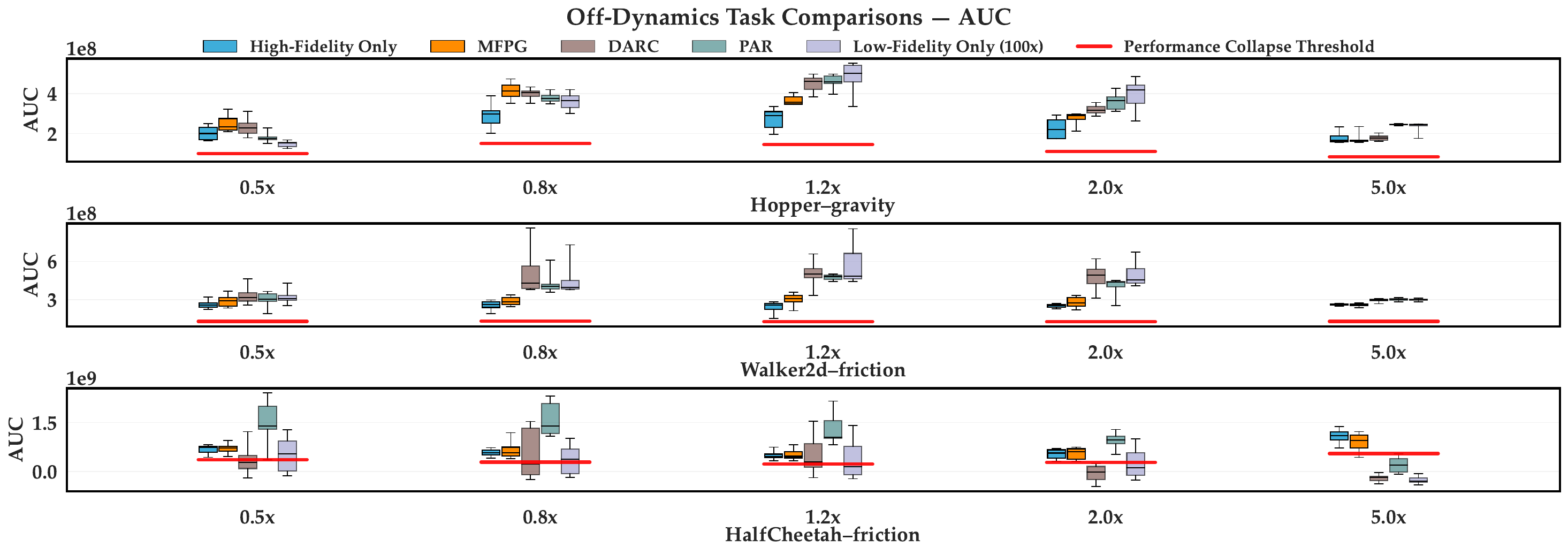}
    \caption{
\revised{
Box plots of the \ac{auc} for each method across the 15 main task settings in \cref{sec:odrl-eval}. The center line indicates the median, the box spans the interquartile range (25th--75th percentiles), and whiskers extend to the 5th and 95th percentiles; outliers beyond the whiskers are not shown.
}
}
    \label{fig:odrl-box-results-auc}
\end{figure}

\begin{figure}
    \centering
    \includegraphics[width=1\linewidth]{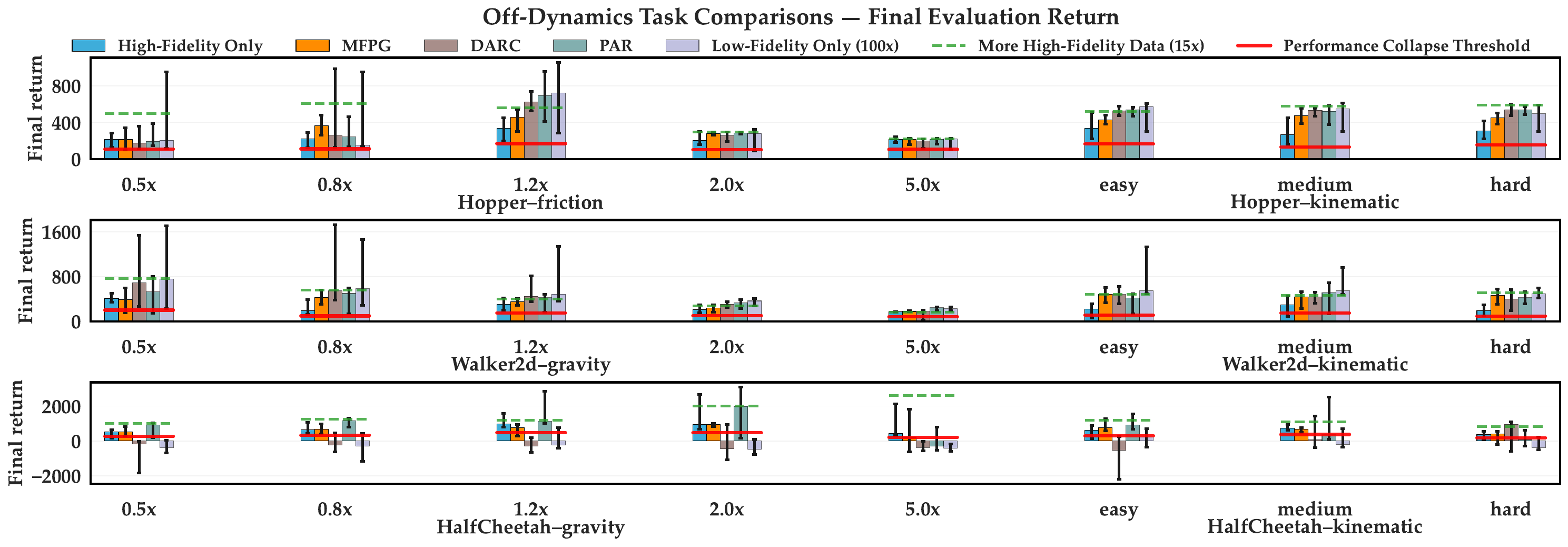}
    \caption{
\revised{Final evaluation return of each method across an additional 24 task settings. Bar heights indicate medians, and error bars denote the minimum and maximum values across seeds.}
}
    \label{fig:complete-odrl-results-final-return}
\end{figure}

\begin{figure}
    \centering
    \includegraphics[width=1\linewidth]{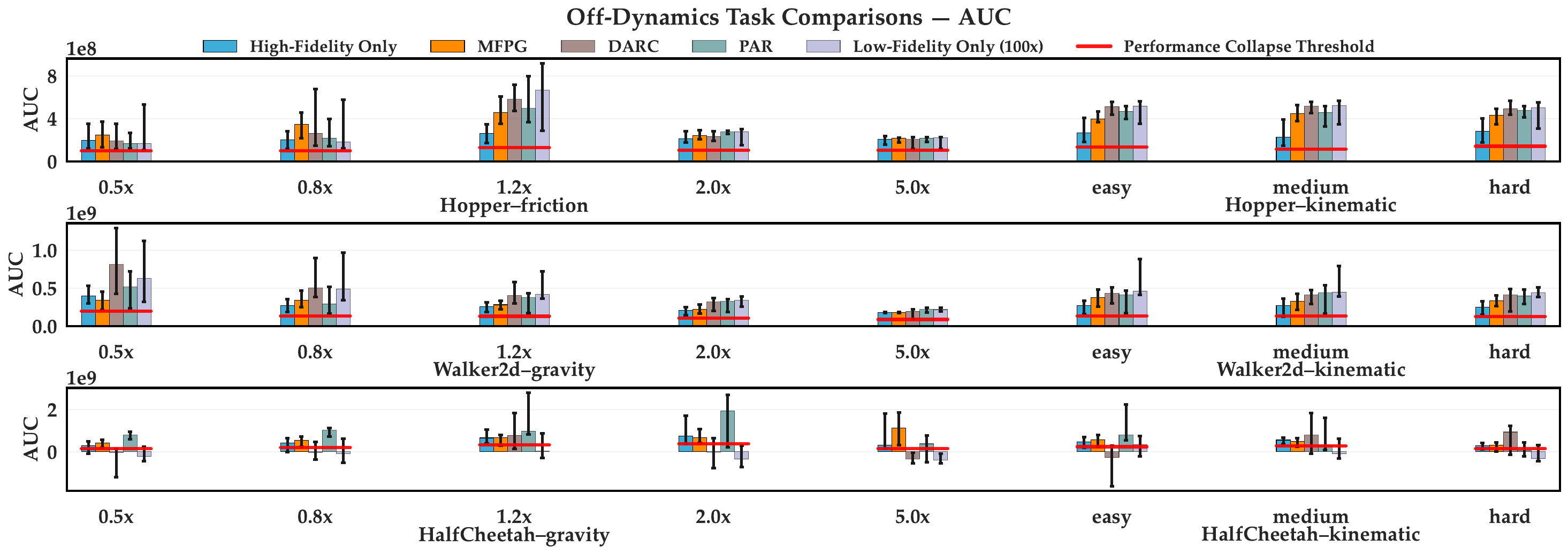}
    \caption{
\revised{\ac{auc} of each method across an additional 24 task settings. Bar heights indicate medians, and error bars denote the minimum and maximum values across seeds.}
}
    \label{fig:complete-odrl-results-auc}
\end{figure}

\newpage

\section{Implementation Details and Hyperparameter Setup}\label{sec:appendix-implementation-details}

This section provides implementation details and the hyperparameter setup for all evaluated approaches. 
Details of the off-dynamics task settings described in \cref{sec:experiment-setup} are available in the ODRL benchmark~\citep{lyu2024odrlabenchmark}. 
\cref{tab:hyperparams} summarizes the hyperparameter configurations for all evaluated methods. To ensure reproducibility, we include our code in the supplementary materials and will release the project as open source upon acceptance. 

We build our implementation on top of the \ac{rl} library Stable-Baselines3~\citep{stable-baselines3}. 
For experimentation, we use random seeds \{0, 1, 2\} for hyperparameter tuning and \revised{\{3, 4, $\dots$ , 22\}} for evaluation \revised{in \cref{sec:odrl-eval,sec:negative-reward,eq:auxiliary-exp} (\{3, 4, $\dots$ , 7\} for the additional 24 tasks in \cref{sec:appendix-complete-results})}.
The High-Fidelity Only baseline is tuned using 3 seeds \{0, 1, 2\}. The resulting configuration serves as the shared backbone for the other methods, following standard practice in the off-dynamics \ac{rl} literature~\citep{xu2023cross,lyu2024cross}.
All additional hyperparameters are tuned with 2 seeds \{0, 1\}, either over more task settings (for the multi-fidelity methods) or longer training steps (for More High-Fidelity Data (15×)).
All single-fidelity methods are tuned on the MuJoCo Hopper-v3, Walker2d-v3, and HalfCheetah-v3 tasks with original dynamics, chosen because they represent the midpoint among the varied dynamics settings.
Multi-fidelity methods are tuned on the same three tasks under a representative subset of dynamics variations, namely $0.8\times$ and $2.0\times$ gravity. We verified that the tuned hyperparameters generalize well to other variation types and levels (cf. \cref{sec:appendix-complete-results}). 
For the extra sensitive hyperparameters of PAR (reward-augmentation weight) and DARC (the standard deviation of Gaussian noise added to classifier inputs), we additionally tune them on the $0.8\times$ and $2.0\times$ friction variations of the three tasks.
We note that this protocol grants the baselines more tuning privilege than \ac{mfpg}, highlighting the simplicity and minimum tuning overhead of \ac{mfpg}.

\paragraph{High-Fidelity Only.} We implement the standard REINFORCE algorithm~\citep{williams1992simple} with state-value subtraction. 
Subject to the high-fidelity sample restrictions per policy update described in \cref{sec:experiment-setup}, we perform grid search over the learning rate $\{1\times10^{-4}, 2\times10^{-4}, 5\times10^{-4}, 7\times10^{-4}, 9\times10^{-4}, 1\times10^{-3}, 2\times10^{-3}\}$, discount factor $\gamma$ \{0.95, 0.97, 0.98, 0.99, 0.995\}, advantage normalization \{True, False\}, maximum gradient norm \{0.5, 1, 2\}, and state-value loss weight \{0.25, 0.5, 1\}. 
We adopt the default network architectures for the policy and state-value networks in Stable-Baselines3~\citep{stable-baselines3}. 
The tuned REINFORCE configuration is then adopted as the shared algorithmic backbone for all multi-fidelity methods. The full hyperparameter configuration, along with those for all other methods, is provided in \cref{tab:hyperparams}.

\paragraph{DARC.} DARC~\citep{eysenbachoff} trains two domain classifiers~$q_{\theta_{\text{SAS}}}(\text{high} \mid s_t, a_t, s_{t+1})$ and $q_{\theta_{\text{SA}}}(\text{high} \mid s_t, a_t)$ to predict how likely an observed transition~$(s_t, a_t, s_{t+1})$ or state-action pair~$(s_t, a_t)$ comes from the high-fidelity environment. 
The classifiers are trained by minimizing cross-entropy losses over the replay buffers $\gD_\text{high}$ and $\gD_\text{low}$ of high- and low-fidelity samples, respectively:
\begin{align}
   \mathcal{L}_{\text{SAS}}(\theta_\text{SAS}) & = - \E_{\gD_\text{low}} \left[\log q_{\theta_\text{SA}}(\text{low} \mid s_t, a_t, s_{t+1}) \right] -\E_{\gD_\text{high}} \left[ \log q_{\theta_\text{SAS}}(\text{high} \mid s_t, a_t, s_{t+1}) \right] \nonumber \\
   \mathcal{L}_{\text{SA}}(\theta_\text{SA}) & = - \E_{\gD_\text{low}} \left[\log q_{\theta_\text{SA}}(\text{low} \mid s_t, a_t) \right] - \E_{\gD_\text{high}} \left[ \log q_{\theta_\text{SA}}(\text{high} \mid s_t, a_t) \right].\label{eq:darc-loss}
\end{align}
The logits of the trained classifiers are then combined to approximate the dynamics gap,~$\log\frac{p(s_{t+1} \given s_t, a_t, \text{high})}{p(s_{t+1} \given s_t, a_t, \text{low})}$,
for each observed (low-fidelity) transition~$(s_t, a_t, s_{t+1})$:
\small
\begin{align}
     \Delta r(s_t, a_t, s_{t+1}) &= {\color{black} {\log p(\text{high} \mid s_t, a_t, s_{t+1})}} - {\color{black}\  {\log p(\text{high} \mid s_t, a_t)}}
     - {\color{black} {\log p(\text{low} \mid s_t, a_t, s_{t+1})}}  + {\color{black}\  {\log p(\text{low} \mid s_t, a_t)}}. \label{eq:delta-r}
\end{align}
\normalsize
This estimated dynamics gap is used to augment low-fidelity rewards~$r^l_{\text{augment},t}(s^l_t, a^l_t, s_{t+1}^l) = r^l_{t}(s^l_t, a^l_t, s_{t+1}^l) + \Delta r(s^l_t, a^l_t, s^l_{t+1})$. 
The augmented rewards penalize the agent for exploiting low-fidelity transitions that are implausible in the high-fidelity environment.

We adopt the standard implementation from the ODRL benchmark (\href{https://github.com/OffDynamicsRL/off-dynamics-rl}{https://github.com/OffDynamicsRL/off-dynamics-rl}), cross-validated against the authors’ original code (\href{https://github.com/google-research/google-research/tree/master/darc}{https://github.com/google-research/google-research/tree/master/darc}). 
To adapt DARC to the on-policy REINFORCE algorithm, we retain the original classifier-learning and reward-augmentation mechanisms, substituting only the policy optimization backbone. 
At each update, classifiers are trained using off-policy samples from the high- and low-fidelity replay buffers, as in the original implementation, by minimizing the cross-entropy losses in \cref{eq:darc-loss}. The learned classifiers are then used to augment the rewards, cf., \cref{eq:delta-r}, for low-fidelity, on-policy samples, which are subsequently employed to compute the REINFORCE updates and learn the state-value function.

Building on the shared REINFORCE backbone used by the High-Fidelity Only baseline, we further tune several DARC-specific parameters to which the method is more sensitive: the standard deviation of Gaussian noise added to classifier inputs as a regularizer (an effective stabilization trick noted by the authors~\citep{eysenbachoff}) \{0.5, 1, 2\}, and the number of warm-start, high-fidelity environment steps~\{2000, 10000, 50000\} prior to classifier learning and reward augmentation.
For other hyperparameters—such as classifier architecture, learning rate, and batch size—we adopt the default values from the original paper~\citep{eysenbachoff}.
The resulting classifier training curves are stable.

\paragraph{PAR.} Similar in spirit to DARC, PAR~\citep{lyu2024cross} also estimates the dynamics gap of low-fidelity transitions for reward augmentation. 
However, instead of domain classification, PAR adopts a representation-learning approach: it trains a state encoder~$z_{1,t} = f_\zeta(s_t)$ and a state–action encoder~$z_{2,t} = g_{\nu}(z_{1,t},a_t)$ by minimizing a latent dynamics consistency loss over the \emph{high-fidelity} replay buffer:
\begin{equation}
\label{eq:representationobjective}
    \mathcal{L}(\zeta, \nu) = \mathbb{E}_{(s^h_t,a^h_t,s^h_{t+1})\sim \gD_\text{high}}\left[\|g_{\nu}(f_\zeta(s_t^h),a_t^h) - \texttt{NoGradient}(f_\zeta(s^h_{t+1}))\|^2 \right],
\end{equation}
which essentially learns a dynamics model for the high-fidelity environment in a latent space. 
The trained encoders are then used to estimate dynamics mismatch for low-fidelity transitions and augment the rewards:
\begin{equation}
r^l_{\text{augment},t}(s^l_t, a^l_t, s_{t+1}^l) = r^l_{t}(s^l_t, a^l_t, s_{t+1}^l) - \beta \cdot \|g_\nu(f_\zeta(s^l_{t}),a^l_{t}) - f_\zeta(s^l_{t+1}) \|^2, \label{eq:par-reward}
\end{equation}
where $\beta$ is a weighting hyperparameter. Furthermore, unlike DARC, which trains its policy and critics only on low-fidelity samples with augmented rewards, PAR trains its policy and critics on both high- and low-fidelity samples, with reward augmentation applied to the latter.
\begin{wrapfigure}{r}{0.45\textwidth}
  \centering
  \begin{subfigure}{\linewidth}
    \centering
    \includegraphics[width=\linewidth]{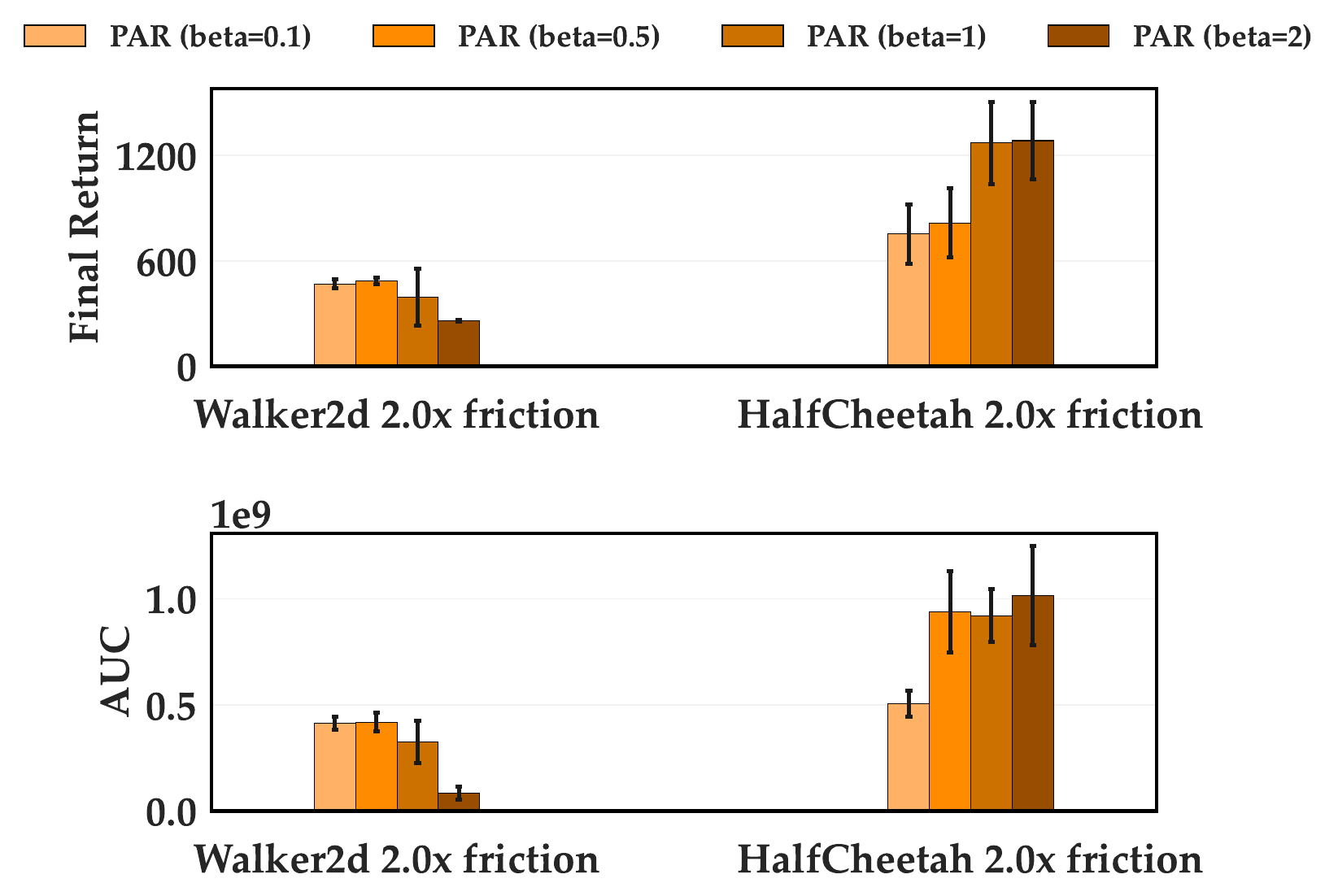}
  \end{subfigure}
  \caption{Sweep of the reward-augmentation weight $\beta$ for PAR on two tasks. The relative performance across $\beta$ values varies substantially between tasks. \revised{Bar heights indicate medians, and error bars denote the minimum and maximum values.}
}
  \label{fig:par-beta-sensitivity}
\end{wrapfigure}
We adopt the standard implementation from the ODRL benchmark (\href{https://github.com/OffDynamicsRL/off-dynamics-rl}{https://github.com/OffDynamicsRL/off-dynamics-rl}), cross-validated against the authors’ original code (\href{https://github.com/dmksjfl/PAR}{https://github.com/dmksjfl/PAR}). 
Analogous to our adaptation of DARC to the on-policy REINFORCE algorithm, we adapt PAR by retaining its original representation-learning and reward-augmentation mechanisms while substituting only the policy optimization backbone.
At each update, we draw off-policy, high-fidelity samples from the replay buffer to train the encoders by minimizing \cref{eq:representationobjective}. 
The trained encoders are then applied to augment rewards for on-policy low-fidelity samples, as in \cref{eq:par-reward}. 
Unlike DARC, which relies solely on augmented low-fidelity samples for policy and value function learning, PAR leverages both on-policy high-fidelity and augmented low-fidelity samples for REINFORCE policy updates and state-value function learning, consistent with the original PAR algorithm.

On top of the shared REINFORCE backbone, we further tune the reward-augmentation weight $\beta$ in \cref{eq:par-reward} over the grid \{0.1, 0.5, 1, 2\}, as PAR is particularly sensitive to this parameter; cf. \cref{fig:par-beta-sensitivity}. The original PAR~\citep{lyu2024cross} employs a \emph{per-task} weight for this reason, and we follow the same convention, reporting the selected $\beta$ for each task in \cref{tab:hyperparams}. We note, however, that this requirement introduces a computationally expensive adaptation overhead, unlike other algorithms, which do not require per-task tuning. For other hyperparameters—such as encoder architecture, learning rate, and batch size—we adopt the defaults from the original paper~\citep{lyu2024cross}. The resulting encoder training curves are stable.

\paragraph{\ac{mfpg}.} Building on the shared REINFORCE backbone, we perform a grid search over the exponential moving-average weight $\eta_{\text{ma}}$ in \cref{eq:ema-update} using values  \{0.92, 0.95, 0.99\}. 
\revised{
We find that \ac{mfpg}’s median performance consistently exceeds that of the High-Fidelity Only baseline across the tested values of~$\eta_{\text{ma}}$, even though the absolute performance metrics vary with $\eta_{\text{ma}}$
}. 
In off-dynamics task settings, we drop the control variate term~$c (\RV^{\policy_{\policyparams}}_{\traj^{l}} - \hat{\mu}^{l})$ whenever the estimated correlation coefficient for the current batch, \ie, $\hat{\rho}^{\text{batch}}_k(\RV_{\traj^{h}}^{\policy_{\policyparams_k}}, \RV_{\traj^{l}}^{\policy_{\policyparams_k}})$ in \cref{eq:ema-update}, is negative. 
Since high- and low-fidelity samples should in principle be positively correlated in off-dynamics settings, a negative correlation suggests either low-quality samples or noise in correlation estimation.
In practice, such cases occur only rarely. 
This engineering trick helps mitigate noise in estimating the control variate coefficient, which can arise from extreme low-fidelity samples.
Developing more principled estimation schemes remains an orthogonal direction for future work.

\paragraph{Low-Fidelity Only (100×).} The Low-Fidelity Only baseline serves as an ablation of high-fidelity samples from the multi-fidelity approaches. 
Accordingly, we adopt the same hyperparameter configuration as the shared REINFORCE backbone, except that each policy and state-value function update is performed using abundant low-fidelity data in place of limited high-fidelity samples. 

\paragraph{More High-Fidelity Data (15×).} To ensure fair benefit from the additional high-fidelity samples, we tune the sensitive hyperparameters for the oracle baseline More High-Fidelity Data (15×) separately from the shared REINFORCE backbone. 
Specifically, we tune the learning rate over the grid $\{5\times10^{-4}, 7\times10^{-4}, 9\times10^{-4}, 1\times10^{-3}, 2\times10^{-3}, 4\times10^{-3}\}$ and the batch size (i.e., samples per policy update) over the grid \{1024, 2048, 3072, 4096, 6144\}.
Because of the expensive, longer training time for this baseline, we first run all configurations for 3 million steps, then select the best 7 configurations and extend them to 15 million steps to determine the final configuration. 
Other hyperparameters are kept consistent with the shared REINFORCE backbone, as they are relatively insensitive.

\paragraph{Low-fidelity data amount.} In baseline comparisons, we assume that low-fidelity samples are cheap to generate, and all multi-fidelity methods may supplement limited high-fidelity data with up to $100\times$ additional low-fidelity samples per policy update. 
While PAR has been empirically observed to benefit from more low-fidelity data in some task settings~\citep{lyu2024cross}, we also found it to be sensitive to the amount of low-fidelity data in certain cases.
To maximize baseline performance, we additionally tune the ratio of low- to high-fidelity samples per policy update over the grid $\{10\times, 20\times, 50\times, 100\times\}$ for PAR and DARC. 
For DARC, performance is generally maximized with $100\times$ low-fidelity samples. 
For PAR, performance is maximized across tasks at $20\times$. 
The Low-Fidelity Only baseline is assigned $100\times$ low-fidelity samples to match the budget. 
We implement parallelized low-fidelity environments, each generating $10\times$ low-fidelity samples per policy update; hence, DARC and Low-Fidelity Only employ 10 environments, while PAR uses 2 environments.
For \ac{mfpg}, we did not tune the amount of low-fidelity samples, since \ac{mfpg} is in theory stable with respect to low-fidelity sample amounts, so long as the low-fidelity sample amount is sufficient to estimate the low-fidelity sample mean $\hat{\mu}^l$. 
Our empirical results in \cref{fig:ablation-low-fid-data} support this point.  
Therefore, instead of using 10 parallel low-fidelity environments (which would exceed the $100\times$ budget, given that \ac{mfpg} also requires $1\times$ correlated low-fidelity samples), we adopt 9 parallel environments in all main results, corresponding to $90\times$ low-fidelity samples (plus $1\times$ correlated low-fidelity samples). 
In some auxiliary variance and sensitivity analysis in \cref{sec:var-reduction,eq:auxiliary-exp}, we use $100\times$ (uncorrelated) low-fidelity samples; we specify the low-fidelity sample amount separately in those sections. 
Finally, we emphasize that DARC and PAR receive more tuning budget than \ac{mfpg}, underscoring the simplicity and low adaptation overhead of \ac{mfpg}.

\paragraph{Complete hyperparameter configuration.} \cref{tab:hyperparams} reports the full hyperparameter configuration for all methods in the baseline comparison results.
As noted above, the High-Fidelity Only configuration serves as the shared backbone for other methods, with algorithm-specific hyperparameters listed on top of this backbone.
\textcolor{brown}{Brown} entries indicate hyperparameters that replace the corresponding values in the backbone.

\begin{table}
\centering
\caption{Hyperparameter configurations for evaluated methods.}
\label{tab:hyperparams}
\begin{tabular}{ll}
\toprule
\textbf{High-Fidelity Only (REINFORCE backbone)} & \\
\midrule
Optimizer & Adam~\citep{kingma2014adam} \\
Learning rate & $7\times10^{-4}$ \\
High-fidelity batch size & 100 \\
Discount factor $\gamma$ & 0.97 \\
State-value loss weight $\rm{vf\_coef}$ & 1.0 \\
Maximum gradient norm & 1.0 \\
Policy network & (64, 64) \\
State value network & (64, 64) \\
Nonlinearity & Tanh \\
\midrule
\textbf{DARC} & \\
\midrule
Classifier network & (256, 256) \\
Classifier nonlinearity & ReLU \\
Classifier Gaussian input noise std. $\sigma$ & 0.5 \\
Classifier optimizer & Adam \\
Classifier learning rate & $3\times10^{-4}$ \\
Classifier batch size & 128 \\
Replay buffer size (classifier) & $1\times10^6$ \\
Low-fidelity data amount / policy update & $100\times$ (relative to high-fidelity data) \\
Warm-up high-fidelity steps & 2000 \\
\midrule
\textbf{PAR} & \\
\midrule
Encoder network & (256, 256) \\
Encoder nonlinearity & ELU \\
Representation dimension & 256 \\
Encoder optimizer & Adam \\
Encoder learning rate & $3\times10^{-4}$ \\
Encoder batch size & 128 \\
Replay buffer size (encoder) & $1\times10^6$ \\
Polyak averaging coefficient for encoder updates & 0.995 (for the target encoder network) \\
Low-fidelity data amount / policy update & $20\times$ \\
Reward augmentation weight $\beta$ & 0.5 (Hopper), 0.1 (Walker2d), 1.0 (HalfCheetah) \\
\midrule
\textbf{\ac{mfpg}} & \\
\midrule
Low-fidelity data amount / policy update & $90\times$ uncorrelated + $1\times$ correlated \\
Exponential moving-average weight $\eta_{\text{ma}}$ & 0.95 \\
\midrule
\textbf{Low-Fidelity Only (100$\times$)} & \\
\midrule
Low-fidelity data amount / policy update & $100\times$ (no high-fidelity data) \\
\midrule
\textbf{More High-Fidelity Data (15$\times$)} & \\
\midrule
Learning rate & \textcolor{brown}{$9\times10^{-4}$} \\
High-fidelity batch size & \textcolor{brown}{3072} \\
\bottomrule
\end{tabular}
\end{table}

\section{\revised{Extending \ac{mfpg} to Broader Algorithms and Settings}}
\label{sec:appendix-future-work}

\revised{
For the first future-work direction discussed in \cref{sec:conclusion}, we highlight two subdirections. First, a natural next step is to extend the proposed multi-fidelity control variate approach to a broader class of modern \ac{rl} algorithms.
}

\revised{
A concrete starting point in this direction is modern on-policy actor-critic methods, such as proximal policy optimization (PPO; \citealp{schulman2017proximal}). As discussed in \cref{sec:approach}, the \ac{mfpg} framework can, in principle, be instantiated with other on-policy policy gradient methods by changing how the random variable \(\RV_{\traj}^{\policy_{\policyparams}}\) is computed from sampled trajectories. In this work, we focus on the classic REINFORCE algorithm to simplify the theoretical analysis and establish convergence guarantees. Modern actor-critic methods, however, deliberately trade bias for lower variance in their policy gradient estimates, \eg via generalized advantage estimation~\citep{schulman2015high}. As a result, the high-fidelity policy gradient estimator is typically biased (violating \cref{assumption: high-fidelity policy gradient estimate} in \cref{sec:theory}), and extending our convergence analysis to this setting is significantly more challenging. Nevertheless, studying the bias--variance trade-off in this regime may still allow \ac{mfpg} to yield meaningful variance reduction and performance gain for contemporary on-policy actor-critic methods, and thus merits deeper investigation. Moreover, within the on-policy setting, the proposed \ac{mfpg} framework could, in principle, be extended in a model-based fashion, where the low-fidelity control-variate random variable \(\RV_{\traj^{l}}^{\policy_{\policyparams}}\) and its sample mean \(\hat{\mu}^{l}\) are computed from a learned dynamics model.
}

\revised{
Another important direction is to extend \ac{mfpg} to off-policy settings. On the one hand, the \ac{mfpg} framework might be adapted to importance-sampling-based off-policy policy-gradient estimators, \eg by replacing the high- and low-fidelity random variables with off-policy, importance-weighted policy-gradient estimators. In such settings, variance reduction is arguably even more critical than in on-policy policy-gradient methods~\citep{metelli2020importance}, making multi-fidelity control variates particularly appealing. 
On the other hand, if one wishes to extend an \ac{mfpg}-type approach to modern off-policy algorithms such as \ac{sac}~\citep{haarnoja2018soft}, developing multi-fidelity control variate techniques for off-policy \ac{td} learning would be an important step.
This extension is less direct. In contrast to on-policy policy-gradient methods, where high-variance Monte Carlo returns motivate sophisticated variance-reduction techniques, most modern off-policy algorithms, such as \ac{sac}, rely on one-step \ac{td} learning and experience replay, which tend to produce relatively low-variance value targets. Their main challenges are primarily bias-related, \eg value overestimation~\citep{fujimoto2018addressing}, distribution shift~\citep{kumar2019stabilizing}, and bias accumulation over long horizons~\citep{park2025horizon}. Even so, multi-fidelity control variates may still help reduce noise in bootstrapped targets and the variance of gradient estimates, \eg in the policy-improvement (policy extraction) steps. 
In particular, pathwise gradient estimators used in \ac{td}-learning–based approaches can still suffer from high variance, depending on the conditioning and smoothness of the learned value function~\citep{mohamed2020monte}.
}

\revised{
A separate line of future work is to extend the multi-fidelity control variate technique to incorporate offline datasets. One scenario is that a large volume of low-fidelity data is available as a static offline dataset, while a smaller amount of high-fidelity data can be gathered online. In principle, one could develop a mechanism similar to the \ac{mfpg} framework in this work for incorporating such offline (and therefore off-policy) low-fidelity data in a way that aims not to introduce additional bias into the policy gradients estimated from online high-fidelity samples. Concretely, the low-fidelity sample mean~\(\hat{\mu}^{l}\) would be estimated from an offline dataset. If, in addition, the offline dataset logs (or allows reconstruction of) the noise samples used for action selection, then one could draw correlated high-fidelity trajectories in the high-fidelity environment via the reparameterization scheme introduced in \cref{sec:approach} and use the offline low-fidelity trajectories to construct control variates. Another scenario is the standard offline \ac{rl} setting, in which the high-fidelity samples themselves come from an offline dataset that may be augmented with online low-fidelity simulation data. Extending \ac{mfpg} to this setting is less direct. The challenges are at least two-fold. First, the specific \ac{mfpg} framework studied in this work requires the high-fidelity random variable \(\RV_{\traj^{h}}^{\policy_{\policyparams}}\) to be computed from on-policy samples, whereas offline data are inherently off-policy. One might use importance sampling to reweight the offline data and then use online low-fidelity simulation data to reduce the variance of the resulting importance-weighted high-fidelity gradient estimates. Second, extending multi-fidelity control variates to commonly used, \ac{td}-learning-based offline \ac{rl} algorithms~\citep{kumar2020conservative,kostrikov2021offline} faces challenges similar to those discussed above for off-policy \ac{td} methods: the dominant issues are bias-related, \eg inaccurate value estimates for out-of-distribution actions and severe distribution shift between the behavior and target policies~\citep{levine2020offline}.
}

\end{document}